\documentclass[11pt]{article}

\usepackage[final]{acl}

\usepackage{times}
\usepackage{latexsym}
\usepackage{comment}
\usepackage{tabularx}
\usepackage{setspace}
\usepackage{titletoc} 
\usepackage{balance}

\usepackage[T1]{fontenc}
\usepackage[utf8]{inputenc}

\usepackage{microtype}

\usepackage{inconsolata}

\usepackage[dvipsnames]{xcolor}
\usepackage{graphicx}
\usepackage{subcaption}
\usepackage{multirow}
\usepackage{booktabs}
\usepackage{colortbl}
\usepackage[ruled,vlined,linesnumbered]{algorithm2e}
\usepackage{placeins}
\usepackage{xspace}
\usepackage{amsmath}
\usepackage{amssymb}
\usepackage{mathtools}
\usepackage{amsthm}
\usepackage{cleveref}
\usepackage{contour}
\usepackage{enumitem}
\usepackage{adjustbox}
\usepackage[most]{tcolorbox}
\usepackage{circledsteps}
\usepackage[normalem]{ulem}

\newcommand{\ulc}[2]{%
  {\color{#2}\uline{\phantom{#1}}}%
  \llap{\contour{white}{#1}}%
}

\definecolor{turchese}{HTML}{1f77b4}
\definecolor{foresta}{HTML}{2ca02c} 
\definecolor{g1color}{rgb}{0.68, 0.87, 1}
\definecolor{g2color}{RGB}{232, 156, 111}
\definecolor{g3color}{RGB}{213, 153, 217}
\newcommand{\Guno}{\Circled[outer color=g1color,fill color=g1color]{\textbf{G1}}\xspace}
\newcommand{\Gdue}{\Circled[outer color=g2color,fill color=g2color]{\textbf{G2}}\xspace}
\newcommand{\Gtre}{\Circled[outer color=g3color,fill color=g3color]{\textbf{G3}}\xspace}

\definecolor{deltapos}{RGB}{210,240,210} % verde pastello
\definecolor{deltaneg}{RGB}{245,210,210}

\newcommand{\posdelta}[1]{\cellcolor{deltapos}{#1}}
\newcommand{\negdelta}[1]{\cellcolor{deltaneg}{#1}}

\newtcbtheorem[auto counter, number within=section]{samplebox}{Calibration Samples}%
{
    colback=orange!5!white,
    colframe=orange!75!black,
    enhanced,
    boxrule=0.5pt,
    top=2mm, bottom=2mm,
    fonttitle=\bfseries
}{box}
\crefname{samplebox}{box}{boxes}
\Crefname{samplebox}{Box}{Boxes}

\theoremstyle{plain}
\newtheorem{theorem}{Theorem}[section]

\newtheorem{lemma}[theorem]{Lemma}

\theoremstyle{definition}

\theoremstyle{remark}

\newcommand{\Let}[2]{#1 $\gets$ #2}

\renewcommand{\epsilon}{\varepsilon}

\newcommand{\methodname}{{\texttt{\textbf{ZipCal}}}\xspace}

\title{Frequency Matters: Fast Model-Agnostic Data Curation for Pruning and Quantization}

\author{
Francesco Pio Monaco \quad
Elia Cunegatti \quad
Flavio Vella \quad
Giovanni Iacca \\
University of Trento \\
\texttt{\{francescopio.monaco, elia.cunegatti, flavio.vella, giovanni.iacca\}@unitn.it}
}
\begin{document}
\maketitle
\begin{abstract}
Post-training model compression is essential for enhancing the portability of Large Language Models (LLMs) while preserving their performance. While several compression approaches have been proposed, less emphasis has been placed on selecting the most suitable set of data (the so-called \emph{calibration data}) for finding the compressed model configuration. The choice of calibration data is a critical step in preserving model capabilities both intra- and inter-tasks. In this work, we address the challenge of identifying high-performance calibration sets for both pruning and quantization by analyzing intrinsic data properties rather than model-specific signals. We introduce \methodname, a model-agnostic data curation strategy that maximizes lexical diversity based on Zipfian power laws. Experiments demonstrate that our method outperforms standard uniform random sampling across various pruning benchmarks. Notably, it also performs on par, in terms of downstream performance, with a state-of-the-art method that relies on model perplexity. The latter becomes prohibitively expensive for large-scale models and datasets, while \methodname is on average $\sim$240$\times$ faster due to its tractable linear complexity\footnote{We make the code and the experiments available at 
\url{https://github.com/FrancescoMonaco/ZipCal}.}.
%\url{https://anonymous.4open.science/r/zipcal-71CD}.}.
\end{abstract}

\section{Introduction}

In recent years, considerable effort has been put into alleviating the significant computational requirements of Large Language Models (LLMs) \cite{xia2024sheared,muralidharan2024compact,lee2025littlebit}. To facilitate the deployment of such models, researchers have developed various post-training compression techniques that reduce memory and compute overhead while striving to preserve the original model's capabilities.
In this work, we mainly focus on two such compression techniques, which are pruning and quantization. The former compresses the model by removing part of its parameters, with earlier approaches requiring expensive retraining \cite{han2015learning,frankle2018the}, and more modern approaches simply removing weights in a gradient-free manner \cite{frantar_sparsegpt_2023,sun_simple_2024}, or enforcing hardware-friendly sparsity patterns \cite{sun2026learning} without the need for parameter updates \cite{zhang2024dynamic,cunegatti2025zerothorder}. Quantization, on the other hand, compresses the model by reducing the numerical precision of weights and activations \cite{frantar_gptq_2023,lin_awq_2024,huang2025slim}. The latest approaches are based on ternary quantized models \cite{wang_bitnet_2023,ma_era_2024} or efficient binary MatMul techniques \cite{dehghankar_efficient_2025}. 

Beyond the choice of the compression approach, recent works \cite{bandari_is_2024,williams_impact_2024,oh_beyond_2025} explored how the selection of the data used for gathering model statistics during the compression process, called \textit{calibration data}, can influence the process and, as a result, the final compressed model capabilities. 
While the majority of compression techniques \cite{frantar_sparsegpt_2023,lin_awq_2024} rely on general-purpose datasets, such as C4 \cite{raffel_exploring_2020} or Pile \cite{gao_pile_2020}, it has been investigated that domain-specific calibration has significant impact on the downstream performances of the model  \cite{bandari_is_2024,zhang_pruning_2024,williams_compressing_2026}. However, a compressed model can be deployed across multiple, and at times unforeseen, downstream tasks, raising the question of whether a calibration strategy can be optimal on average across a spread of domains rather than tailored to any single one. A question that, to our knowledge, remains unaddressed. We investigate this via a Multi-Domain extension of our method that aggregates candidate calibration data across domains without requiring prior knowledge of the specific deployment task. We further explore an orthogonal and underexplored axis of calibration data selection: language. While prior work considers task or data alignment between calibration and deployment data, whether alignment in language between calibration and downstream data is similarly necessary has not been studied.

To analyze the impact of selected data for compression, we leverage several studies on statistical linguistics that show that human languages show a structure in the frequency distribution of words \cite{zipf_frontmatter_2013,piantadosi_zipfs_2014}.
Specifically, we investigate whether this statistical observation can be exploited for data curation of calibration sets for model compression. We hypothesize that a set of data samples maximizing lexical diversity, capturing the sparse \textit{tail} of the Zipfian distribution, can provide much richer and more representative information for compression algorithms. By focusing on these intrinsic linguistic properties, we propose a sampling strategy that identifies high-utility, \emph{model-agnostic} data with negligible computational overhead. Our results show that this linguistically informed approach performs on par with more expensive, model-dependent curation methods, offering a scalable and robust solution for billion-parameter model compression that generalizes across diverse downstream tasks.

\paragraph{Data Curation Goals}
An ideal technique for data curation must be \emph{scalable} \Guno, i.e., capable of processing massive corpora with minimal computational overhead; \emph{model-agnostic} \Gdue, i.e., capable of identifying the most informative examples from a corpus without relying on expensive model passes; and address \emph{{inter-domain generalization}} \Gtre, i.e., being capable of synthesizing both Single-Domain and Multi-Domain corpora settings by design.
Existing data curation techniques only fulfill a subset of these goals, and, to our best knowledge, none have yet explored \Gtre.

\paragraph{Core Contributions}
We propose a sampling strategy rooted in Zipfian statistics that identifies high-utility calibration samples by maximizing lexical diversity. This approach sidesteps the need for model-dependent metrics (e.g., perplexity or gradient information), achieving goals \Guno and \Gdue. Moreover, we provide a framework for extracting representative samples from Multi-Domain datasets, ensuring the calibration set is balanced, meeting goal \Gtre. We empirically evaluate our proposed method, called \methodname, against SoTA techniques for data curation for compression algorithms. We conduct a comprehensive analysis against baselines based on alternative data properties, showing that capturing lexical diversity is a stable and scalable proxy for data curation.
%%%
%%% RELATED WORK %%%
%%%
\section{Related Work}
\paragraph{Model Compression}
Pruning methods aim to remove parameters to reduce the size of the models. 
The first approaches to pruning involved estimating the contribution of each neuron to the final loss using the magnitude of the weights and gradient information \cite{molchanov_importance_2019,ma2023llm}.
More recently, different gradient-free no-retraining pruning modalities have been explored. Unstructured approaches zero-out individual weights \cite{frantar_sparsegpt_2023, sun_simple_2024,yang_wanda_2025}, while semi-structured methods enforce specific sparsity patterns to ensure hardware compatibility \cite{zhou2021learning}. Structured approaches, instead, remove entire architectural components, such as attention heads or layer rows/columns \cite{ashkboos2024slicegpt, sandri_2ssp_2025,guo_slimllm_2025}.

On the other hand, quantization methods reduce the numerical precision of weights and/or activations to reduce the memory footprint and accelerate inference. These methods range from Round-To-Nearest (RTN) \cite{nagel_up_2020} to more sophisticated error-minimization strategies \cite{nahshan_loss_2021,chen_efficientqat_2025}. Other works focus on reducing degradation in extreme quantization scenarios \cite{dettmers_llmint8_nodate}, employ optimized kernels like LUT-GEMM \cite{park_lut-gemm_2024}, or activation-aware scaling \cite{lin_awq_2024}.
\paragraph{Calibration Data}
Almost every compression algorithm relies on a small, representative \textit{calibration set} to estimate the information flow through the network. These statistics guide the compression process, determining quantization thresholds \cite{lin_awq_2024} or pruning scores \cite{sun_simple_2024}.
Recent studies have shown that the choice of the calibration source significantly impacts the performance of the compressed model \cite{williams_impact_2024}, revealing that general-purpose corpora like C4 \cite{raffel_exploring_2020} are not the optimal choice for downstream tasks \cite{bandari_is_2024}. This suggests that calibration data should mirror the target domain to prevent activation distribution shifts, which can lead to suboptimal quantization thresholds or pruning masks.

Researchers have proposed more sophisticated calibration data curation strategies. Marion et al. \cite{marion_when_2023} demonstrate that perplexity serves as a robust metric to rank and select the most impactful samples for pruning, essentially using the model's own likelihood as a proxy for quality. Extending this logic, COLA \cite{he_preserving_2025} introduces a hybrid approach that selects samples by balancing the magnitude of model activations with intrinsic data statistics. While effective, these methods are inherently model-dependent and computationally intensive. Moreover, these techniques are typically evaluated on single-source datasets and do not address the challenge of heterogeneous composability \Gtre, while it would be desirable to have stable, cross-task performance post-compression.
\section{Zipf Sampling}

\begin{figure*}
\centering
    \includegraphics[width=\linewidth]{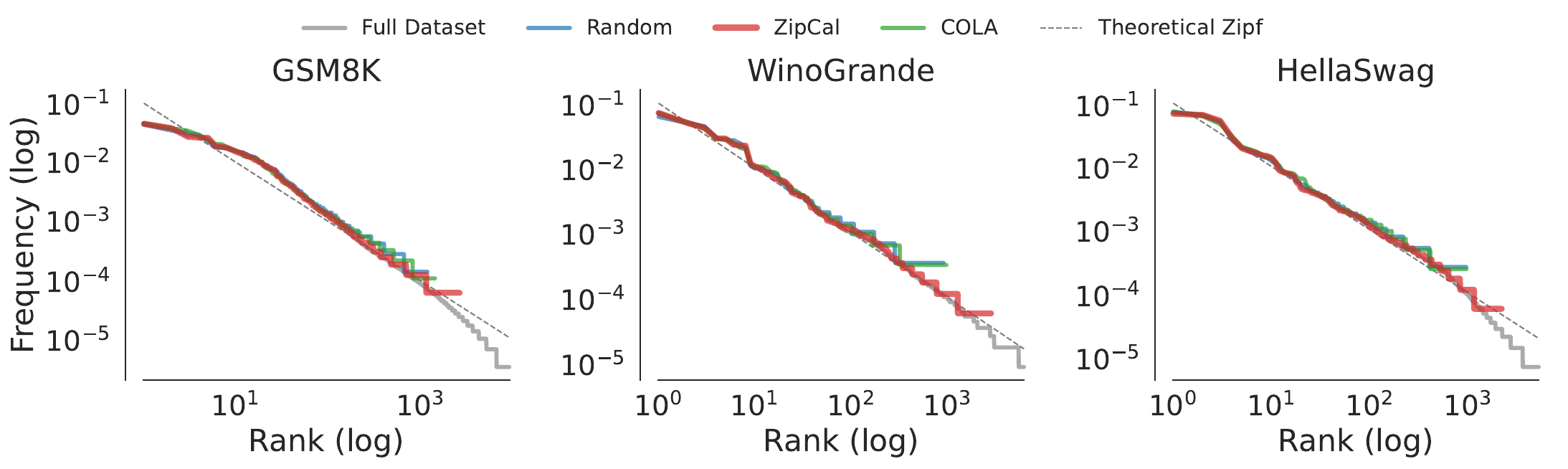}
    \caption{Token frequency distribution of the original datasets and the random, COLA, and \methodname calibration sets.}
    \label{fig:zipf_tokens}
\end{figure*}

%\subsection{Sampling Data}
Natural languages are characterized by a Zipfian distribution \cite{piantadosi_zipfs_2014}, where a small number of words appear with high frequency, while most of the vocabulary resides in an increasingly sparse long tail. Not acknowledging this sparsity implies potentially omitting the rare tokens and diverse semantic contexts that trigger critical activation outliers in LLMs.

\subsection{Single-Domain Sampling}
\label{sec:single_task}
We propose to sample calibration data from a dataset by maximizing the \textit{lexical diversity} of the calibration set within a constrained number of samples.
Specifically, we apply a sanitization pass to the dataset where tokens are converted to lowercase to form a vocabulary $V$ and special tokens (i.e., EOS) are removed. We then employ a randomized greedy selection heuristic to iteratively populate the calibration set. In each iteration, we select a sample $s$ from a candidate pool $P$ that maximizes the gain of the sample's vocabulary:
\begin{equation}
s^* = \arg\max_{s \in P} | V(s) \setminus \mathcal{V}_{covered} |
\end{equation}
where $\mathcal{V}_{covered}$ is the vocabulary of sanitized tokens already present in previously selected samples. In the event of a tie, we prioritize the sample with the highest total number of unique tokens to maximize information density. This approach ensures that the resulting calibration data provides a high-fidelity approximation of the full dataset's vocabulary manifold. We present the procedure, named \methodname, in \Cref{algo:sampling}.

\begin{lemma}
\label{lemma:sampling}
    When \methodname is used to extract a calibration set of $k$ samples on dataset $\mathcal{D}$ of $n$ elements, it completes the procedure in $O(nk)$ time.
\end{lemma}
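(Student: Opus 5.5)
The plan is to charge every operation of \methodname to a pair (iteration, candidate sample), with a constant amount of work per pair. The total then comes to $O(nk)$.

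For this to be meaningful we must fix a cost model, as the statement implicitly does. We treat the length of each sample (its number of tokens, hence $|V(s)|$) as bounded by a constant $L$. This is natural because calibration samples are truncated to a fixed context length, and $L$ is independent of $n$ and $k$. We also store $\mathcal{V}_{covered}$ in a hash set with expected $O(1)$ membership tests and insertions. Under these conventions the constant $L$ is absorbed into the $O(\cdot)$.

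The argument then proceeds in three steps.

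\textbf{Preprocessing.} The sanitization pass lowercases tokens, removes special tokens and builds each $V(s)$. It touches every token of every sample once, costing $O(nL)=O(n)$, which is dominated by $O(nk)$ for $k\ge 1$. We precompute $|V(s)|$ for tie-breaking in the same pass.

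\textbf{One greedy iteration.} We bound the work of one iteration, i.e.\ one evaluation of $s^*=\arg\max_{s\in P}|V(s)\setminus\mathcal{V}_{covered}|$. For each candidate $s\in P$ with $|P|\le n$, computing the gain requires $|V(s)|\le L$ hash lookups, so $O(L)$. Comparing with the running maximum, including the tie-break on the precomputed $|V(s)|$, is $O(1)$. Hence the scan costs $O(nL)=O(n)$. The randomization of the pool only restricts $P$ to a subset of $\mathcal{D}$, so it can only decrease this cost.

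\textbf{Update and total.} Adding $V(s^*)$ to $\mathcal{V}_{covered}$ costs $O(L)$, and removing $s^*$ from $P$ costs $O(1)$ with a swap-remove on an array. Exactly $k$ iterations are performed. Summing gives $O(n) + k\cdot O(n) = O(nk)$.

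The main obstacle is not the counting itself but making the hidden assumptions explicit. If sample length were unbounded, the per-candidate gain computation would cost $O(|V(s)|)$ and the bound would become $O(k\cdot\sum_s |V(s)|)$. I would therefore state clearly that $L$ is the fixed truncation length, treated as a constant, and that set operations are hash-based (expected constant time). Under a deterministic model one would instead use a bitmap over the vocabulary $V$, indexed by token id, after an $O(|V|)\le O(nL)$ initialization; this gives worst-case $O(1)$ operations and the same bound.
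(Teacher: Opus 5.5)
Your proposal is correct and follows essentially the same argument as the paper. Both run $k$ greedy iterations, each scanning at most $n$ candidates (the paper sums the exact counts $n-i+1$ to get $kn - k(k+1)/2 + k$), and each gain evaluation costs $O(w)$ because the fixed context window bounds $|V_s|$. You also account explicitly for preprocessing and for the update of $\mathcal{V}_{covered}$, and you state the need for constant-time (hashed) membership tests, all of which the paper's proof leaves implicit.
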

%\subsubsection{Time Complexity Analysis}
\begin{proof}
The selection process, lines~\ref{ln:selection} to~\ref{ln:selectionend} of \Cref{algo:sampling}, consists of $k$ iterations to construct the calibration set. At any iteration $i\in(1,k)$, we have to carry out $T_i=N-i+1$ evaluations. To find out the total number of evaluations required to construct a calibration set of $k$ samples, we have:
\begin{align}
    T_{tot} = \sum_{i=1}^k T_i= k\cdot n - \frac{k(k+1)}{2}+k
\end{align}
During each evaluation, we compute the set difference between the candidate's vocabulary $V_s$ and the cumulative vocabulary $V_{covered}$. This cost scales with the number of unique elements in a sample, which is bounded by the context window size $w$. Since $w$ is fixed a priori, its contribution to the asymptotic complexity is constant. Consequently, summing these contributions, the overall complexity is $O(kn)$.
\end{proof}

\begin{algorithm}[t]
    \KwIn{Dataset $D$; Number of samples $k$}
    \KwOut{Set of calibration samples $S$}
    \caption{\methodname} \label{algo:sampling}
    \Let{$S$,$V_{covered}$}{$\emptyset$}\;

    \tcp{Precalculate the full vocabulary}
    \ForEach{sample $s \in \mathcal{D}$}{
        $V_s \gets \{ \text{sanitize}(t) \mid t \in s, t \notin \text{SpecialTokens} \}$ \;
    }
    
    \For{$i = 1$ \KwTo $k$ \label{ln:selection}}{
        \tcp{Select sample with maximum marginal vocabulary gain}
        ${s^*} \gets \arg\max_{s \in \mathcal{D} \setminus S} | V_s \setminus \mathcal{V}_{covered} |$ \;
        
        $S \gets S \cup \{s^*\}$ \;
        $V_{covered} \gets V_{covered} \cup V_{s^*}$ \label{ln:selectionend}\;
    }

    \Return $S$ \;
\end{algorithm}

\subsection{Multi-Domain Sampling}
\label{sec:multi_task}
When calibrating models for general-purpose use or Multi-Domain applications, a single source of data is often insufficient. To address goal \Gtre, we extend our approach to support heterogeneous multi-domain settings. Simply concatenating datasets and applying \methodname over the joint dataset is suboptimal, as a single large or linguistically dense corpus might dominate the selection process. To address this, we propose a hierarchical selection strategy. First, we apply \methodname to each dataset $D_i \in \mathbb{D}$, to extract a local representative pool $P_i$ of size $k$. This ensures each domain's unique vocabulary is captured. Second, we consolidate these pools into a candidate set $\mathcal{P} = \bigcup P_i$ and apply a greedy $k$-centers selection algorithm. By representing each sample using a lightweight embedding, the $k$-centers objective selects a final set $S$ that maximizes the distance between samples, ensuring the calibration data is semantically spread across all provided domains.

\begin{lemma}
\label{lemma:multi}
The Multi-Domain \methodname procedure extracts $k$ samples from $m$ datasets, each of length $n_m$, in $O(mNk)$ time, where $N=\max_m(n_m)$.
\end{lemma}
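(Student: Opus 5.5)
The plan is to split the Multi-Domain procedure into its two phases, the per-domain \methodname extraction and the greedy $k$-centers consolidation, bound each phase separately, and add the bounds. The first phase is a direct reuse of \Cref{lemma:sampling}. The second phase is a standard greedy $k$-centers (farthest-point) analysis. The only remaining work is to check that the second term is dominated by the first.

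\textbf{Phase 1.} We run \methodname once on each dataset $D_i$, $i=1,\dots,m$, to extract a pool $P_i$ of $k$ samples.
\begin{itemize}
\item By \Cref{lemma:sampling}, the run on $D_i$ costs $O(n_i k)$ time. The constant depends only on the fixed context window $w$, exactly as in that proof.
\item Since $n_i \le N = \max_m(n_m)$, each run is $O(Nk)$.
\item Summing over the $m$ datasets gives $O(mNk)$. If we keep the sharper sum, the bound is $O(k\sum_i n_i)$, which is at most $O(mNk)$.
\end{itemize}

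\textbf{Phase 2.} The candidate set $\mathcal{P}=\bigcup_i P_i$ has at most $mk$ elements.
\begin{itemize}
\item Computing a lightweight embedding for each candidate costs $O(w)$ per sample, assuming a fixed embedding dimension $d$. This gives $O(mk)$ in total.
\item Greedy $k$-centers runs $k$ rounds. We maintain, for each candidate, its distance to the nearest selected center.
\item In each round we pick the farthest candidate in $O(mk)$ time. We then update every candidate's nearest-center distance against the newly added center, with one $O(d)$ distance computation per candidate.
\item So each round costs $O(mk)$, and the whole phase costs $O(mk^2)$.
\end{itemize}

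\textbf{Combining.} The total is $O(mNk + mk^2)$. We need $k\le N$, since we cannot extract more samples than a dataset holds. Given that, $mk^2 \le mNk$, and the total is $O(mNk)$.

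The main obstacle is the second phase. A naive implementation of $k$-centers recomputes, in every round, the distances from each candidate to every selected center. That costs $O(mk\cdot k)$ per round and $O(mk^3)$ overall, which is not dominated by $mNk$ unless $k^2\le N$. The proof therefore has to commit explicitly to the incremental nearest-center-distance bookkeeping, which keeps the per-round cost linear in $|\mathcal{P}|$. It must also state the assumptions $k\le N$ and constant $d,w$, in the same spirit as the constant-$w$ argument in the proof of \Cref{lemma:sampling}.
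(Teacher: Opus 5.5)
Your proposal is correct and follows essentially the same route as the paper's proof. The paper likewise bounds the per-domain phase by $O(mkN)$ via the single-domain lemma, and it charges the greedy $k$-centers phase on the pool of size $mk$ at $O(k\cdot(mk)\cdot d)$, which it calls negligible because ``$m$ and $k$ are typically $\ll N$''; your explicit incremental nearest-center bookkeeping and your assumption $k\le N$ (with $d,w$ constant) make precise what that bound leaves implicit, since the literal $\min_{z\in S}$ step in the pseudocode needs such bookkeeping to stay within $O(mk^2d)$.
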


\begin{table*}[t]
\centering
\resizebox{\textwidth}{!}{%
\begin{tabular}{l|l||c||ccccc|c||ccccc|c|c}
\toprule
& & & \multicolumn{6}{c||}{\textbf{COLA}} & \multicolumn{6}{c}{{\methodname}} \\
\cmidrule(lr){4-9} \cmidrule(lr){10-15}
&&&\multicolumn{5}{c}{\textbf{Calibration Category}} && \multicolumn{5}{c}{\textbf{Calibration Category}}&&
\\
\textbf{Model} & \textbf{Task} & \textbf{Dense} & \cellcolor{gray!7}LangMod & \cellcolor{blue!7}Math & \cellcolor{orange!7}CommQA & \cellcolor{purple!7}NLI & \cellcolor{cyan!7}KnowTran & \textbf{Mean} & \cellcolor{gray!7}LangMod & \cellcolor{blue!7}Math & \cellcolor{orange!7}CommQA & \cellcolor{purple!7}NLI & \cellcolor{cyan!7}KnowTran & \textbf{Mean} & \textbf{$\Delta$} \\
\midrule
& \cellcolor{blue!7}MMLU-M & 24.44 & 23.33 & \cellcolor{blue!7}\underline{24.44} & 23.01 & 21.85 & 23.33 & \textbf{23.19} & 23.01 & \cellcolor{blue!7}\underline{26.29} & \underline{21.24} & 22.02 & 23.01 & 23.11 & \negdelta{-0.08} \\
& \cellcolor{blue!7}GSM8k & 78.09 & \underline{76.55} & \cellcolor{blue!7}75.70 & 74.40 & 76.54 & 75.89 & 75.82 & \underline{76.88} & \cellcolor{blue!7}76.27 & 75.82 & 75.70 & 76.61 & \textbf{76.25} & \posdelta{\textbf{+0.43}} \\
& \cellcolor{orange!7}HellaSwag & 71.71 & 72.03 & \underline{72.09} & \cellcolor{orange!7}71.76 & 71.67 & 71.86 & \textbf{71.88} & 71.78 & \underline{71.90} & \cellcolor{orange!7}71.83 & 71.59 & 71.80 & 71.78 & \negdelta{-0.10} \\
& \cellcolor{orange!7}WinoGr. & 69.46 & \underline{68.93} & 68.51 & \cellcolor{orange!7}67.93 & 68.03 & 68.07 & \textbf{68.30} & 67.88 & 68.43 & \cellcolor{orange!7}\underline{68.82} & 68.15 & 68.03 & 68.26 & \negdelta{-0.04} \\
& \cellcolor{orange!7}OBQA & 47.80 & 46.67 & \underline{47.30} & \cellcolor{orange!7}46.67 & 46.20 & 47.10 & 46.79 & 47.90 & 47.50 & \cellcolor{orange!7}\underline{48.10} & 47.30 & 47.80 & \textbf{47.72} & \posdelta{\textbf{+0.93}} \\
& \cellcolor{orange!7}BoolQ & 84.46 & \underline{84.99} & 84.74 & \cellcolor{orange!7}84.41 & 84.85 & 84.91 & \textbf{84.78} & 84.50 & 84.45 & \cellcolor{orange!7}\underline{84.83} & 84.60 & 84.48 & 84.57 & \negdelta{-0.21} \\
& \cellcolor{purple!7}RTE & 74.73 & 72.32 & 72.20 & 71.96 & \cellcolor{purple!7}\underline{72.38} & \underline{72.38} & 72.25 & 72.20 & 72.38 & 72.74 & \cellcolor{purple!7}72.74 & \underline{73.10} & \textbf{72.64} & \posdelta{\textbf{+0.39}} \\
& \cellcolor{purple!7}ANLI & 58.40 & 56.37 & 59.00 & \underline{59.33} & \cellcolor{purple!7}57.10 & 57.80 & 57.92 & 61.50 & 58.50 & 59.80 & \cellcolor{purple!7}59.65 & \underline{61.95} & \textbf{60.28} & \posdelta{\textbf{+2.36}} \\
& \cellcolor{cyan!7}ARC-C & 51.71 & \underline{51.54} & 50.30 & 50.28 & 51.24 & \cellcolor{cyan!7}50.77 & 50.82 & 51.45 & 50.98 & \underline{51.58} & 51.32 & \cellcolor{cyan!7}51.11 & \textbf{51.29} & \posdelta{\textbf{+0.47}} \\
& \cellcolor{cyan!7}ARC-E & 73.86 & 73.27 & 73.46 & 73.13 & 73.21 & \cellcolor{cyan!7}\underline{73.74} & 73.36 & 73.86 & 74.05 & 73.74 & 73.84 & \cellcolor{cyan!7}\underline{73.93} & \textbf{73.88} & \posdelta{\textbf{+0.52}} \\
& \cellcolor{cyan!7}MMLU-K & 62.28 & 62.08 & \underline{62.51} & 62.04 & 61.94 &\cellcolor{cyan!7} 61.56 & 62.02 & 62.23 & 62.68 & \underline{62.70} & 62.44 &\cellcolor{cyan!7} 62.58 & \textbf{62.53} & \posdelta{\textbf{+0.51}} \\

\cmidrule{2-16}
& \textbf{Mean} & 63.36 & 62.55 & 62.75 & 62.27 & 62.27 & 62.49 & 62.47 & 63.02 & 63.04 & 62.84 & 62.67 & 63.13 & \textbf{62.94} & \posdelta{\textbf{+0.47}} \\
& & & & & & & & $\pm 0.19$ & & & & & & $\pm 0.17$\\
\cmidrule{2-16}
\rowcolor{yellow!30} \cellcolor{white}\multirow{-12}{*}{\rotatebox[origin=c]{90}{\texttt{Llama-3.1-8B-Instruct}}} &\textbf{Runtime} & & 5400s & 36s & 3240s & 2160s & 1380s & 2443s & 15.2s & 2.3s & 12.3s & 9.3s & 14.5s & \textbf{10.7s} & {\textbf{228$\times$}} \\
\midrule
& \cellcolor{blue!7}MMLU-M & 21.48 & \underline{24.44} & \cellcolor{blue!7}23.01 & 22.02 & 21.24 & 21.85 & \textbf{22.51} & 27.40 & \cellcolor{blue!7}24.44 & 26.29 & \underline{28.79} & 24.44 & \textbf{26.27} & \posdelta{\textbf{+3.76}} \\
& \cellcolor{blue!7}GSM8k & 75.44 & 74.05 & \cellcolor{blue!7}74.34 & \underline{74.98} & 74.30 & 73.88 & 74.31 & 74.32 & \cellcolor{blue!7}74.41 & \underline{75.09} & 74.26 & 74.37 & \textbf{74.49} & \posdelta{\textbf{+0.18}} \\
& \cellcolor{orange!7}HellaSwag & 67.24 & 67.11 & \underline{67.23} & \cellcolor{orange!7}67.08 & 67.06 & 67.22 & 67.14 & 67.27 & 67.15 & \cellcolor{orange!7}67.23 & \underline{67.34} & \underline{67.34} & \textbf{67.27} & \posdelta{\textbf{+0.13}} \\
& \cellcolor{orange!7}WinoGr. & 70.48 & 69.24 & 69.06 & \cellcolor{orange!7}68.82 & 69.26 & \underline{69.65} & \textbf{69.21} & 69.11 & 68.63 & \cellcolor{orange!7}69.18 & \underline{69.34} & 68.82 & 69.02 & \negdelta{-0.19} \\
& \cellcolor{orange!7}OBQA & 45.40 & 46.07 & 46.10 & \cellcolor{orange!7}\underline{46.40} & 45.70 & 46.30 & \textbf{46.11} & 45.33 & \underline{45.60} & \cellcolor{orange!7}45.40 & 45.20 & 45.40 & 45.39 & \negdelta{-0.72} \\
& \cellcolor{orange!7}BoolQ & 88.59 & \underline{88.92} & 88.62 & \cellcolor{orange!7}88.65 & 88.81 & 88.64 & 88.73 & 88.80 & 88.79 & \cellcolor{orange!7}88.79 & 88.72 & \underline{88.98} & \textbf{88.81} & \posdelta{\textbf{+0.08}} \\
& \cellcolor{purple!7}RTE & 78.34 & 78.22 & \underline{78.88} & 77.80 & \cellcolor{purple!7}78.34 & 78.70 & 78.39 & 78.46 & 78.34 & \underline{78.52} & \cellcolor{purple!7}\underline{78.52} & 78.34 & \textbf{78.44} & \posdelta{\textbf{+0.05}} \\
& \cellcolor{purple!7}ANLI & 72.80 & 72.37 & 73.30 & 73.05 & \cellcolor{purple!7}\underline{72.40} & 72.15 & 72.65 & \underline{73.27} & 73.15 & 72.60 & \cellcolor{purple!7}73.05 & 73.00 & \textbf{73.01} & \posdelta{\textbf{+0.36}} \\
& \cellcolor{cyan!7}ARC-C & 51.79 & \underline{52.13} & 51.37 & 51.71 & 51.62 & \cellcolor{cyan!7}51.83 & \textbf{51.73} & 51.48 & 51.54 & 51.54 & \underline{51.79} & \cellcolor{cyan!7}51.32 & 51.53 & \negdelta{-0.20} \\
& \cellcolor{cyan!7}ARC-E & 66.79 & \underline{67.51} & 67.09 & 66.94 & 67.28 & \cellcolor{cyan!7}67.23 & \textbf{67.21} & 66.72 & 66.75 & 66.90 & \underline{66.96} & \cellcolor{cyan!7}66.90 & 66.85 & \negdelta{-0.36} \\
& \cellcolor{cyan!7}MMLU-K & 33.83 & 35.90 & 33.53 & 32.94 & \underline{36.80} & \cellcolor{cyan!7}33.94 & 34.62 & \underline{36.95} & 36.07 & 36.42 & 35.88 &\cellcolor{cyan!7} 35.63 & \textbf{36.19} & \posdelta{\textbf{+1.57}} \\

\cmidrule{2-16}
& \textbf{Mean} & 61.11 & 61.45 & 61.14 & 60.94 & 61.16 & 61.04 & 61.15 & 61.74 & 61.35 & 61.63 & 61.80 & 61.32 & \textbf{61.57} & \posdelta{\textbf{+0.42}} \\
& & & & & & & & $\pm 0.81$ & & & & & & $\pm 0.82$\\
\cmidrule{2-16}
\rowcolor{yellow!30} \cellcolor{white}\multirow{-12}{*}{\rotatebox[origin=c]{90}{\texttt{gemma-2-9b-it}}} & \textbf{Runtime (sec)} & & 6231s & 149s & 3400s & 2671s & 1500s & 2790s & 15.2s & 2.3s & 12.3s & 9.3s & 14.5s & \textbf{10.7s} & \textbf{260$\times$} \\
\bottomrule
\end{tabular}
}
\caption{Comparison of COLA vs.\ \methodname calibration data under Wanda unstructured pruning at $25\%$ sparsity.}
\label{tab:pruning_single}
\end{table*}

\section{Experiments}

We present below the details of the experimental setup and the experimental results.

\subsection{Experimental Setup}

\paragraph{Post-training Compression}
We validate \methodname across a number of post-training compression techniques that rely on calibration data.
For pruning, we consider Wanda \cite{sun_simple_2024}, an unstructured approach which scores weight importance via the product of magnitudes and input activation norms $|W_{ij}| \cdot \|X_j\|_2$, and 2SSP \cite{sandri_2ssp_2025}, a two-stage framework for structured pruning that balances width and depth reduction. For quantization, we evaluate GPTQ \cite{frantar_gptq_2023}, which minimizes layer-wise reconstruction error $ \|\mathbf{W} X - \mathbf{\hat{W}} X\|_2^2$, and AWQ \cite{lin_awq_2024}, which preserves salient weights critical to model performance by scaling them according to activation magnitude $|\mathbf{X}|$.

\paragraph{Experimental Environment}
We use two LLMs to perform our evaluation of downstream tasks: \texttt{Llama-3.1-8B-Instruct} \cite{grattafiori_llama_2024} and \texttt{Gemma-2-9B-it} \cite{team_gemma_2024}. For language modeling evaluation, we use two base models, namely \texttt{Llama-3.1-8B} and \texttt{Gemma-2-9B} \footnote{Due to the number of combinations, executing the experiments over these models requires $\approx1200$ GPU hours.}. Unless otherwise specified, we set the context length to $w=2048$ and the number of calibration samples $k=128$.

\paragraph{Baselines}
We benchmark \methodname firstly against random sampling, the approach used by almost any compression algorithm \cite{ashkboos2024slicegpt,sun_simple_2024,lin_awq_2024}.
Then, we evaluate whether our lightweight, model-agnostic approach can match or exceed the performance of a computationally expensive, model-dependent technique. Hence, we extensively compare against COLA \cite{he_preserving_2025}, a recent state-of-the-art data curation method for compression algorithms that relies on both activation influence and data diversity metrics. To further validate \methodname, we show the comparison with a self-generative approach \cite{ji2025beware} in \Cref{apx:generativebaseline}. We compare the performance evaluation of both intra- (i.e., when the evaluation tasks \emph{match} with the calibration domain) and inter- (i.e., when the evaluation tasks \emph{do not match} with the calibration domain) tasks, as well as the runtime between \methodname and COLA.

\paragraph{Evaluation}
To assess downstream performance post-compression, we use the LM-Evaluation-Harness framework \cite{eval-harness} across five functional domains and three languages (EN, ES, and ZH). Results are reported on the subset of datasets that support standardized zero- or few-shot metrics. We categorize the datasets in functional domains as follows.
\textbf{(i) Language Modeling}: zero-shot perplexity is measured on WikiText \cite{merity_pointer_2016}, C4 \cite{raffel_exploring_2020}, and Pile \cite{gao_pile_2020}, which are general datasets that have commonly been used for model compression. \textbf{(ii) Mathematical Reasoning}: evaluated via GSM8k (5-shot) \cite{cobbe_training_2021}, SVAMP \cite{patel_are_2021}, and MMLU-M the subset of math tasks in MMLU. \textbf{(iii) Commonsense Reasoning \& QA}: zero-shot assessed using WinoGrande \cite{sakaguchikeisuke_winogrande_2021}, XWinograd \cite{tikhonov_its_2021}, CommonsenseQA \cite{talmor_commonsenseqa_2019}, HellaSwag \cite{zellers_hellaswag_2019}, OpenBookQA \cite{mihaylov_can_2018} and XCOPA \cite{ponti_xcopa_2020}. \textbf{(iv) Natural Language Inference (NLI)}: tested in zero-shot on RTE \cite{wang_glue_2018} and the adversarial ANLI \cite{nie_adversarial_2020} benchmarks. \textbf{(v) Knowledge \& Translation}: general world knowledge and reasoning are measured via MMLU-K, MMLU-ES, and MMLU-ZH, the MMLU corpus with math tasks excluded \cite{hendrycks_measuring_2021,singh_global_2025}, XQuAD \cite{artetxe_cross-lingual_2020}, XNLI \cite{conneau_xnli_2018}, and ARC \cite{clark_think_2018}, while translation capabilities are tested on WMT14 \cite{bojar_findings_2014}.

\begin{table}[t]
\centering
\resizebox{\linewidth}{!}{
\begin{tabular}{l c |cc| cc| cc}
\toprule
& & \multicolumn{2}{c}{\textbf{Wanda (25\%)}} & \multicolumn{2}{c}{\textbf{GPTQ (W4A16)}} & \multicolumn{2}{c}{\textbf{AWQ (W4A16)}} \\
\cmidrule(lr){3-4} \cmidrule(lr){5-6} \cmidrule(lr){7-8}
Task & Dense & Rand. & \methodname & Rand. & \methodname & Rand. & \methodname \\
\midrule
\cellcolor{blue!7} MMLU-M & 24.44 & 23.33 & \textbf{24.14} & 23.33 & \textbf{24.07} & 23.12 & \textbf{24.07} \\
\cellcolor{blue!7} GSM8K & 78.08 & 74.98 & \textbf{77.17} & 70.12 & \textbf{70.63} & 36.13 & \textbf{38.46} \\
\cellcolor{orange!7} HellaSwag & 71.70 & \textbf{64.49} & 64.28 & \textbf{64.23} & 64.20 & \textbf{64.85} & \textbf{64.85} \\
\cellcolor{orange!7} WinoGr. & 73.56 & 67.86 & \textbf{68.07} & 67.21 & \textbf{67.39} & 67.48 & \textbf{68.71} \\
\cellcolor{orange!7} OBQA & 47.80 & 41.83 & \textbf{42.42} & \textbf{42.02} & 41.10 & \textbf{40.90} & 40.45 \\
\cellcolor{orange!7} BoolQ & 85.41 & \textbf{84.80} & 84.64 & \textbf{83.53} & 83.44 & 83.98 & \textbf{84.36} \\
\cellcolor{purple!7} RTE & 74.37 & 72.39 & \textbf{72.77} & 71.75 & \textbf{73.06} & \textbf{74.73} & 73.47 \\
\cellcolor{purple!7} ANLI & 58.40 & 57.90 & \textbf{60.64} & 46.50 & \textbf{53.90} & 51.10 & \textbf{53.20} \\
\cellcolor{cyan!7} ARC-C & 53.75 & 50.33 & \textbf{50.89} & 50.03 & \textbf{50.37} & 51.41 & \textbf{51.54} \\
\cellcolor{cyan!7} ARC-E & 82.24 & 76.48 & \textbf{76.67} & \textbf{76.27} & 76.22 & 76.47 & \textbf{76.49} \\
\cellcolor{cyan!7} MMLU-K & 62.28 & 62.24 & \textbf{62.46} & 57.73 & \textbf{58.08} & 56.51 & \textbf{57.49} \\
\midrule
Mean & 64.73 & 61.51 & \posdelta{\textbf{62.20}} & 59.34 & \posdelta{\textbf{60.22}} & 56.97 & \posdelta{\textbf{57.55}} \\
\bottomrule
\end{tabular}
}
\caption{Performance comparison against random sampling across different tasks and compression techniques for \texttt{Meta-Llama-3.1-8B-Instruct}.}
\label{tab:full-compression-comparison}
\end{table}

\subsection{Experimental Results}\label{sec:experiments}
\paragraph{Better than Random}
The purpose of this experiment is to verify the hypothesis that \methodname identifies higher-utility data compared to standard uniform random sampling. Random sampling is statistically prone to overrepresent high-frequency tokens while failing to capture the \textit{Zipfian tail} composed of tokens that are an integral part of the corpora, as shown in \Cref{fig:zipf_tokens}.
Moreover, as numerically shown in \Cref{tab:full-compression-comparison}, \methodname wins 24 of 33 comparisons against random selection across the different compression techniques. On the few occasions where random selection performs best, it does so by a small margin ($<1\%$).
Most notably, we observe significant gains in reasoning-intensive tasks: on ANLI, our method improves accuracy by $7.4\%$ for GPTQ and $2.7\%$ for Wanda. Similarly, in GSM8K, we achieve a $2.1\%$ boost in the pruning setting with Wanda. This increased lexical representation correlates directly with performance; by forcing the calibration set to cover a broader vocabulary manifold, Zipf Sampling provides the compression algorithms with a more representative set of activation outliers \cite{sun2024massive,an2025systematic}. This leads to more robust statistics, effectively bridging the gap between the original dense model and its compressed counterpart. 
%Further discussion on the utility of these samples compared to samples based on various distribution metrics is provided in \Cref{apx:ablation-extra}.%\pio{When we're worse its by little, on average we go much better by 0.something}

\paragraph{Evaluating Single-Domain Data Curation}
We now discuss the evaluation against COLA on models compressed using a Single-Domain calibration source (i.e, the calibration data are extracted from a single domain). 
\Cref{tab:pruning_single,tab:quantization_single} shows how our proposed approach achieves on-par performance w.r.t. the baselines. Considering only the mean across all possible $<$task,domain$>$ pairs, \methodname performs better than COLA in 3 out of 4 cases. More importantly, as stated at the beginning of the paper, it reduces the data curation bottleneck from over an hour to mere seconds, yielding a $228-260\times$ acceleration while preserving downstream capabilities on par with computationally expensive baselines.

Furthermore, the results in \Cref{tab:pruning_single,tab:quantization_single} demonstrate that Single-Domain compression is highly sensitive to the choice of calibration data. Contrary to intuition, matching the calibration source domain to the task domain (e.g., using Commonsense Reasoning \& QA for BoolQ) does not universally hold the best performance (best is underlined in \Cref{tab:pruning_single,tab:quantization_single}, while domain-task match cells are colored). The results are also in line with \cite{bandari_is_2024}, where the authors showed how Language Modeling corpora are not inherently the optimal calibration source for diverse downstream tasks. 
This domain sensitivity finding introduces a form of intra-task sub-optimality that limits the possibility of deploying a unique compressed model that can achieve reasonable performance on a predefined downstream domain task.

\begin{table*}[t]
\centering
\resizebox{\textwidth}{!}{%
\begin{tabular}{l|l||c||ccccc|c||ccccc|c|c}
\toprule
& & & \multicolumn{6}{c||}{\textbf{COLA}} & \multicolumn{6}{c}{{\methodname}} \\
\cmidrule(lr){4-9} \cmidrule(lr){10-15}
&&&\multicolumn{5}{c}{\textbf{Calibration Category}} && \multicolumn{5}{c}{\textbf{Calibration Category}}&&
\\
\textbf{Model} & \textbf{Task} & \textbf{Dense} & \cellcolor{gray!7}LangMod & \cellcolor{blue!7}Math & \cellcolor{orange!7}CommQA & \cellcolor{purple!7}NLI & \cellcolor{cyan!7}KnowTran & \textbf{Mean} & \cellcolor{gray!7}LangMod & \cellcolor{blue!7}Math & \cellcolor{orange!7}CommQA & \cellcolor{purple!7}NLI & \cellcolor{cyan!7}KnowTran & \textbf{Mean} & \textbf{$\Delta$} \\
\midrule
%%mathmmlu
& \cellcolor{blue!7}MMLU-M & 24.44 & 23.33 & \cellcolor{blue!7}\underline{24.44} & 23.01 & 21.85 & 23.33 & \textbf{23.19} & 23.01 & \cellcolor{blue!7}\underline{26.29} & 21.24 & 22.02 & 23.01 & 23.11 & \negdelta{-0.08} \\
%%%
& \cellcolor{blue!7}GSM8k & 78.09 & \underline{71.75} & \cellcolor{blue!7}68.46 & 68.84 & 63.99 & 65.01 & 67.61 & 71.99 & \cellcolor{blue!7}\underline{74.60} & 65.13 & 70.96 & 67.32 & \textbf{70.00} & \posdelta{\textbf{+2.39}} \\
& \cellcolor{orange!7}HellaSwag & 71.71 & 72.07 & \underline{72.14} & \cellcolor{orange!7}71.91 & 72.06 & 71.54 & 71.94 & 71.84 & \underline{72.71} & \cellcolor{orange!7}71.46 & 71.89 & 72.37 & \textbf{72.05} & \posdelta{\textbf{+0.11}} \\
& \cellcolor{orange!7}WinoGr. & 69.46 & 66.75 & \underline{68.15} & \cellcolor{orange!7}66.34 & 66.30 & 66.73 & 66.85 & 66.34 & \underline{67.40} & \cellcolor{orange!7}66.69 & 67.05 & 67.32 & \textbf{66.96} & \posdelta{\textbf{+0.11}} \\
& \cellcolor{orange!7}OBQA & 47.80 & 45.60 & \underline{46.40} & \cellcolor{orange!7}46.20 & 43.90 & 45.90 & 45.60 & 45.20 & 47.20 & \cellcolor{orange!7}45.60 & \underline{47.50} & 46.20 & \textbf{46.34} & \posdelta{\textbf{+0.74}} \\
& \cellcolor{orange!7}BoolQ & 84.46 & 84.70 & 84.08 & \cellcolor{orange!7}84.08 & \underline{84.74} & 83.53 & \textbf{84.23} & 82.63 & 82.68 & \cellcolor{orange!7}83.79 & \underline{83.91} & 83.35 & 83.27 & \negdelta{-0.96} \\
& \cellcolor{purple!7}RTE & 74.73 & 74.61 & \underline{75.63} & 71.30 & \cellcolor{purple!7}74.91 & 74.37 & \textbf{74.16} & 72.38 & 70.76 & 71.84 & \cellcolor{purple!7}\underline{73.10} & 72.92 & 72.20 & \negdelta{-1.96} \\
& \cellcolor{purple!7}ANLI & 58.40 & 57.47 & 52.85 & 53.55 & \cellcolor{purple!7}54.40 & \underline{58.25} & \textbf{55.30} & 53.45 & 51.90 & 55.20 & \cellcolor{purple!7}\underline{56.25} & 55.05 & 54.37 & \negdelta{-0.93} \\
& \cellcolor{cyan!7}ARC-C & 51.71 & 50.14 & 50.38 & 50.21 & \underline{50.77} & \cellcolor{cyan!7}50.64 & 50.43 & 50.21 & 50.34 & 50.17 & 49.66 & \cellcolor{cyan!7}\underline{51.96} & \textbf{50.47} & \posdelta{\textbf{+0.04}} \\
& \cellcolor{cyan!7}ARC-E & 73.86 & 73.55 & 73.76 & 72.62 & 73.30 & \cellcolor{cyan!7}\underline{73.88} & 73.42 & 72.73 & 74.07 & 74.33 & 72.90 & \cellcolor{cyan!7}\underline{75.48} & \textbf{73.90} & \posdelta{\textbf{+0.48}} \\
& \cellcolor{cyan!7}MMLU-K & 62.28 & 57.46 & \underline{60.45} & 57.48 & 58.45 & \cellcolor{cyan!7}58.35 & \textbf{58.44} & 54.42 & \underline{58.64} & 57.40 & 56.96 & \cellcolor{cyan!7}58.12 & 57.11 & \negdelta{-1.33} \\

\cmidrule{2-16}
& \textbf{Mean} & 63.36 & 61.58 & 61.52 & 60.50 & 60.42 & 61.05 & \textbf{61.02} & 60.38 & 61.51 & 60.26 & 61.11 & 61.19 & 60.89 & \negdelta{-0.13} \\
& & & & & & & & $\pm 0.67$ & & & & & & $\pm 0.63$\\
\cmidrule{2-16}
\rowcolor{yellow!30} \cellcolor{white} \multirow{-12}{*}{\rotatebox[origin=c]{90}{\texttt{Llama-3.1-8B-Instruct}}} & \textbf{Runtime} & & 5400s & 36s & 3240s & 2160s & 1380s & 2443s & 15.2s & 2.3s & 12.3s & 9.3s & 14.5s & \textbf{10.7s} & \textbf{228$\times$}\\
\midrule
& \cellcolor{blue!7}MMLU-M & 21.48 & \underline{24.44} & \cellcolor{blue!7} 23.01 & 22.02 & 21.24 & 21.85 & 22.44 & 27.40 & \cellcolor{blue!7} 24.44 & 26.29 & \underline{28.79} & 24.44 & \textbf{26.27} & \posdelta{\textbf{+3.83}} \\
& \cellcolor{blue!7}GSM8k & 75.44 & 74.12 & \cellcolor{blue!7}74.30 & \underline{74.41} & 74.18 & 73.81 & \textbf{74.16} & 73.67 & \cellcolor{blue!7}\underline{73.92} & 73.39 & 73.81 & 73.54 & 73.66 & \negdelta{-0.50} \\
& \cellcolor{orange!7}HellaSwag & 67.24 & 67.43 & 67.56 & \cellcolor{orange!7}\underline{68.23} & 67.43 & 67.48 & 67.63 & \underline{68.02} & 67.61 & \cellcolor{orange!7}67.81 & 67.36 & 67.37 & 67.63 & \cellcolor{gray!10}+0.00 \\
& \cellcolor{orange!7}WinoGr. & 70.48 & 69.93 & \underline{70.01} & \cellcolor{orange!7}69.69 & 69.34 & 69.61 & 69.72 & 69.69 & \underline{70.60} & \cellcolor{orange!7}70.38 & 70.01 & 69.02 & \textbf{69.92} & \posdelta{\textbf{+0.20}} \\
& \cellcolor{orange!7}OBQA & 45.40 & \underline{45.47} & 44.50 & \cellcolor{orange!7}44.20 & 44.10 & 45.40 & 44.73 & 45.07 & 44.90 & \cellcolor{orange!7}\underline{45.20} & 45.00 & 45.00 & \textbf{45.03} & \posdelta{\textbf{+0.30}} \\
& \cellcolor{orange!7}BoolQ & 88.59 & 88.66 & 88.62 & \cellcolor{orange!7}88.32 & 88.50 & \underline{88.76} & 88.57 & 88.64 & 88.62 & \cellcolor{orange!7}88.58 & \underline{89.07} & 88.49 & \textbf{88.68} & \posdelta{\textbf{+0.11}} \\
& \cellcolor{purple!7}RTE & 78.34 & 78.10 & 76.53 & \underline{78.34} & \cellcolor{purple!7}77.26 & 77.62 & 77.57 & 77.26 & 77.80 & 77.80 & \cellcolor{purple!7}\underline{78.16} & 78.52 & \textbf{77.91} & \posdelta{\textbf{+0.34}} \\
& \cellcolor{purple!7}ANLI & 72.80 & 71.07 & 71.75 & 71.80 & \cellcolor{purple!7}71.90 & \underline{72.00} & 71.70 & 72.20 & \underline{72.90} & 70.30 & \cellcolor{purple!7}72.55 & 71.95 & \textbf{71.98} & \posdelta{\textbf{+0.28}} \\
& \cellcolor{cyan!7}ARC-C & 51.79 & \underline{52.28} & 50.77 & 51.79 & 51.32 & \cellcolor{cyan!7}51.41 & 51.51 & 52.22 & 52.09 & 52.43 & 51.45 & \cellcolor{cyan!7}\underline{52.52} & \textbf{52.14} & \posdelta{\textbf{+0.63}} \\
& \cellcolor{cyan!7}ARC-E & 66.79 & 67.31 & 66.50 & 66.73 & 66.22 & \cellcolor{cyan!7}\underline{67.66} & 66.88 & \underline{68.83} & 67.82 & 68.81 & 66.41 & \cellcolor{cyan!7}68.22 & \textbf{68.02} & \posdelta{\textbf{+1.14}} \\
& \cellcolor{cyan!7}MMLU-K & 33.83 & 33.23 & 26.74 & 29.17 & 29.70 & \cellcolor{cyan!7} \underline{37.46} & 31.26 & 36.76 & 30.64 & 34.03 & 28.79 & \cellcolor{cyan!7} \underline{41.04} & \textbf{34.25} & \posdelta{\textbf{+2.99}} \\

\cmidrule{2-16}
& \textbf{Mean} & 61.11 & 61.09 & 60.03 & 60.43 & 60.11 & 61.19 & 60.56 & 61.80 & 61.03 & 61.37 & 61.04 & 61.83 & \textbf{61.41} & \posdelta{\textbf{+0.85}} \\
& & & & & & & & $\pm 0.86$ & & & & & & $\pm 0.85$\\
\cmidrule{2-16}
\rowcolor{yellow!30} \cellcolor{white}\multirow{-12}{*}{\rotatebox[origin=c]{90}{\texttt{gemma-2-9b-it}}} &\textbf{Runtime} & & 6231s & 149s & 3400s & 2671s & 1500s & 2790s & 15.2s & 2.3s & 12.3s & 9.3s & 14.5s & \textbf{10.7s} & \textbf{260$\times$}\\
\bottomrule
\end{tabular}
}
\caption{Comparison of COLA vs.\ \methodname calibration data under GPTQ W4A16 quantization.}
\label{tab:quantization_single}
\end{table*}

\paragraph{Evaluating Multi-Domain Data Curation}

We tested our Multi-Domain approach of \methodname, \Cref{sec:multi_task}, to see if it can effectively bypass the aforementioned intra-task sub-optimality, as well as the inter-task limitations of real-world model deployment, where the specific downstream task domain that will be used is unknown \emph{a priori}.
By aggregating Zipfian subsets from multiple domains, we produce a single general calibration set. We show in \Cref{tab:cola_vs_mix_summary_reordered} that models compressed using a Multi-Domain source perform better overall on average across different tasks.
Most notably, Multi-Domain calibration delivers a model that achieves a score of $64.48$ for \texttt{Llama-3.1-8B} with Wanda pruning, which is a higher score than any individual calibration domain within the Single-Domain compressed models within \methodname and COLA groups.
These results highlight that the Multi-Domain version of \methodname can solve both the intra-suboptimality issue as well as the inter-task deployment.

\begin{table*}[t]
\centering
\resizebox{0.9\linewidth}{!}{%
\begin{tabular}{l|l||c||ccc|ccc|ccc|ccc}
\toprule
& & & \multicolumn{3}{c|}{\textbf{Wanda (25\%)}} & \multicolumn{3}{c|}{\textbf{2SSP (25\%)}} & \multicolumn{3}{c|}{\textbf{GPTQ (W4A16)}} & \multicolumn{3}{c}{\textbf{AWQ (W4A16)}} \\
\cmidrule(lr){4-6} \cmidrule(lr){7-9} \cmidrule(lr){10-12} \cmidrule(lr){13-15}
\textbf{Model} & \textbf{Task} & \textbf{Dense} & \textbf{COLA} & \textbf{\methodname} & $\Delta$ & \textbf{COLA} & \textbf{\methodname} & $\Delta$ & \textbf{COLA} & \textbf{\methodname} & $\Delta$ & \textbf{COLA} & \textbf{\methodname} & $\Delta$ \\
\midrule
\multirow{12}{*}{\rotatebox[origin=c]{90}{\texttt{Llama-3.1-8B-Instruct}}}
& \cellcolor{blue!7}MMLU-M & 24.44 & {23.19} & \textbf{24.00} & \posdelta{+0.81} & {22.22} & \textbf{22.54} & \posdelta{+0.32} & \textbf{23.19} & {23.01} & \negdelta{-0.18} & 24.44 & \textbf{24.44} & \cellcolor{gray!30}{0.00} \\
& \cellcolor{blue!7}GSM8k & 78.09 & \textbf{75.82} & 75.80 & \negdelta{-0.02} & \textbf{4.88} & 4.13 & \negdelta{-0.75} & 67.61 & \textbf{74.07} & \posdelta{+6.46} & 68.98 & \textbf{70.50} & \posdelta{+1.52} \\
& \cellcolor{orange!7}HellaSwag & 71.71 & \textbf{71.88} & 71.86 & \negdelta{-0.02} & \textbf{59.41} & 59.16 & \negdelta{-0.25} & \textbf{71.94} & 71.73 & \negdelta{-0.21} & 71.68 & \textbf{71.95} & \posdelta{+0.27} \\
& \cellcolor{orange!7}WinoGr. & 69.46 & 68.30 & \textbf{68.82} & \posdelta{+0.52} & \textbf{60.03} & 59.68 & \negdelta{-0.35} & 66.85 & \textbf{67.95} & \posdelta{+1.10} & 66.75 & \textbf{68.10} & \posdelta{+1.35} \\
& \cellcolor{orange!7}OBQA & 47.80 & 46.79 & \textbf{47.90} & \posdelta{+1.11} & 40.25 & \textbf{40.92} & \posdelta{+0.67} & \textbf{45.60} & 44.40 & \negdelta{-1.20} & 45.90 & 45.90 & \cellcolor{gray!30} 0.00 \\
& \cellcolor{orange!7}BoolQ & 84.46 & 84.78 & \textbf{85.23} & \posdelta{+0.45} & 72.57 & \textbf{73.12} & \posdelta{+0.55} & 84.23 & \textbf{85.38} & \posdelta{+1.15} & 84.07 & \textbf{84.60} & \posdelta{+0.53} \\
& \cellcolor{purple!7}RTE & 74.73 & 72.25 & \textbf{72.82} & \posdelta{+0.57} & 67.04 & \textbf{68.02} & \posdelta{+0.98} & 74.16 & \textbf{74.72} & \posdelta{+0.56} & 73.10 & \textbf{74.50} & \posdelta{+1.40} \\
& \cellcolor{purple!7}ANLI & 58.40 & 57.92 & \textbf{61.50} & \posdelta{+3.58} & \textbf{44.50} & 43.87 & \negdelta{-0.63} & 55.30 & \textbf{56.90} & \posdelta{+1.60} & 53.56 & \textbf{55.00} & \posdelta{+1.44} \\
& \cellcolor{cyan!7}ARC-C & 51.71 & 50.82 & \textbf{51.11} & \posdelta{+0.29} & 39.71 & \textbf{39.75} & \posdelta{+0.04} & \textbf{50.43} & 49.31 & \negdelta{-1.12} & 50.21 & \textbf{50.30} & \posdelta{+0.09} \\
& \cellcolor{cyan!7}ARC-E & 73.86 & 73.36 & \textbf{73.86} & \posdelta{+0.50} & \textbf{63.21} & 62.47 & \negdelta{-0.74} & \textbf{73.42} & 72.97 & \negdelta{-0.45} & 72.55 & \textbf{72.80} & \posdelta{+0.25} \\
& \cellcolor{cyan!7}MMLU-K & 62.28 & 62.02 & \textbf{76.37} & \posdelta{+14.35} & 35.59 & \textbf{36.07} & \posdelta{+0.48} & \textbf{58.44} & 55.86 & \negdelta{-2.58} & 58.52 & \textbf{58.60} & \posdelta{+0.08} \\

\cmidrule{2-15}
& \textbf{Mean} & 63.36 & 62.47 & \textbf{64.48} & \posdelta{\textbf{+2.01}} & 46.31 & \textbf{46.34} & \posdelta{\textbf{+0.03}} & 61.02 & \textbf{61.48} & \posdelta{\textbf{+0.46}} & 60.89 & \textbf{61.52} & \posdelta{\textbf{+0.63}} \\
\midrule
\multirow{12}{*}{\rotatebox[origin=c]{90}{\texttt{gemma-2-9b-it}}}
& \cellcolor{blue!7}MMLU-M & 21.48 & {22.51} & \textbf{27.40} & \posdelta{+4.89} & {19.26} & \textbf{23.33} & \posdelta{+4.07} & {22.44} & \textbf{26.29} & \posdelta{+3.85} & \textbf{21.40} & {21.30} & \negdelta{-0.10} \\
& \cellcolor{blue!7}GSM8k & 75.44 & 74.31 & \textbf{74.75} & \posdelta{+0.44} & 3.30 & \textbf{3.51} & \posdelta{+0.21} & \textbf{74.16} & 73.61 & \negdelta{-0.55} & \textbf{74.22} & 74.10 & \negdelta{-0.12} \\
& \cellcolor{orange!7}HellaSwag & 67.24 & 67.14 & \textbf{67.25} & \posdelta{+0.11} & \textbf{55.48} & 54.73 & \negdelta{-0.75} & \textbf{67.63} & 67.55 & \negdelta{-0.08} & \textbf{67.58} & 67.50 & \negdelta{-0.08} \\
& \cellcolor{orange!7}WinoGr. & 70.48 & \textbf{69.21} & 68.98 & \negdelta{-0.23} & \textbf{59.06} & 58.66 & \negdelta{-0.40} & \textbf{69.72} & 69.61 & \negdelta{-0.11} & \textbf{69.58} & 69.40 & \negdelta{-0.18} \\
& \cellcolor{orange!7}OBQA & 45.40 & \textbf{46.11} & 45.60 & \negdelta{-0.51} & 38.14 & \textbf{38.64} & \posdelta{+0.50} & 44.73 & \textbf{45.40} & \posdelta{+0.67} & 45.17 & \textbf{46.10} & \posdelta{+0.93} \\
& \cellcolor{orange!7}BoolQ & 88.59 & 88.73 & \textbf{89.08} & \posdelta{+0.35} & 72.78 & \textbf{73.07} & \posdelta{+0.29} & \textbf{88.57} & 88.04 & \negdelta{-0.53} & 88.57 & \textbf{89.00} & \posdelta{+0.43} \\
& \cellcolor{purple!7}RTE & 78.34 & 78.39 & \textbf{79.06} & \posdelta{+0.67} & \textbf{66.00} & 65.64 & \negdelta{-0.36} & 77.57 & \textbf{79.06} & \posdelta{+1.49} & 77.58 & \textbf{78.50} & \posdelta{+0.92} \\
& \cellcolor{purple!7}ANLI & 72.80 & \textbf{72.65} & 72.60 & \negdelta{-0.05} & 51.90 & \textbf{52.28} & \posdelta{+0.38} & 71.70 & \textbf{72.90} & \posdelta{+1.20} & 71.76 & \textbf{72.10} & \posdelta{+0.34} \\
& \cellcolor{cyan!7}ARC-C & 51.79 & 51.73 & \textbf{52.21} & \posdelta{+0.48} & 39.80 & \textbf{40.62} & \posdelta{+0.82} & \textbf{51.51} & 51.11 & \negdelta{-0.40} & \textbf{51.28} & 51.20 & \negdelta{-0.08} \\
& \cellcolor{cyan!7}ARC-E & 66.79 & 67.21 & \textbf{67.84} & \posdelta{+0.63} & \textbf{59.22} & 58.43 & \negdelta{-0.79} & 66.88 & \textbf{68.22} & \posdelta{+1.34} & 66.66 & \textbf{66.80} & \posdelta{+0.14} \\
& \cellcolor{cyan!7}MMLU-K & 33.83 & 34.62 & \textbf{36.96} & \posdelta{+2.34} & \textbf{22.77} & 22.41 &\negdelta{-0.36} & 31.26 & \textbf{35.19} & \posdelta{+3.93} & 32.00 & \textbf{34.00} & \posdelta{+2.00} \\

\cmidrule{2-15}
& \textbf{Mean} & 61.11 & 61.15 & \textbf{61.98} & \posdelta{\textbf{+0.83}} & 44.34 & \textbf{44.67} & \posdelta{\textbf{+0.33}} & 60.56 & \textbf{61.54} & \posdelta{\textbf{+0.98}} & 60.53 & \textbf{60.91} & \posdelta{\textbf{+0.38}} \\
\bottomrule
\end{tabular}
}
\caption{Comparison of COLA vs.\ \methodname (Multi-Domain) performance across Wanda and 2SSP at 25\% sparsity, and GPTQ and AWQ using W4A16 compression scheme.}
\label{tab:cola_vs_mix_summary_reordered}
\end{table*}%

%\paragraph{Evaluating Multi-Lingual Data Curation}\label{apx:multilingual}
\paragraph{Evaluating Single-Domain Single-Language Data Curation}
\Cref{tab:multilingual_single,tab:multilingual_gptq,tab:amultilingual_awq,tab:multilingual_2ssp} show the results among all the compression techniques in our testbed. In 5 out of 8 model-compression configurations, models compressed with \methodname-curated data outperform those compressed using COLA-curated data. In the remaining cases, the performance degradation is marginal (-0.30 on average, -0.40 at most). Furthermore, we emphasize that \methodname achieves these results while retrieving the curation data in mere seconds, compared to the significantly longer processing time required by COLA (which averages 17 minutes per dataset). 

We previously discussed that there is little to no correlation between the calibration data domain and higher performance on downstream tasks of that same domain. The multilingual results allow us to further expand this claim: counterintuitively, utilizing calibration data in a specific language does not guarantee optimal compression for downstream tasks in that identical language. We observe that performance on MMLU-ES is actually higher when the model is compressed using Chinese calibration data. Similarly, MMLU-ZH and XNLI-ZH frequently reach peak performance when models are compressed using Spanish data. This pattern of mismatches repeats consistently across the different compression techniques.
We can confidently develop the observation from Bandari et al. \cite{bandari_is_2024} about the mismatch between the calibration data domain and the higher performance in downstream tasks of the same domain to languages. The optimal calibration data for a downstream task in a certain language does not necessarily coincide with the data from that same language.

\paragraph{Evaluating Multi-Domain Multi-Language Data Curation}
We have seen that language alignment between calibration data and evaluation tasks is neither a necessary nor sufficient condition for preserving performance. This mismatch adds another layer of complexity to the manual selection of calibration data, as the natural intuition on matching languages falls short.
We recall that \methodname was developed with property \Gtre in mind. 
We now test the Multi-Domain approach of \methodname under this more difficult setup, to see if the models compressed with aggregated Zipfian subsets can achieve both intra-task sub-optimality and cross-lingual sub-optimality.
The results, reported in  \Cref{tab:cola_vs_mix_multilingual_summary}, highlight that this aggregation across different languages and tasks yields compressed models that, on average, outperform their single-task, single-language counterparts. While our previous evaluations on strictly English tasks demonstrated relatively stable and uniform improvements, the multilingual, multi-domain setting exhibits a noticeably higher skewness in the performance deltas. 
%As shown in the table, \methodname triggers localized gains (e.g., a +8.75\% surge on MMLU-ZH under 2SSP for Gemma-2 and a +7.53\% boost on XQuAD-ZH for Llama-3.1), with occasional sharp regressions (such as a -15.02\% drop on XQuAD-ZH under GPTQ). Despite this heightened variance on individual tasks, \methodname's aggregation strategy effectively acts as a stabilizer, 
\methodname yields a positive mean $\Delta$ in 6 out of 8 compression scenarios and proves itself more robust than relying on naive language-matching.

\paragraph{Evaluation on instruction-tuned benchmarks}
To assess generation quality beyond likelihood-based benchmarks, we evaluate on AlpacaEval \cite{alpaca_eval}, reporting both win rate and the length-controlled (LC) win rate \cite{dubois2024length}, which also accounts for the length of the outputs. Due to computational cost limitations, we use Llama-3.1-8B-Instruct itself as a local judge rather than GPT-4, which introduces a self-evaluation bias: the judge may systematically favor generations that resemble its own output distribution, independent of true quality. \Cref{tab:alpaca_eval} shows that \methodname matches or slightly exceeds COLA across all four compression methods ($\Delta$ ranging from -0.1 to +1.6 points), these values indicate comparable generation quality. Because the same judge and prompt set are applied identically to both COLA and \methodname calibrated models, any self-evaluation bias is shared across conditions and should not favor one calibration method over the other.

\begin{table}[!ht]
\centering
\begin{adjustbox}{width=\columnwidth}
\begin{tabular}{c c c c c c}
\toprule
\textbf{Model} & \textbf{Method} & \textbf{COLA} & \methodname-SD & \methodname-MD & \textbf{$\Delta$} \\
\midrule
\multirow{5}{*}{\rotatebox[origin=c]{90}{\texttt{Llama-3.1-8B}}}
& Dense & \multicolumn{3}{c}{50.0/28.9} \\
\cmidrule{3-5}
& Wanda (25\%) & 41.3/27.4 & 40.8/27.1 & 41.3/27.4 & \cellcolor{gray!10}0.0 \\
& 2SSP (25\%) & 16.7/12.2 & 10.0/7.2 & 18.3/12.5 & \posdelta{+1.6} \\
& GPTQ (W4A16) & 48.8/32.2 & 48.6/32.0 & 49.1/32.6 & \posdelta{+0.3} \\
& AWQ (W4A16) & 49.1/30.6 & 48.8/32.4 & 49.0/30.4 & \negdelta{-0.1} \\
\bottomrule
\end{tabular}
\end{adjustbox}
\caption{AlpacaEval win rate / length-controlled win rate (\%) over 805 instructions, judged locally with Llama-3.1-8B-Instruct, comparing COLA and \methodname (SD/MD) calibration across compression methods on Llama-3.1-8B-Instruct.}
\label{tab:alpaca_eval}
\end{table}

\paragraph{Evaluating Scalability}
We evaluated the efficiency of \methodname against COLA across various dataset scales, to assess its scalability \Guno. 
While already in \Cref{tab:pruning_single,tab:quantization_single} the runtime superiority of \methodname is clear, we also tried gradually subsampling two datasets, namely ARC-C and WinoGrande, to better understand the computational complexity trade-off between \methodname and COLA. Results are reported in \Cref{fig:scalability}. On a small dataset like ARC-C, COLA needs from a few seconds with a small 3B model to some minutes using a large 70B model, with the forward pass becoming a bottleneck even on a dataset of this size.
We observe a similar trend with WinoGrande, which is an order of magnitude larger than ARC-C. COLA with a 70B model requires over 3 hours, while \methodname takes just 9 seconds. Overall, \methodname is 104$\times$ faster over the 3B model, and achieves up to 1330$\times$ speedup over the 70B model.

\begin{figure}[!t]
    \centering
    \includegraphics[width=\linewidth]{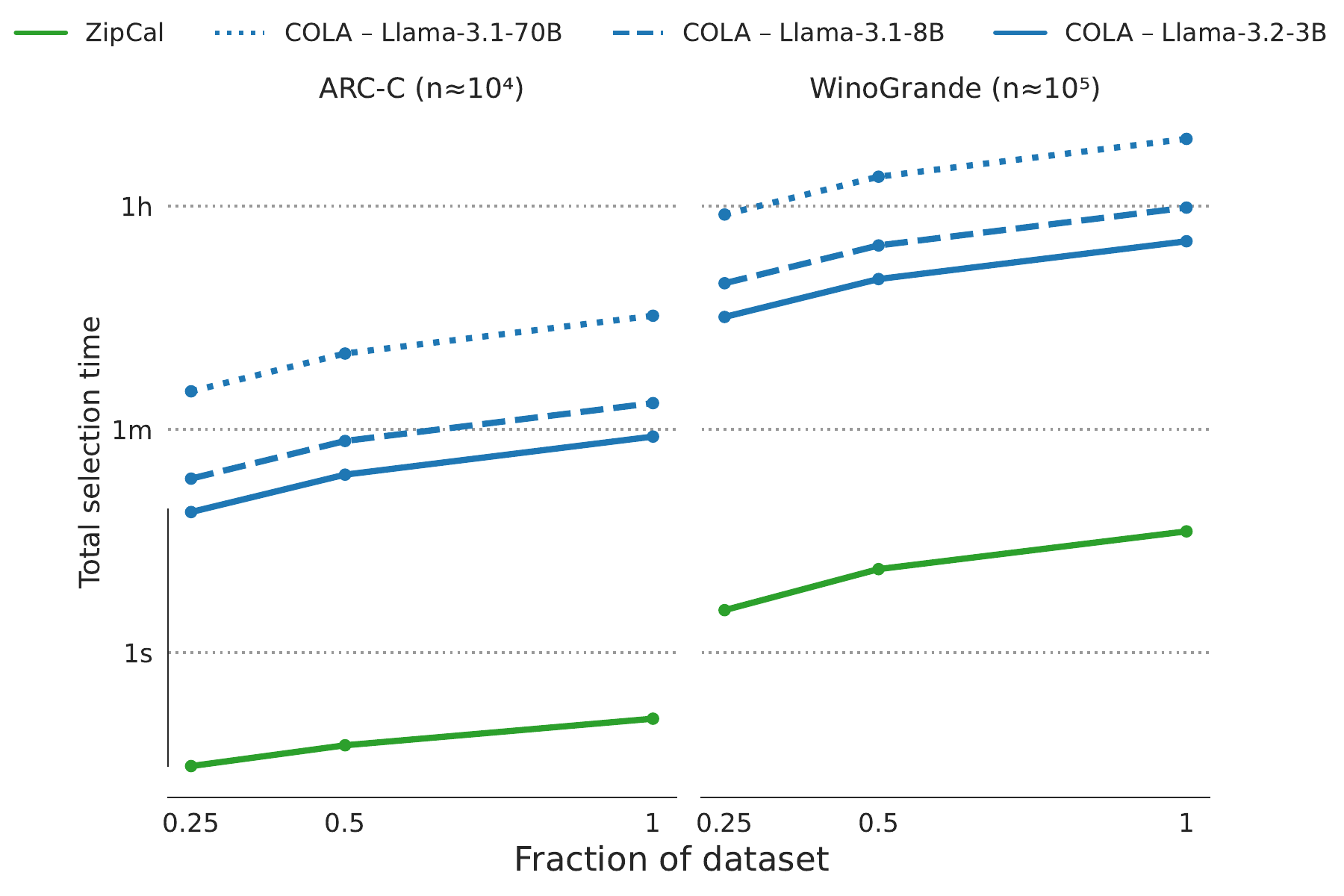}
    \caption{Running time (log-scale) for calibration data selection. The \protect\ulc{COLA}{turchese} baseline is run for models of different sizes; whereas \protect\ulc{\methodname}{foresta} is model-agnostic, thus we report the measurement of the only run.}
    \label{fig:scalability}
\end{figure}

\paragraph{Remarks}
To conclude, the results highlight that while model-based curation becomes computationally prohibitive as LLMs scale in size, our model-agnostic approach \Guno is a near-zero-overhead solution \Gdue that maintains high-fidelity calibration without requiring inference. Crucially, the efficiency gain is model-agnostic; once a calibration set is computed, it is fixed and reusable across different models and compression techniques.
Moreover, the Multi-Domain version of \methodname overcomes the necessity of a \textit{lucky} pick from a plethora of datasets and samples. The proposed sampling approach provides a compressed model that, on average, outperforms models compressed separately on single domains and single languages \Gtre. %(inter-task suboptimality) 
\section{Further Insights}

\paragraph{Ablation on Length-Controlled Samples}
\methodname's selected samples are longer on average than those chosen by COLA. For example, on HellaSwag, the median length of the samples picked by \methodname is 30 tokens, compared to 23 for those picked by COLA. This difference is exacerbated on more complex datasets, such as BoolQ, where \methodname's samples have a median length of 171 tokens, compared to 76 for COLA's samples.
One could think that length could be driving the quality of the results rather than lexical representation.
To isolate the effect of linguistic diversity from the effect of length, we truncate every sample in the source corpus $C$ to $l=64$ tokens, creating $C'$. We run both calibration methods over $C'$ to select the calibration samples $S$, then compression is performed as always. \Cref{tab:summary_trunc} reports the results, \methodname retains an advantage on 2SSP and AWQ even under this strict length control, and remains competitive for the other compression techniques. 

\begin{table}[!t]
\centering
\resizebox{\linewidth}{!}{%
\begin{tabular}{cc|ccc}
\toprule
\textbf{Model} &\textbf{Method} & \textbf{COLA} & \textbf{\methodname} & $\Delta$ \\
\midrule
\multirow{4}{*}{\rotatebox[origin=c]{90}{\texttt{Llama-3.1-8B}}} 
\\[-1em]
& Wanda (25\%) & \textbf{$62.96 \pm 0.15$} & $60.06 \pm 0.17$ & \negdelta{-2.90} \\
\\[-1em]
&2SSP (25\%) & \textbf{$40.39 \pm 0.12$} & $42.86 \pm 0.12$ &  \posdelta{2.47} \\
\\[-1em]
&GPTQ (W4A16) & \textbf{$58.40 \pm 0.15$} & $57.97 \pm 0.15$ & \negdelta{-0.43} \\
\\[-1em]
&AWQ (W4A16)& $58.01 \pm 0.15$ & \textbf{$60.57 \pm 0.15$} & \posdelta{2.56} \\
\\[-1em]

\bottomrule
\end{tabular}
}
\caption{Comparison of COLA vs. \methodname across compression methods on Llama-3.1-8B-Instruct using samples truncated at $l=64$ tokens.}
\label{tab:summary_trunc}
\end{table}

\paragraph{Mechanistic Analysis}
To understand why lexical diversity correlates with more robust compressed models, we recall that \methodname represents a larger vocabulary space (\Cref{fig:zipf_tokens}). This results in a more accurate representation of the original dataset's long tail, increasing entropy in the activations and gathering richer statistics for otherwise dormant parts of the network that would be compressed. This comes with a measurable trade-off: decomposing distributional fidelity into head- and tail-frequency regions, \methodname substantially improves tail-region KL divergence at the cost of increased head-region divergence, confirming that its benefit is concentrated in restoring exposure to the long tail rather than uniformly matching the frequency profile. We define the \textit{overlap} between two models as the ratio between the number of weights compressed in both models and the total number of weights. The overlap between random and \methodname-compressed models is around $90\%$, compared to $96\%$ between random and COLA, with overlap dropping further to $85$–$94\%$ in the attention output projection (\texttt{o\_proj}) and the feed-forward down-projection (\texttt{down\_proj}), the layers where each sub-block's output rejoins the stream, evidence that \methodname rescues weight pathways that random and COLA calibration leave underactivated. Beyond raw rare-token counts, which we find only weakly predictive, \textit{calibration entropy} is the dominant predictor of post-compression resilience, correlating strongly with performance retention ($\rho = 0.94$, $p \leq 0.005$). For an extended analysis, see \Cref{apx:mechanistic}.

\paragraph{Effects of Compression on Perplexity}
Along with the downstream evaluation over Instruct models, we also evaluate \methodname against COLA over Language Modeling tasks (i.e., perplexity). The results reported in \Cref{tab:perplexity_comparison_updated} show that Single-Domain and Multi-Domain \methodname performances are once again on par with COLA. We want to highlight that the calibration sets extracted by \methodname are shared across different models without adaptation, confirming that lexically diverse calibration sets preserve compression quality.

\begin{table}[!ht]
\centering
\begin{adjustbox}{width=\columnwidth}
\begin{tabular}{c c c c c c}
\toprule
\textbf{Model} & \textbf{Method} & \textbf{COLA} & \methodname-SD & \methodname-MD & \textbf{$\Delta$} \\
\midrule

\multirow{5}{*}{\rotatebox[origin=c]{90}{\texttt{Llama-3.1-8B}}}
& Dense & 7.15 & 7.15 & 7.15 \\
\cmidrule{3-5}
& Wanda (25\%) & 7.77 & 7.76 & \textbf{7.67} & \posdelta{-0.10} \\
& 2SSP (25\%) & 16.32 & \textbf{15.81} & 18.52 & \posdelta{-0.51} \\
& GPTQ (W4A16) & \textbf{8.09} & \textbf{8.09} & 8.67 & \cellcolor{gray!10}+0.00 \\
& AWQ (W4A16) & \textbf{7.35} & 7.40 & 7.40 & \negdelta{+0.05} \\
\midrule

\multirow{5}{*}{\rotatebox[origin=c]{90}{\texttt{gemma-2-9B}}}
& Dense & 7.93 & 7.93 & 7.93 \\
\cmidrule{3-5}
& Wanda (25\%) & \textbf{8.13} & 8.22 & \textbf{8.13} & \cellcolor{gray!10}0.00 \\
& 2SSP (25\%) & 12.74 & \textbf{12.41} & 14.18 & \posdelta{-0.33} \\
& GPTQ (W4A16) & \textbf{8.27} & \textbf{8.27} & 8.67 & \cellcolor{gray!10}0.00 \\
& AWQ (W4A16) & 15.84 & 15.75 & \textbf{15.73} & \posdelta{-0.11} \\
\bottomrule
\end{tabular}
\end{adjustbox}
\caption{Language modeling perplexity ($\downarrow$). Comparison between COLA, \methodname, and \methodname (Multi-Domain). $\Delta$ indicates the difference between our best and COLA on Avg. over C4, WikiText, and Pile.}
\label{tab:perplexity_comparison_updated}
\end{table}

\section{Conclusion}
In this paper, we highlighted the limitations of current calibration data selection strategies for compression algorithms, which are either based on pure random sampling or use expensive, model- and task- dependent calibration strategies, limiting performance across multiple models and tasks.
To address these issues, we proposed \methodname, a model-agnostic, computationally cheap data curation strategy for both pruning and quantization approaches, which selects calibration data by following a linguistics-principled Zipfian distribution. Results show that the proposed method performs better than random on majority of the evaluation benchmarks, and more importantly, on par with state-of-the-art data curation approaches while requiring minimal overhead. Furthermore, we also introduced a multi-domain version of \methodname, which applies Zipf sampling hierarchically across different calibration datasets. Results show that the resulting unified multi-domain calibration dataset allows for outperforming single-domain calibration, proving a solution to the aforementioned problem of inter-task sub-optimality.
%does not require any \emph{a priori} knowledge about the downstream tasks, but rather 

\FloatBarrier

\clearpage

%%% SAMPLE DATA
\section*{Limitations}
While we provide extended experimental insights on choosing calibration data based on linguistic diversity, we acknowledge the following limitations to the current study.
We mainly relied on English calibration data and evaluation tasks for the main paper, with an accompanying extension to Spanish and Chinese. Although Zipf's Law is a cross-linguistic phenomenon \cite{piantadosi_zipfs_2014}, the specific ``sanitization'' process (e.g., lowercase conversion and subword marker stripping) may require tuning for morphologically rich languages or non-alphabetic scripts.
Our experiments focus on three specific families of LLMs (Llama-3.1-8B, Gemma-2-9B, and Qwen3-8B) and four compression methods. It remains to be seen how lexical diversity as a curation proxy behaves for Mixture-of-Experts (MoE) models, where activation routing might necessitate a different balance of data to ensure all ``experts'' are adequately calibrated. Similarly, extending this approach to Multimodal Large Language Models (MLLMs) would require a multimodal analogue of lexical diversity that captures the distributional properties of non-textual tokens, such as visual, audio, or video modality.

\section*{Ethics Statement}
This work focuses on efficient data curation for model compression. All models, datasets, and benchmarks used in this research are publicly available with appropriate licenses. We credit original authors throughout the manuscript and acknowledge their contributions. We acknowledge that data selection processes can inadvertently amplify existing biases in the source data, and we have reported detailed performance statistics across our benchmarks. We also recognize that efficient techniques for deploying LLMs accelerate AI adoption, potentially leading to misuse. However, our focus remains on the foundational technical aspects of data curation rather than specific downstream applications of AI.

\section*{Acknowledgments}
%``ARCHitectures based on unconventional accelerators for dependable/energY efficienT AI Systems'' 
This work was supported in part by the UE GA n. 101167870 "ARCHitectures based on unconventional accelerators for dependable/energY efficienT AI Systems" - ARCHYTAS. Views and opinions expressed are however those of the authors only and do not necessarily reflect those of the European Union or the European Commission. Neither the European Union nor the granting authority can be held responsible for them. 

% Bibliography entries for the entire Anthology, followed by custom entries
%\bibliography{anthology,custom}
% Custom bibliography entries only
\bibliography{library}

\clearpage
\appendix
\startlist{toc}
\label{sec:appendix}
\printlist{toc}{}{\bfseries\large Appendix Table of Contents\vskip 5ex}
\vskip 2ex

\section{Detailed Algorithms and Proofs}\label{apx:extra_algo}
\setcounter{table}{0}
\renewcommand{\thetable}{A\arabic{table}}
\setcounter{figure}{0}
\renewcommand{\thefigure}{A\arabic{figure}}
\paragraph{Multi-Domain Sampling}
We present here the pseudocode for the Multi-Domain selection procedure, \Cref{algo:multi}, and prove \Cref{lemma:multi}, the time complexity of \methodname in this setting. For the lightweight embedding used for all our experiments, we relied on \texttt{all-MiniLM-L6-v2} \cite{wang_minilm_2020}.
\begin{proof}
Given $m$ datasets $D_i:i\in(1,m)$.
In the first stage, Single-Domain \methodname is executed for each dataset. Let $n_i$ be the size of dataset $\mathcal{D}_i$. From \Cref{lemma:sampling}, the complexity for each source is $O(n_i k)$. An upper bound to this operation is $mkN$, where $N = \max_{D_i} (n_i) $
The second stage involves the $k$-centers greedy selection on a pool of size $mk$. At each of the $k$ iterations, we calculate the distance between the remaining candidates and the current set $S$. The cost of this stage is $O(k \cdot (mk) \cdot d)$, where $d$ is the dimensionality of the sample representation. Since $m$ and $k$ are typically $<<N$, the term $mkd$ is negligible. Thus, the overall complexity is dominated by the initial \methodname pass, yielding $O(mkN)$.
\end{proof}
\begin{algorithm}[t]
\KwIn{Collection of datasets $\mathbb{D} = \{\mathcal{D}_1, \dots, \mathcal{D}_m\}$; Final budget $k$}\KwOut{Multi-Domain calibration set $S$}\caption{Multi-Domain \methodname \label{algo:multi}}
$\mathcal{P} \gets \emptyset$\;
\ForEach{$\mathcal{D}_i \in \mathbb{D}$}{
    \tcp{Step 1: Single-Domain \methodname}
    $P_i \gets \text{\methodname}(\mathcal{D}_i, k)$ \;
    $\mathcal{P} \gets \mathcal{P} \cup P_i$ \;
}

\tcp{Step 2: Global K-Centers Greedy Selection}
$\forall s \in P \text{ create a lightweight embedding}$\;
$s_1 \gets \text{select random } s \in \mathcal{P}$\;
$S \gets \{s_1\}$\;
\For{$j = 2$ \KwTo $k$}{
    $s^* \gets \arg\max_{s \in \mathcal{P} \setminus S} \left( \min_{z \in S} \text{dist}(s, z) \right)$ \;
    $S \gets S \cup \{s^*\}$ \;
}
\Return $S$ \;
\end{algorithm}

%%%%OTHER SECTION
\section{Complementary Results to the Main Matter}\label{apx:results}
We report here additional tables that were omitted from the main paper due to length constraints. In particular, \Cref{tab:awq_words_dataset_updated,tab:2ssp_words_dataset} show the comparison between COLA and Single-Domain \methodname under AWQ and 2SSP compression, respectively. Standard deviations on all tables are derived from 3 seeded runs.

\begin{figure}[t]
    \centering
    \includegraphics[width=\linewidth]{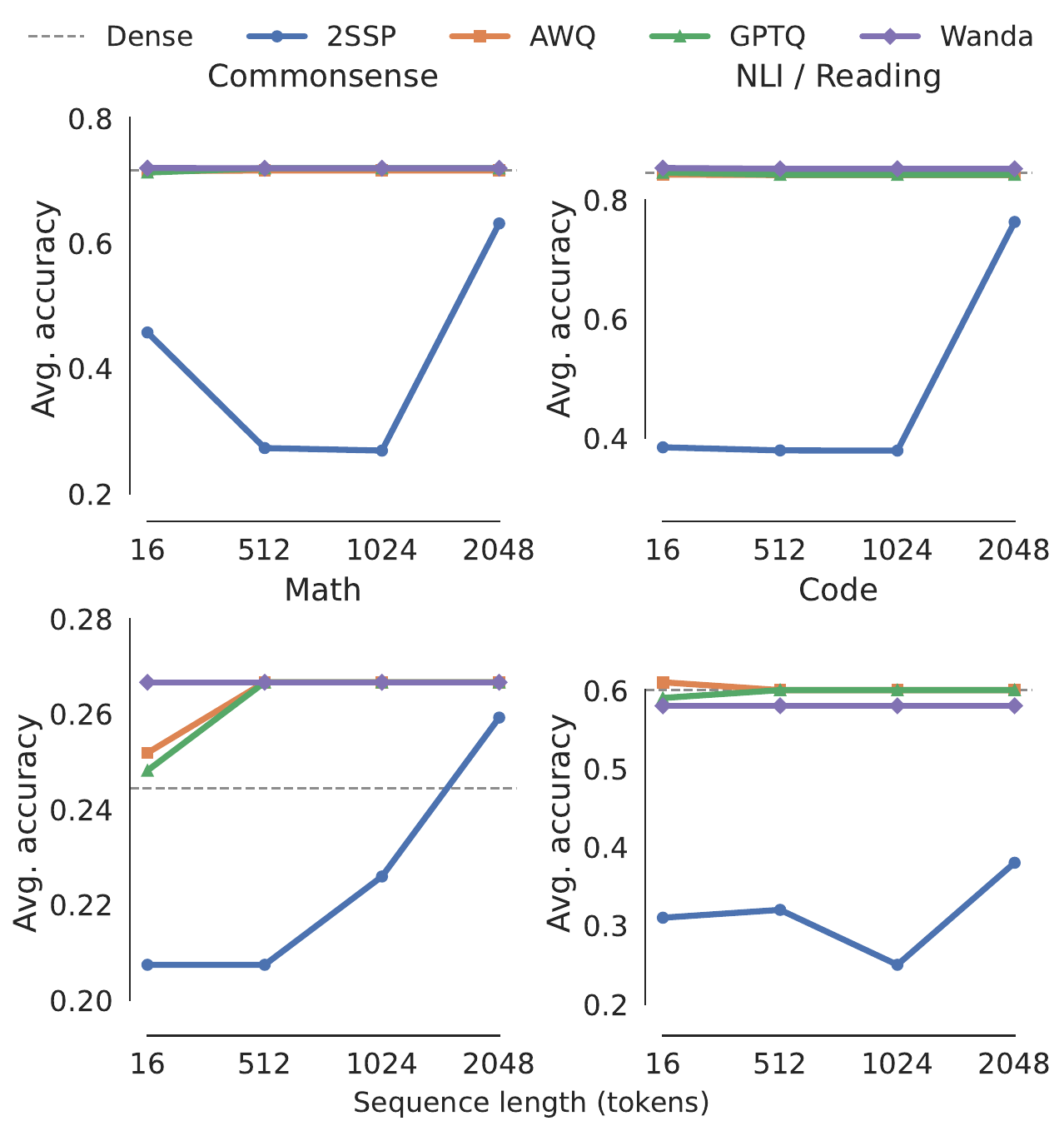}
    \caption{Effect of calibration data context length on model capabilities across compression techniques for \texttt{LLaMA-3.1-8B-Instruct}.}
    \label{fig:context-len}
\end{figure}

\begin{figure}[t]
    \centering
    \includegraphics[width=\linewidth]{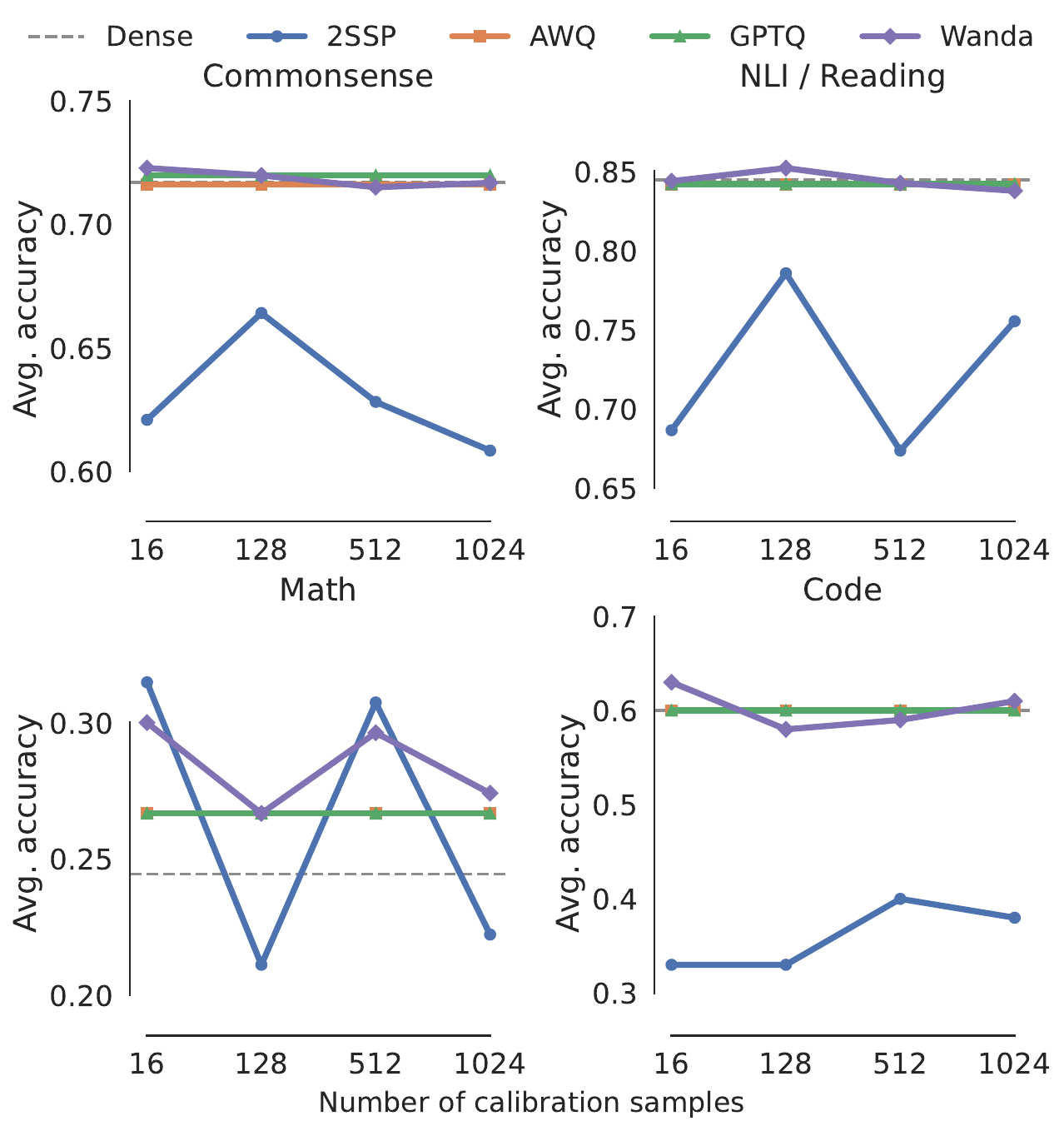}
    \caption{Effect of the number of calibration data samples on model capabilities across compression techniques for \texttt{LLaMA-3.1-8B-Instruct}.}
    \label{fig:sample-size}
\end{figure}

\paragraph{Expanding the Model-Agnostic Claim}
We extend our single-domain evaluation to Qwen3~\cite{yang_qwen3_2025}, an architecturally distinct third family (QK-norm, unlike Llama's pre-norm-only~\cite{grattafiori_llama_2024} or Gemma~2's pre+post-norm design~\cite{team_gemma_2024}). Results in \Cref{tab:qwen_results} confirm that our main claims hold: \methodname matches or exceeds COLA across compression methods, despite using calibration data selected in a fraction of the time.

\paragraph{Effects of Context Length}
We investigate the effect of calibration sequence length by testing across a set of values $w\in[16,2048]$ and setting the number of samples $k=128$. As shown in \Cref{fig:context-len}, we find that for most compression techniques, no significant improvement is observed by increasing the context window. Performance is flat even at $w=16$, which suggests that lexically diverse calibration sets work even at small scales. This trend seems to be characterized by the compression technique itself rather than the length of the calibration set. For instance, 2SSP shows extreme sensitivity and strictly requires the longest context length to recover performance. The reason is that 2SSP removes entire attention submodules, which requires capturing long-range relationships; at small context length, this information is removed and leads to biased pruning.
Our results are in line with the comprehensive analysis of \cite{oh_beyond_2025}.

\paragraph{Effects of Calibration Sample Size}
To further highlight the efficiency of \methodname, we analyze the impact of the number of samples in the calibration set by testing $k\in[16,2048]$ with context length $w=2048$. The results in \Cref{fig:sample-size} show great stability among capabilities and the number of samples. Once again, 2SSP shows greater sensitivity to sample count. Surprisingly, Code and Math capabilities actually benefit from a smaller number of samples (38\% vs. 31\% using 16 vs. 1024 samples, respectively, for Math capabilities after 2SSP), finding a negligible decrease. Compared to baselines such as COLA \cite{he_preserving_2025}, which exhibit some capability shifts at larger sample sizes, \methodname provides a much more stable and predictable function.

\paragraph{Scalability to Larger Language Models}
To investigate if \methodname data curation is effective on larger models, we repeat the set of experiments about Single-Domain and Multi-Domain data curation on Llama-3.1-70B-Instruct. For Single-Domain data curation, we can see in \Cref{tab:70Bresults} that the patterns seen on smaller models hold even in this case, with performances that are on par with COLA but require much less time to extract the curation set of samples \Guno, increasing the  $188\times$ speedup of the 8B model to $404\times$. On Multi-Domain curation, the advantages of \methodname are even more pronounced, \Cref{tab:cola_vs_mix_70B}. For the retention of mathematical capabilities, \methodname is able to protect the information flow with performance that is very close to the dense model. On the other hand, COLA completely fumbles these tasks, leading to a net delta on the overall quality of the results that averages $+3.38$.

\begin{figure*}[t]
    \centering
    \includegraphics[width=\linewidth]{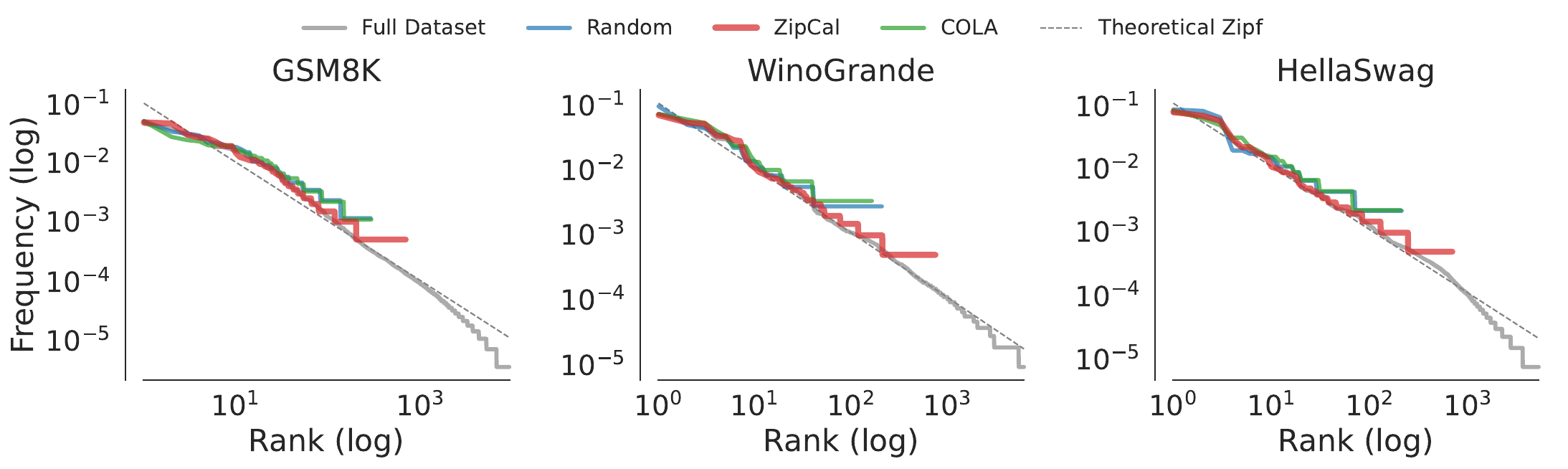}
    \caption{Token frequency distribution of the original datasets and the random, COLA, and \methodname sampling calibration sets using  16 samples.}
    \label{fig:token_tinysample}
\end{figure*}

\begin{figure*}[t]
    \centering
    \includegraphics[width=\linewidth]{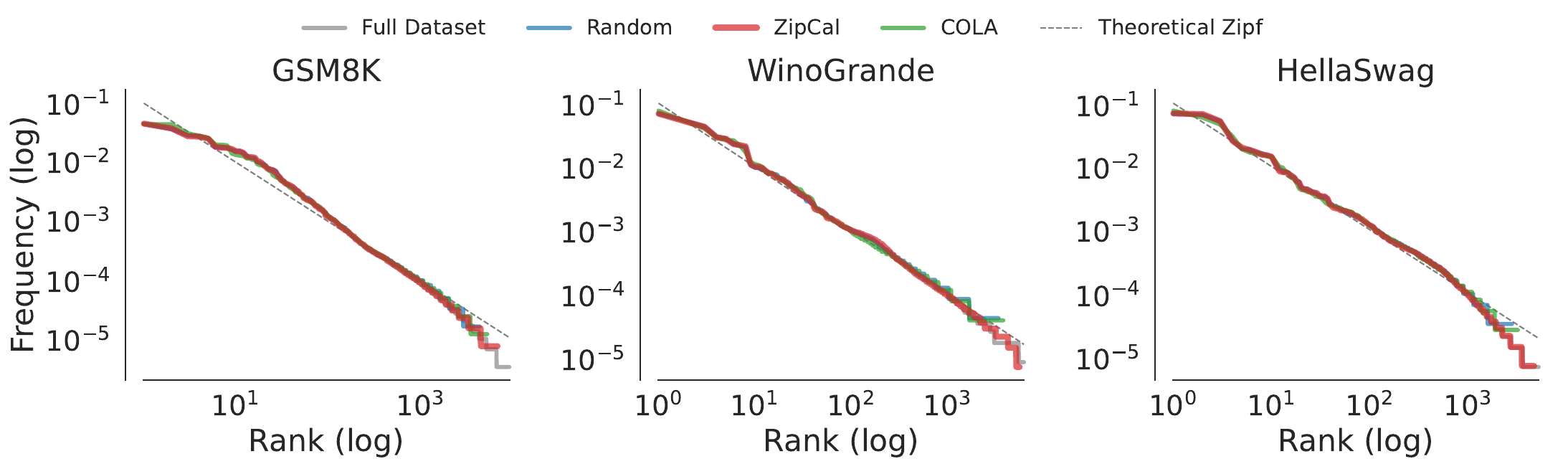}
    \caption{Token frequency distribution of the original datasets and the random, COLA, and \methodname sampling calibration sets using  1024 samples.}
    \label{fig:token_bigsample}
\end{figure*}

\section{Assessing Zipf Coverage}
As illustrated in \Cref{fig:zipf_tokens}, \methodname achieves a vocabulary coverage that better reflects the original dataset.
When restricted to a standard budget of 128 samples, \methodname identifies by construction the subset that yields the longest possible Zipfian tail. On the other hand, random sampling would need a significantly higher number of samples in order to cover the same vocabulary space. COLA's selection criterion leads to a similar distribution to random sampling. 
This advantage in coverage is especially noticeable in lexically rich corpora like WinoGrande and HellaSwag, where the \textit{gap} between \methodname and competing baselines reaches up to half an order of magnitude. At extremely tiny samples (i.e., 16), \Cref{fig:token_tinysample}, the difference is even more pronounced, with \methodname being the only one capable of covering more than half of the original vocabulary. For large amounts of samples, \Cref{fig:token_bigsample}, \methodname actually covers the whole vocabulary space compared to the other baselines.
A qualitative analysis on the selected samples reveals that \methodname selects samples with significantly higher median lengths and Type-Token Ratios (TTR). This advantage is especially noticeable in lexically rich corpora like WinoGrande and HellaSwag, where the gap between \methodname and competing baselines reaches half an order of magnitude, capturing the encyclopedic context that others ignore. We provide some examples in Boxes~\ref{box:hellaswag}~and~\ref{box:boolq}.

\begin{samplebox}[float=ht!,fontupper=\small,fontlower=\small]{HellaSwag}{hellaswag}
\textbf{\methodname} \textit{Median Length: 30.0, Unique Words: 1455}
\begin{itemize}[leftmargin=*] \itemsep0em
    \item A man has shaving cream on his face and reads a label on a pressurized can and discusses it. a man Shaving
    \item Afterwards, he does a light sanding of the table. With a tack rug, he removes the dust. Wearing gloves, he applies a stain to the table. finally Polishing forniture
    \item Two men walked over the football table, one man counted the dice on the side. The players twist and push and pull the poles as they play on the table. the camera men Table soccer
\end{itemize}

\tcblower % Adds a dashed line to separate the sections

\textbf{COLA} \textit{Median Length: 23.0, Unique Words: 927}
\begin{itemize}[leftmargin=*]  \itemsep0em
    \item There's a man in a gym using an elliptical machine. he Elliptical trainer
    \item There's a man in a gym using an elliptical machine. He is wearing a pair of black track pants, black shirt and a cap. he Elliptical trainer
    \item This girl is doing a video of how to make your white shoes become sparkly clean. you Cleaning shoes
\end{itemize}
\end{samplebox}

\begin{samplebox}[float=ht!,fontupper=\small,fontlower=\small]{BoolQ}{boolq}
\textbf{\methodname} \textit{Median Length: 171.5, Unique Words: 8451}
\begin{itemize}[leftmargin=*]  \itemsep0em
    \item  The black turtle bean is a small, shiny variety of the common bean (Phaseolus vulgaris), especially popular in Latin American cuisine, though it can also be found in Cajun and Creole cuisines of south Louisiana. Like most common beans, they are native to the Americas, but have been introduced around the world. They are also used in East Indian cooking, Punjabi cuisine and are referred to as black beans and in Maharshtrian cuisine known as ``Kala Ghevada''. They are used interchangeably with vigna mungo in countries such as the US. They are often simply called black beans (frijoles negros, zaragoza, judía negra, poroto negro, caraota o habichuela negra in Spanish, and feijão preto in Portuguese), although this can cause confusion with other black beans. are black beans the same as turtle beans.
    \item The story begins as a girl named Sally and her brother, who serves as the narrator of the book, sit alone in their house on a cold, rainy day, staring wistfully out the window. Then they hear a loud bump which is quickly followed by the arrival of the Cat in the Hat, a tall anthropomorphic cat in a red and white striped hat and a red bow tie. The Cat proposes to entertain the children with some tricks that he knows. The children's pet fish refuses, insisting that the Cat should leave. The Cat responds by balancing the fish on the tip of his umbrella. The game quickly becomes increasingly trickier, as the Cat balances himself on a ball and tries to balance lots of household items on his limbs until he falls on his head, dropping everything he was holding. The fish admonishes him again, but the Cat in the Hat just proposes another game. does the cat in the hat have a name
\end{itemize}

\tcblower

\textbf{COLA} \textit{Median Length: 76.0, Unique Words: 4204}
\begin{itemize}[leftmargin=*]  \itemsep0em
    \item The Stanley Cup playoffs consists of four rounds of best-of-seven series. Each series is played in a 2--2--1--1--1 format...
    \item United States Army Basic Training (also known as Initial Entry Training, IET) is the recruit training program of physical and mental preparation...
\end{itemize}
\end{samplebox}
\balance
\section{Extended Mechanistic Analysis}\label{apx:mechanistic}
To mechanistically understand why maximizing lexical diversity yields a more robust compressed model, we must examine the phenomenon of small-sample vocabulary collapse. In natural language, tokens follow a strict Zipfian distribution. When extracting a purely random, small calibration set (e.g., 128 samples), the long-tail parameters are probabilistically excluded. This leads to a calibration subset heavily biased towards highly frequent stop-words. Consequently, compression models evaluate weight importance on a flattened latent space, pruning the sub-networks tied to rare sequences.

Our proposed method, \methodname, intentionally mitigates this collapse by selecting the sample set to mimic the macroscopic Zipfian structure of the entire dataset, exposing the compression technique to the long-tail tokens. We validate this mechanistically through three distinct empirical observations on Llama-3.1-8B-Instruct.
\begin{itemize}
    \item \textbf{Distributional Fidelity (KL Divergence)}: We measured the Kullback-Leibler (KL) Divergence between the calibration samples ($k=128$) and the full datasets, decomposed by frequency region: the head (top 10\% most frequent tokens) and the tail (remaining 90\%). We measure this for 5 datasets (BoolQ, WinoGrande, ARC-C, GSM8K, HellaSwag). \methodname shows a consistent tradeoff: substantially improved tail-region matching (KL(tail) drops by 3.3–61.0\%, e.g., from 16.40 to 7.87 on HellaSwag), paired with increased head-region divergence (KL(head) rises by 62.1–168.1\% across datasets, e.g., from 1.02 to 2.73 on HellaSwag). This confirms that \methodname's benefit is concentrated in restoring exposure to the long tail of the distribution, the intended design goal and that this concentration comes at a cost to head-region fidelity. Since \methodname reallocates probability mass toward tail tokens to counteract their underrepresentation, it necessarily samples head-region tokens less exhaustively than random sampling would, which by construction saturates the head almost immediately.
    \item \textbf{Pruning Mask Overlap:} To verify how this distributional shift physically alters the pruning decisions, we computed the overlap between the masks generated by our 128-sample sets and both a small and large-scale mask derived from 128 and 1024 random samples. Because the datasets consist of high-frequency stop-words, both a small random sample (128) and a large random sample (1024) are dominated by the exact same "head" of the distribution. As a result, the 128-sample random mask is virtually identical to the 1024-sample mask (~98-99\% overlap). This proves that simply increasing the random sample size, or using a random subset, just repeatedly reinforces the same weights tied to highly frequent tokens, ignoring the tail. In contrast, \methodname intentionally diverges from this frequency-biased consensus, with mask overlaps dropping to 85\%-94\% (especially evident in the \texttt{o\_proj} and \texttt{down\_proj} layers). This systematic divergence confirms that \methodname rescues distinct pathways associated with rare concepts that standard sampling would otherwise discard.
    \item \textbf{Calibration Entropy as the Dominant Predictor:} Finally, we correlated various distributional properties of the calibration sets with the post-compression performance drop ($\Delta$). Interestingly, simply counting the inclusion of rare tokens is not significantly predictive. Instead, the absolute \textbf{Calibration Entropy} exhibits an overwhelming correlation with post-compression resilience (Pearson's $\rho=0.94,p\leq0.005$).
\end{itemize}

In conclusion, our mechanistic analysis proves that the resilience of the \methodname-compressed model is not just a byproduct of encountering more words, but stems from restoring the entropy of the original corpus in miniature. By forcing a globally representative lexical space during calibration, intermediate layer activations maintain their fidelity across the true latent space, preserving the capabilities of the pruned network.

\begin{table*}[t]
\centering
\resizebox{\textwidth}{!}{%
\begin{tabular}{l|l||c||ccccc|c||ccccc|c|c}
\toprule
& & & \multicolumn{6}{c||}{\textbf{COLA}} & \multicolumn{6}{c}{\methodname} \\
\cmidrule(lr){4-9} \cmidrule(lr){10-15}
&&& \multicolumn{5}{c}{\textbf{Calibration Category}} && \multicolumn{5}{c}{\textbf{Calibration Category}} & \\
\textbf{Model} & \textbf{Task} & \textbf{Dense} & \cellcolor{gray!7}LangMod & \cellcolor{blue!7}Math & \cellcolor{orange!7}CommQA & \cellcolor{purple!7}NLI & \cellcolor{cyan!7}KnowTran & \textbf{Mean} & \cellcolor{gray!7}LangMod & \cellcolor{blue!7}Math & \cellcolor{orange!7}CommQA & \cellcolor{purple!7}NLI & \cellcolor{cyan!7}KnowTran & \textbf{Mean} & \textbf{$\Delta$} \\
\midrule
\multirow{13}{*}{\rotatebox[origin=c]{90}{\textbf{Llama-3.1-8B-Instruct}}} 
& \cellcolor{blue!7}MMLU-M & 24.44 & 24.40 & \cellcolor{blue!7}24.60 & 24.30 & 24.40 & 24.50 & \textbf{24.44} & 24.50 & \cellcolor{blue!7}24.30 & 24.40 & 24.20 & 24.70 & {24.42} & \textbf{-0.02} \\
& \cellcolor{blue!7}GSM8k & 78.09 & 72.35 & \cellcolor{blue!7}62.51 & 69.67 & 69.60 & 70.77 & \textbf{68.98} & 71.80 & \cellcolor{blue!7}63.10 & 68.90 & 70.10 & 70.50 & {68.88} & \textbf{-0.10} \\
& \cellcolor{orange!7}HellaSwag & 71.71 & 72.20 & 71.38 & \cellcolor{orange!7}71.28 & 72.10 & 71.45 & \textbf{71.68} & 72.40 & 71.10 & \cellcolor{orange!7}71.50 & 71.80 & 71.60 & \textbf{71.68} & \textbf{0.00} \\
& \cellcolor{orange!7}WinoGr. & 69.46 & 67.19 & 67.05 & \cellcolor{orange!7}67.52 & 65.98 & 66.02 & \textbf{66.75 }& 66.90 & 67.30 & \cellcolor{orange!7}67.20 & 66.50 & 65.80 & {66.74} & \textbf{-0.01} \\
& \cellcolor{orange!7}OBQA & 47.80 & 46.60 & 44.90 & \cellcolor{orange!7}46.50 & 45.20 & 46.30 & \textbf{45.90} & 46.80 & 44.50 & \cellcolor{orange!7}46.10 & 45.50 & 46.50 & 45.88 & \textbf{-0.02} \\
& \cellcolor{orange!7}BoolQ & 84.46 & 83.92 & 84.28 & \cellcolor{orange!7}84.20 & 83.70 & 84.24 & \textbf{84.07 }& 83.50 & 84.40 & \cellcolor{orange!7}83.80 & 84.10 & 84.00 & {83.96} & \textbf{-0.11} \\
& \cellcolor{purple!7}RTE & 74.73 & 74.01 & 72.92 & 73.47 & \cellcolor{purple!7}72.02 & 73.10 & 73.10 & 73.80 & 73.10 & 73.00 & \cellcolor{purple!7}72.50 & 73.30 & \textbf{73.14} & \textbf{+0.04} \\
& \cellcolor{purple!7}ANLI & 58.40 & 56.33 & 50.95 & 53.40 & \cellcolor{purple!7}53.30 & 53.80 & \textbf{53.56} & 55.80 & 51.50 & 53.10 & \cellcolor{purple!7}53.90 & 53.40 & {53.54} & \textbf{-0.02} \\
& \cellcolor{cyan!7}ARC-C & 51.71 & 50.03 & 51.41 & 49.62 & 50.04 & \cellcolor{cyan!7}49.96 & 50.21 & 50.50 & 50.80 & 49.90 & 49.70 & \cellcolor{cyan!7}50.30 & \textbf{50.24} & \textbf{+0.03} \\
& \cellcolor{cyan!7}ARC-E & 73.86 & 73.36 & 71.72 & 72.14 & 72.69 & \cellcolor{cyan!7}72.85 & \textbf{72.55} & 72.90 & 71.90 & 71.80 & 72.80 & \cellcolor{cyan!7}72.50 & {72.38} & \textbf{-0.17} \\
& \cellcolor{cyan!7}MMLU-K & 62.28 & 58.52 & 58.52 & 58.31 & 58.11 & \cellcolor{cyan!7}59.12 & \textbf{58.52} & 58.10 & 58.80 & 58.15 & 58.30 & \cellcolor{cyan!7}58.90 & {58.45} & \textbf{-0.07} \\
\cmidrule{2-16}
& \textbf{Mean} & 63.36 & 60.81 & 59.97 & 60.84 & 60.72 & 61.13 & \textbf{60.89} & 60.64 & 60.07 & 60.67 & 60.85 & 61.17 & \textbf{60.85} & \negdelta{\textbf{-0.04}} \\
& & & & & & & & $\pm 0.98$ & & & & & & $\pm 0.75$\\
 \rowcolor{yellow!30} \cellcolor{white}& \textbf{Runtime} & & 5400s & 36s & 3240s & 2160s & 1380s & 2443s & 15.2s & 2.3s & 12.3s & 9.3s & 14.5s & \textbf{10.7s} & \textbf{228$\times$}\\\midrule
\multirow{13}{*}{\rotatebox[origin=c]{90}{\textbf{gemma-2-9b-it}}} 
& \cellcolor{blue!7}MMLU-M & 21.48 & 21.40 & \cellcolor{blue!7}21.40 & 21.35 & 21.45 & 21.40 & \textbf{21.40} & 21.40 & \cellcolor{blue!7}21.35 & 21.30 & 21.45 & 21.45 & \textbf{21.39} & \textbf{-0.01} \\
& \cellcolor{blue!7}GSM8k & 75.44 & 74.17 & \cellcolor{blue!7}74.91 & 74.07 & 73.96 & 74.00 & \textbf{74.22} & 74.50 & \cellcolor{blue!7}74.50 & 73.80 & 74.20 & 73.70 & 74.14 & \textbf{-0.08} \\
& \cellcolor{orange!7}HellaSwag & 67.24 & 67.25 & 67.81 & \cellcolor{orange!7}68.04 & 67.39 & 67.42 & \textbf{67.58} & 66.90 & 67.90 & \cellcolor{orange!7}67.50 & 67.50 & 67.20 & 67.40 & \textbf{-0.18} \\
& \cellcolor{orange!7}WinoGr. & 70.48 & 69.53 & 69.65 & \cellcolor{orange!7}69.73 & 69.34 & 69.65 & \textbf{69.58} & 69.80 & 69.20 & \cellcolor{orange!7}69.90 & 69.10 & 69.40 & 69.48 & \textbf{-0.10} \\
& \cellcolor{orange!7}OBQA & 45.40 & 45.47 & 44.70 & \cellcolor{orange!7}45.20 & 45.50 & 45.00 & \textbf{45.17} & 45.10 & 45.00 & \cellcolor{orange!7}44.90 & 45.80 & 44.80 & {45.12} & \textbf{-0.05} \\
& \cellcolor{orange!7}BoolQ & 88.59 & 88.53 & 88.53 & \cellcolor{orange!7}88.61 & 88.58 & 88.59 & \textbf{88.57} & 88.70 & 88.40 & \cellcolor{orange!7}88.50 & 88.70 & 88.40 & {88.54} & \textbf{-0.03} \\
& \cellcolor{purple!7}RTE & 78.34 & 79.06 & 77.26 & 76.90 & \cellcolor{purple!7}77.26 & 77.44 & \textbf{77.58} & 78.50 & 77.50 & 76.50 & \cellcolor{purple!7}77.00 & 77.60 & {77.42} & \textbf{-0.16} \\
& \cellcolor{purple!7}ANLI & 72.80 & 71.77 & 72.10 & 71.90 & \cellcolor{purple!7}70.70 & 72.35 & \textbf{71.76} & 72.00 & 71.80 & 72.10 & \cellcolor{purple!7}70.50 & 72.10 & {71.70} & \textbf{-0.06} \\
& \cellcolor{cyan!7}ARC-C & 51.79 & 51.11 & 51.49 & 51.41 & 51.58 & \cellcolor{cyan!7}50.81 & \textbf{51.28} & 50.80 & 51.70 & 51.20 & 51.80 & \cellcolor{cyan!7}50.50 & 51.20 & \textbf{-0.08} \\
& \cellcolor{cyan!7}ARC-E & 66.79 & 66.40 & 66.35 & 67.09 & 66.88 & \cellcolor{cyan!7}66.58 & \textbf{66.66} & 66.60 & 66.10 & 66.80 & 67.10 & \cellcolor{cyan!7}66.30 & {66.58} & \textbf{-0.08} \\
& \cellcolor{cyan!7}MMLU-K & 33.83 & 35.73 & 28.90 & 30.12 & 31.79 & \cellcolor{cyan!7}33.46 & \textbf{32.00} & 35.10 & 29.50 & 29.80 & 32.10 & \cellcolor{cyan!7}33.10 & {31.92} & \textbf{-0.08} \\

\cmidrule{2-16}
& \textbf{Mean} & 61.11 & 60.95 & 60.26 & 60.22 & 60.40 & 60.61 & \textbf{60.53} & 61.03 & 60.27 & 60.21 & 60.48 & 60.41 & \textbf{60.44} & \negdelta{\textbf{-0.09}} \\
& & & & & & & & $\pm 0.84$ & & & & & & $\pm 0.86$\\
 \rowcolor{yellow!30} \cellcolor{white}&\textbf{Runtime} & & 6231s & 149s & 3400s & 2671s & 1500s & 2790s & 15.2s & 2.3s & 12.3s & 9.3s & 14.5s & \textbf{10.7s} & \textbf{260$\times$} \\
\bottomrule
\end{tabular}
}
\caption{Comparison of COLA vs.\ \methodname calibration data under AWQ W4A16 compression.}
\label{tab:awq_words_dataset_updated}

\end{table*}

\begin{table*}[t]
\centering
\resizebox{\textwidth}{!}{%
\begin{tabular}{l|l||c||ccccc|c||ccccc|c|c}
\toprule
& & & \multicolumn{6}{c||}{\textbf{COLA}} & \multicolumn{6}{c}{\methodname} \\
\cmidrule(lr){4-9} \cmidrule(lr){10-15}
&&& \multicolumn{5}{c}{\textbf{Calibration Category}} && \multicolumn{5}{c}{\textbf{Calibration Category}} & \\
\textbf{Model} & \textbf{Task} & \textbf{Dense} & \cellcolor{gray!7}LangMod & \cellcolor{blue!7}Math & \cellcolor{orange!7}CommQA & \cellcolor{purple!7}NLI & \cellcolor{cyan!7}KnowTran & \textbf{Mean} & \cellcolor{gray!7}LangMod & \cellcolor{blue!7}Math & \cellcolor{orange!7}CommQA & \cellcolor{purple!7}NLI & \cellcolor{cyan!7}KnowTran & \textbf{Mean} & \textbf{$\Delta$} \\
\midrule
\multirow{13}{*}{\rotatebox[origin=c]{90}{\textbf{Llama-3.1-8B-Instruct}}} 
& \cellcolor{blue!7}MMLU-M & 24.44 & 22.30 & \cellcolor{blue!7}23.40 & 21.80 & 21.10 & 22.50 & \textbf{22.22} & 22.10 & \cellcolor{blue!7}22.90 & 21.50 & 21.30 & 22.00 & {21.96} & \textbf{-0.26} \\& \cellcolor{blue!7}GSM8k & 78.09 & 6.34 & \cellcolor{blue!7}13.68 & 0.00 & 0.00 & 4.36 & \textbf{4.88} & 4.98 & \cellcolor{blue!7}8.42 & 0.23 & 3.87 & 5.65 & {4.63} & \textbf{-0.25} \\
& \cellcolor{orange!7}HellaSwag & 71.71 & 61.32 & 60.53 & \cellcolor{orange!7}60.66 & 57.32 & 57.21 & 59.41 & 60.26 & 61.29 & \cellcolor{orange!7}58.73 & 59.30 & 59.23 & \textbf{59.76} & \textbf{+0.35} \\
& \cellcolor{orange!7}WinoGr. & 69.46 & 60.69 & 58.68 & \cellcolor{orange!7}59.04 & 60.18 & 61.56 & 60.03 & 61.69 & 59.63 & \cellcolor{orange!7}60.62 & 61.80 & 61.72 & \textbf{61.09} & \textbf{+1.06} \\
& \cellcolor{orange!7}OBQA & 47.80 & 41.93 & 38.10 & \cellcolor{orange!7}41.80 & 40.50 & 38.90 & 40.25 & 40.60 & 39.90 & \cellcolor{orange!7}41.60 & 40.80 & 40.70 & \textbf{40.72} & \textbf{+0.47} \\
& \cellcolor{orange!7}BoolQ & 84.46 & 75.64 & 71.36 & \cellcolor{orange!7}69.36 & 71.64 & 74.85 & \textbf{72.57} & 74.01 & 72.66 & \cellcolor{orange!7}67.06 & 75.46 & 70.60 & {71.96} & \textbf{-0.61} \\
& \cellcolor{purple!7}RTE & 74.73 & 66.43 & 65.52 & 63.00 & \cellcolor{purple!7}71.12 & 69.13 & \textbf{67.04} & 67.99 & 59.93 & 64.98 & \cellcolor{purple!7}65.34 & 71.12 & {65.87} & \textbf{-1.17} \\
& \cellcolor{purple!7}ANLI & 58.40 & 47.63 & 39.75 & 45.00 & \cellcolor{purple!7}45.85 & 44.25 & \textbf{44.50} & 40.53 & 35.10 & 37.30 & \cellcolor{purple!7}36.05 & 39.05 & {37.61} & \textbf{-6.89} \\
& \cellcolor{cyan!7}ARC-C & 51.71 & 39.51 & 39.04 & 42.15 & 39.16 & \cellcolor{cyan!7}38.69 & \textbf{39.71} & 38.88 & 37.03 & 39.51 & 36.22 & \cellcolor{cyan!7}38.44 & {38.01} & \textbf{-1.70} \\
& \cellcolor{cyan!7}ARC-E & 73.86 & 61.91 & 62.61 & 65.85 & 63.97 & \cellcolor{cyan!7}61.70 & \textbf{63.21} & 63.13 & 58.92 & 64.27 & 57.47 & \cellcolor{cyan!7}60.71 & {60.90} & \textbf{-2.31} \\
& \cellcolor{cyan!7}MMLU-K & 62.28 & 39.87 & 34.59 & 33.25 & 37.68 & \cellcolor{cyan!7} 32.54 & \textbf{35.59} & 38.27 & 26.98 & 29.89 & 29.85 & \cellcolor{cyan!7}31.30 & {31.26} & \textbf{-4.33} \\

\cmidrule{2-16}
& \textbf{Mean} & 63.36 & 47.60 & 46.12 & 45.63 & 46.23 & 45.97 & \textbf{46.31} & 46.59 & 43.89 & 44.15 & 44.31 & 45.50 & {44.89} & \negdelta{\textbf{-1.42}} \\
& & & & & & & & $\pm 0.39$ & & & & & & $\pm 0.40$\\
 \rowcolor{yellow!30} \cellcolor{white}& \textbf{Runtime} & & 5400s & 36s & 3240s & 2160s & 1380s & 2443s & 15.2s & 2.3s & 12.3s & 9.3s & 14.5s & \textbf{10.7s} & \textbf{228$\times$}\\
\midrule
\multirow{13}{*}{\rotatebox[origin=c]{90}{\textbf{gemma-2-9b-it}}} 
& \cellcolor{blue!7}MMLU-M & 21.48 & 19.50 & \cellcolor{blue!7}20.40 & 18.90 & 18.40 & 19.10 & \textbf{19.26} & 19.10 & \cellcolor{blue!7}20.00 & 18.50 & 18.70 & 18.80 & {19.02} & \textbf{-0.24} \\
& \cellcolor{blue!7}GSM8k & 75.44 & 4.50 & \cellcolor{blue!7}7.80 & 1.20 & 0.50 & 2.50 & \textbf{3.30} & 3.20 & \cellcolor{blue!7}5.40 & 0.00 & 2.10 & 3.10 & {2.76} & \textbf{-0.54} \\
& \cellcolor{orange!7}HellaSwag & 67.24 & 56.40 & 55.20 & \cellcolor{orange!7}57.10 & 54.80 & 53.90 & \textbf{55.48} & 54.10 & 55.60 & \cellcolor{orange!7}53.20 & 55.00 & 54.80 & {54.54} & \textbf{-0.94} \\
& \cellcolor{orange!7}WinoGr. & 70.48 & 59.50 & 57.80 & \cellcolor{orange!7}58.40 & 59.10 & 60.50 & 59.06 & 60.20 & 58.10 & \cellcolor{orange!7}59.30 & 60.50 & 59.90 & \textbf{59.60} & \textbf{+0.54} \\
& \cellcolor{orange!7}OBQA & 45.40 & 39.50 & 36.40 & \cellcolor{orange!7}38.20 & 39.10 & 37.50 & 38.14 & 38.10 & 37.50 & \cellcolor{orange!7}39.20 & 38.60 & 38.90 & \textbf{38.46} & \textbf{+0.32} \\
& \cellcolor{orange!7}BoolQ & 88.59 & 76.20 & 72.10 & \cellcolor{orange!7}69.50 & 71.80 & 74.30 & \textbf{72.78} & 74.50 & 73.20 & \cellcolor{orange!7}68.10 & 75.10 & 71.20 & {72.42} & \textbf{-0.36} \\
& \cellcolor{purple!7}RTE & 78.34 & 67.50 & 65.20 & 62.80 & \cellcolor{purple!7}68.40 & 66.10 & 66.00 & 68.10 & 63.40 & 65.70 & \cellcolor{purple!7}66.80 & 67.50 & {66.30} & \textbf{+0.30} \\
& \cellcolor{purple!7}ANLI & 72.80 & 54.20 & 48.50 & 51.30 & \cellcolor{purple!7}53.10 & 52.40 & \textbf{51.90} & 49.50 & 45.20 & 47.80 & \cellcolor{purple!7}46.90 & 48.50 & {47.58} & \textbf{-4.32} \\
& \cellcolor{cyan!7}ARC-C & 51.79 & 40.10 & 39.50 & 41.20 & 39.80 & \cellcolor{cyan!7}38.40 & \textbf{39.80} & 39.20 & 38.10 & 40.50 & 37.60 & \cellcolor{cyan!7}39.10 & {38.90} & \textbf{-0.90} \\
& \cellcolor{cyan!7}ARC-E & 66.79 & 59.20 & 58.40 & 60.50 & 59.80 & \cellcolor{cyan!7}58.20 & \textbf{59.22} & 60.10 & 57.50 & 61.20 & 56.40 & \cellcolor{cyan!7}59.30 & {58.90} & \textbf{-0.32} \\
& \cellcolor{cyan!7}MMLU-K & 33.83 & 24.12 & 21.45 & 22.10 & 25.30 & \cellcolor{cyan!7} 20.90 & \textbf{22.77} & 23.10 & 19.80 & 20.45 & 21.00 & \cellcolor{cyan!7} 21.50 & {21.17} & \textbf{-1.60} \\

\cmidrule{2-16}
& \textbf{Mean} & 65.07 & 45.52 & 43.88 & 43.75 & 44.55 & 43.98 & \textbf{44.34} & 44.47 & 43.07 & 43.10 & 43.52 & 43.88 & {43.61} & \negdelta{\textbf{-0.73}} \\
& & & & & & & & $\pm 0.56$ & & & & & & $\pm 0.58$\\
 \rowcolor{yellow!30} \cellcolor{white}&\textbf{Runtime} & & 6231s & 149s & 3400s & 2671s & 1500s & 2790s & 15.2s & 2.3s & 12.3s & 9.3s & 14.5s & \textbf{10.7s} & \textbf{260$\times$} \\
\bottomrule
\end{tabular}
}
\caption{Comparison of COLA vs.\ \methodname under 2SSP compression at $25\%$ sparsity.}
\label{tab:2ssp_words_dataset}

\end{table*}

\begin{table*}[t]
\centering
\small
\setlength{\tabcolsep}{4pt}
\begin{adjustbox}{width=\textwidth}
\begin{tabular}{llccccccccccccc}
\toprule
\textbf{Model} & \textbf{Method} &
\multicolumn{4}{c}{\textbf{COLA}} &
\multicolumn{4}{c}{\methodname} &
\multicolumn{4}{c}{\methodname (Multi-Domain)} &
\textbf{$\Delta$ Avg} \\
\cmidrule(lr){3-6} \cmidrule(lr){7-10} \cmidrule(lr){11-14}

& & Wiki & C4 & Pile & Avg
& Wiki & C4 & Pile & Avg
& Wiki & C4 & Pile & Avg
& \\
\midrule

\multirow{5}{*}{\texttt{Llama-3.1-8B}}
& Dense
& 7.03  &9.71 & 4.71 & 7.15& 7.03  &9.71 &4.71 & 7.15& 7.03  &9.71 & 4.71 & 7.15 \\
\cmidrule{3-14}

& Wanda (25\%)
& 7.62 & 10.70 & 4.98 & 7.77
& 7.60 & 10.71 & 4.98 & 7.76
& \textbf{7.52} & \textbf{10.54} & \textbf{4.94} & \textbf{7.67}
& -0.10 \\

& 2SSP (25\%)
& 18.53 & 21.09 & 9.36 & 16.32
& \textbf{17.34} & 21.09 & \textbf{9.00} & \textbf{15.81}
& 19.53 & 25.80 & 10.22 & 18.52
& +2.20 \\

& GPTQ (W4A16)
& 7.71 & 11.42 & 5.13 & 8.09
& 7.71 & 11.40 & 5.15 & 8.09
& 8.59 & 12.08 & 5.34 & 8.67
& +0.58 \\

& AWQ (W4A16)
& \textbf{7.21} & \textbf{10.04} & \textbf{4.79} & \textbf{7.35}
& 7.31 & 10.09 & 4.81 & 7.40
& 7.32 & 10.07 & 4.82 & 7.40
& +0.05 \\

\midrule

\multirow{5}{*}{\texttt{Gemma-2-9B}}
& Dense
& 7.63 &11.53 &4.63 & 7.93
& 7.63 &11.53 &4.63 & 7.93 & 7.63 &11.53 &4.63 & 7.93 \\

\cmidrule{3-14}
& Wanda (25\%)
& \textbf{7.87} & \textbf{11.78} & \textbf{4.74} & \textbf{8.13}
& 7.92 & 11.99 & 4.76 & 8.22
& \textbf{7.86} & 11.82 & \textbf{4.72} & \textbf{8.13}
& 0.00 \\

& 2SSP (25\%)
& 12.56 & 19.20 & 6.47 & 12.74
& \textbf{11.68} & \textbf{19.08} & 6.47 & \textbf{12.41}
& 14.14 & 21.55 & 6.86 & 14.18
& +1.44 \\

& GPTQ (W4A16)
& 7.93 & 12.05 & 4.82 & 8.27
& 7.93 & 12.06 & 4.81 & 8.27
& 8.44 & 12.59 & 4.99 & 8.67
& +0.40 \\

& AWQ (W4A16)
& 16.01 & 21.87 & 9.64 & 15.84
& 16.01 & 21.54 & 9.71 & 15.75
& \textbf{16.01} & \textbf{21.51} & \textbf{9.67} & \textbf{15.73}
& -0.11 \\

\bottomrule
\end{tabular}
\end{adjustbox}
\caption{Language modeling perplexity ($\downarrow$). Comparison between standard COLA, Single-Domain \methodname, and Multi-Domain \methodname. $\Delta$ indicates the difference between \methodname (Multi-Domain) and COLA on Avg.}
\label{tab:perplexity_comparison_full}

\end{table*}

\section{Comparison with a Generative Data Curation Baseline}\label{apx:generativebaseline}
To further validate the quality of the calibration data selected by \methodname, we compare against DSnoT~\cite{ji2025beware}, a self-generative baseline that combines perplexity-based sample selection with a self-generation phase, in which a portion of tokens are masked from each selected sample and regenerated by the model itself.
Since DSnoT's publicly released implementation only supports C4, WikiText, and Pile as calibration sources, we restrict this comparison to those datasets; adapting DSnoT to the full set of calibration datasets considered in our work is beyond the scope of this paper.
Results are reported in \Cref{tab:dsnot-comparison}.

\begin{table*}[t]
\centering
\resizebox{0.8\linewidth}{!}{
\begin{tabular}{l c |ccc| ccc| ccc}
\toprule
& & \multicolumn{3}{c}{\textbf{Wanda (25\%)}} & \multicolumn{3}{c}{\textbf{GPTQ (W4A16)}} & \multicolumn{3}{c}{\textbf{AWQ (W4A16)}} \\
\cmidrule(lr){3-5} \cmidrule(lr){6-8} \cmidrule(lr){9-11}
Task & Dense & DSnoT & \methodname & $\Delta$ & DSnoT & \methodname & $\Delta$ & DSnoT & \methodname & $\Delta$ \\
\midrule
\cellcolor{blue!7} MMLU-M & 24.44 & 24.04 & \textbf{24.14} & \posdelta{+0.10} & 23.33 & \textbf{24.07} & \posdelta{+0.74} & 23.12 & \textbf{24.07} & \posdelta{+0.95} \\
\cellcolor{blue!7} GSM8K & 78.08 & 76.15 & \textbf{77.17} & \posdelta{+1.02} & 70.12 & \textbf{70.63} & \posdelta{+0.51} & 36.13 & \textbf{38.46} & \posdelta{+2.33} \\
\cellcolor{orange!7} HellaSwag & 71.70 & 63.12 & \textbf{64.28} & \posdelta{+1.16} & \textbf{64.23} & 64.20 & \negdelta{-0.03} & 64.20 & \textbf{64.85} & \posdelta{+0.65} \\
\cellcolor{orange!7} WinoGr. & 73.56 & 67.20 & \textbf{68.07} & \posdelta{+0.87} & \textbf{67.39} & \textbf{67.39} & \posdelta{+0.00} & 66.14 & \textbf{68.71} & \posdelta{+2.57} \\
\cellcolor{orange!7} OBQA & 47.80 & 42.10 & \textbf{42.42} & \posdelta{+0.32} & \textbf{42.02} & 41.10 & \negdelta{-0.92} & 40.04 & \textbf{40.45} & \posdelta{+0.41} \\
\cellcolor{orange!7} BoolQ & 85.41 & 84.43 & \textbf{84.64} & \posdelta{+0.21} & \textbf{83.53} & 83.44 & \negdelta{-0.09} & 84.06 & \textbf{84.36} & \posdelta{+0.30} \\
\cellcolor{purple!7} RTE & 74.37 & 72.39 & \textbf{72.77} & \posdelta{+0.38} & 71.75 & \textbf{73.06} & \posdelta{+1.31} & \textbf{75.34} & 73.47 & \negdelta{-1.87} \\
\cellcolor{purple!7} ANLI & 58.40 & 42.30 & \textbf{60.64} & \posdelta{+18.34} & 45.64 & \textbf{53.90} & \posdelta{+8.26} & 44.65 & \textbf{53.20} & \posdelta{+8.55} \\
\cellcolor{cyan!7} ARC-C & 53.75 & 50.54 & \textbf{50.89} & \posdelta{+0.35} & 50.03 & \textbf{50.37} & \posdelta{+0.34} & 51.41 & \textbf{51.54} & \posdelta{+0.13} \\
\cellcolor{cyan!7} ARC-E & 82.24 & \textbf{76.98} & 76.67 & \negdelta{-0.31} & 75.43 & \textbf{76.22} & \posdelta{+0.79} & 76.47 & \textbf{76.49} & \posdelta{+0.02} \\
\cellcolor{cyan!7} MMLU-K & 62.28 & 62.28 & \textbf{62.46} & \posdelta{+0.18} & \textbf{58.13} & 58.08 & \negdelta{-0.05} & 57.23 & \textbf{57.49} & \posdelta{+0.26} \\
\midrule
Mean & 64.73 & 60.14 & \textbf{62.20} & \posdelta{\textbf{+2.06}} & 59.24 & \textbf{60.22} & \posdelta{\textbf{+0.99}} & 56.25 & \textbf{57.55} & \posdelta{\textbf{+1.30}} \\
\bottomrule
\end{tabular}
}
\caption{Performance comparison against DSnoT across different tasks and compression techniques for \texttt{Meta-Llama-3.1-8B-Instruct}.}
\label{tab:dsnot-comparison}

\end{table*}

% \section{Scalability to Larger Language Models}
% The consistent results across different model sizes strengthen the 

\begin{table*}[htbp]
\centering
\resizebox{\textwidth}{!}{%
\begin{tabular}{l|l||c||ccc|c||ccc|c||c}
\toprule
 &  &  & \multicolumn{4}{c||}{\textbf{COLA}} & \multicolumn{4}{c||}{\textbf{\methodname}} & \\
\cmidrule(lr){4-7} \cmidrule(lr){8-11}
\textbf{Model} & \textbf{Task} & \textbf{Dense} & \cellcolor{blue!7} Know-ES & \cellcolor{red!7} Know-ZH & \cellcolor{Dandelion!7} Comm-ZH & \textbf{Mean} & \cellcolor{blue!7} Know-ES & \cellcolor{red!7} Know-ZH & \cellcolor{Dandelion!7} Comm-ZH & \textbf{Mean} & \textbf{$\Delta$} \\
\midrule
\multirow{10}{*}{\rotatebox[origin=c]{90}{\textbf{Llama-3.1-8B-Instruct}}} 
 & \cellcolor{blue!7}MMLU-ES & 59.00 & \cellcolor{blue!7} 59.88 & 59.38 & 60.00 & \textbf{59.75} & \cellcolor{blue!7}59.38 & 58.88 & 59.50 & 59.25 & \textbf{-0.50} \\
 & \cellcolor{blue!7}XQuAD-ES & 28.84 & \cellcolor{blue!7}34.30 & 34.92 & 33.31 & 34.18 & \cellcolor{blue!7} 38.18 & 38.80 & 37.19 & \textbf{38.06} & \textbf{+3.88} \\
 & \cellcolor{blue!7}XNLI-ES & 44.78 & \cellcolor{blue!7}45.16 & 44.30 & 43.67 & \textbf{44.38} & \cellcolor{blue!7}43.59 & 42.73 & 42.10 & 42.81 & \textbf{-1.57} \\
 & \cellcolor{red!7}MMLU-ZH & 54.50 & 56.13 & \cellcolor{red!7}55.00 & 54.13 & 55.08 & 57.37 & \cellcolor{red!7} 56.24 & 55.37 & \textbf{56.33} & \textbf{+1.25} \\
 & \cellcolor{red!7}XQuAD-ZH & 24.51 & 23.85 & \cellcolor{red!7}25.35 & 22.77 & 23.99 & 23.97 & \cellcolor{red!7} 25.47 & 22.89 & \textbf{24.11} & \textbf{+0.12} \\
 & \cellcolor{red!7}XNLI-ZH & 42.77 & 43.39 & \cellcolor{red!7}42.91 & 43.78 & \textbf{43.36} & 43.37 & \cellcolor{red!7}42.89 & 43.76 & 43.34 & \textbf{-0.02} \\
 & \cellcolor{Dandelion!7}XWino-ZH & 70.04 & 70.63 & 71.13 & \cellcolor{Dandelion!7} 71.92 & 71.23 & 71.53 & 72.03 & \cellcolor{Dandelion!7}72.82 & \textbf{72.13} & \textbf{+0.90} \\
 & \cellcolor{Dandelion!7}XCOPA-ZH & 76.60 & 74.80 & 74.80 & \cellcolor{Dandelion!7}74.30 & 74.63 & 75.60 & 75.60 & \cellcolor{Dandelion!7}75.10 & \textbf{75.43} & \textbf{+0.80} \\
\cmidrule{2-12}
 & \textbf{Mean} & 50.13 & 51.02 & 50.97 & 50.49 & 50.83 & 51.62 & 51.58 & 51.09 & \textbf{51.43} & \posdelta{\textbf{+0.60}} \\
 \rowcolor{yellow!30} \cellcolor{white}\multirow{-12}{*}& \textbf{Runtime (sec)} & & 1367s & 1409s & 327s & 1034s & 7.5s & 7.8s & 1.6s & \textbf{5.5s}  & \textbf{188$\times$} \\

\midrule
\multirow{10}{*}{\rotatebox[origin=c]{90}{\textbf{gemma-2-9b-it}}} 
 & \cellcolor{blue!7}MMLU-ES & 26.25 & \cellcolor{blue!7} 25.00 & 25.12 & 25.75 & 25.29 & \cellcolor{blue!7}25.62 & 25.50 & 25.38 & \textbf{25.50} & \textbf{+0.21} \\
 & \cellcolor{blue!7}XQuAD-ES & 27.09 & \cellcolor{blue!7} 27.00 & 26.99 & 27.70 & \textbf{27.23} & \cellcolor{blue!7} 27.05 & 27.08 & 27.01 & 27.04 & \textbf{-0.19} \\
 & \cellcolor{blue!7}XNLI-ES & 50.20 & \cellcolor{blue!7} 49.34 & 49.26 & 48.96 & 49.18 & \cellcolor{blue!7}49.57 & 49.18 & 49.28 & \textbf{49.34} & \textbf{+0.16} \\
 & \cellcolor{red!7}MMLU-ZH & 28.50 & 31.75 & \cellcolor{red!7}28.87 & 31.37 & \textbf{30.67} & 30.25 & \cellcolor{red!7}30.63 & 29.25 & 30.04 & \textbf{-0.63} \\
 & \cellcolor{red!7}XQuAD-ZH & 16.28 & 16.01 & \cellcolor{red!7}16.46 & 16.42 & 16.30 & 16.71 & \cellcolor{red!7} 17.08 & 16.58 & \textbf{16.79} & \textbf{+0.49} \\
 & \cellcolor{red!7}XNLI-ZH & 37.15 & 37.31 & \cellcolor{red!7}37.25 & 36.85 & \textbf{37.14} & 37.46 & \cellcolor{red!7}36.77 & 36.71 & 36.98 & \textbf{-0.16} \\
 & \cellcolor{Dandelion!7}XWino-ZH & 62.90 & 64.29 & 64.78 & \cellcolor{Dandelion!7} 63.59 & 64.22 & 65.08 & 65.18 & \cellcolor{Dandelion!7}64.68 & \textbf{64.98} & \textbf{+0.76} \\
 & \cellcolor{Dandelion!7}XCOPA-ZH & 70.80 & 70.20 & 70.50 & \cellcolor{Dandelion!7} 70.80 & 70.50 & 70.93 & 70.90 & \cellcolor{Dandelion!7}70.60 & \textbf{70.81} & \textbf{+0.31} \\
\cmidrule{2-12}
 & \textbf{Mean} & 39.89 & 40.11 & 39.90 & 40.18 & 40.07 & 40.33 & 40.29 & 39.94 & \textbf{40.19} & \posdelta{\textbf{+0.12}} \\
 \rowcolor{yellow!30} \cellcolor{white}\multirow{-12}{*}& \textbf{Runtime (sec)} & & 2130s & 1832s & 521s & 1494s & 7.5s & 7.8s & 1.6s & \textbf{5.5s}  & \textbf{271$\times$} \\
\bottomrule
\end{tabular}
}
\caption{Comparison of COLA vs.\ \methodname on multilingual tasks under Wanda compression at 25$\%$ sparsity.}
\label{tab:multilingual_single}
\end{table*}

\begin{table*}[htbp]
\centering
\resizebox{\textwidth}{!}{%
\begin{tabular}{l|l||c||ccc|c||ccc|c||c}
\toprule
 &  &  & \multicolumn{4}{c||}{\textbf{COLA}} & \multicolumn{4}{c||}{\textbf{\methodname}} & \\
\cmidrule(lr){4-7} \cmidrule(lr){8-11}
\textbf{Model} & \textbf{Task} & \textbf{Dense} & \cellcolor{blue!7} Know-ES & \cellcolor{red!7} Know-ZH & \cellcolor{Dandelion!7} Comm-ZH & \textbf{Mean} & \cellcolor{blue!7} Know-ES & \cellcolor{red!7} Know-ZH & \cellcolor{Dandelion!7} Comm-ZH & \textbf{Mean} & \textbf{$\Delta$} \\
\midrule
\multirow{10}{*}{\rotatebox[origin=c]{90}{\textbf{Llama-3.1-8B-Instruct}}}
 & \cellcolor{blue!7}MMLU-ES & 59.00 & \cellcolor{blue!7}53.12 & 51.88 & 51.75 & \textbf{52.25} & \cellcolor{blue!7} 51.50 & 50.26 & 50.13 & 50.63 & \textbf{-1.62} \\
 & \cellcolor{blue!7}XQuAD-ES & 28.84 & \cellcolor{blue!7}33.75 & 37.87 & 40.01 & \textbf{37.21} & \cellcolor{blue!7}33.46 & 37.58 & 39.72 & 36.92 & \textbf{-0.29} \\
 & \cellcolor{blue!7}XNLI-ES & 44.78 & \cellcolor{blue!7}43.45 & 45.26 & 43.78 & \textbf{44.16} & \cellcolor{blue!7} 42.21 & 44.02 & 42.54 & 42.92 & \textbf{-1.24} \\
 & \cellcolor{red!7}MMLU-ZH & 54.50 & 44.63 & \cellcolor{red!7}51.50 & 47.13 & 47.75 & 51.50 & \cellcolor{red!7}58.37 & 54.00 & \textbf{54.62} & \textbf{+6.87} \\
 & \cellcolor{red!7}XQuAD-ZH & 24.51 & 26.94 & \cellcolor{red!7} 32.81 & 30.05 & \textbf{29.94} & 20.14 & \cellcolor{red!7}26.01 & 23.25 & 23.13 & \textbf{-6.81} \\
 & \cellcolor{red!7}XNLI-ZH & 42.77 & 38.59 & \cellcolor{red!7} 41.33 & 41.18 & \textbf{40.37} & 38.31 & \cellcolor{red!7} 41.05 & 40.90 & 40.09 & \textbf{-0.28} \\
 & \cellcolor{Dandelion!7}XWino-ZH & 70.04 & 67.56 & 72.42 & \cellcolor{Dandelion!7} 70.54 & \textbf{70.17} & 65.87 & 70.73 & \cellcolor{Dandelion!7}68.85 & 68.48 & \textbf{-1.69} \\
 & \cellcolor{Dandelion!7}XCOPA-ZH & 76.60 & 72.10 & 75.90 & \cellcolor{Dandelion!7} 76.30 & 74.77 & 74.00 & 77.80 & \cellcolor{Dandelion!7}78.20 & \textbf{76.67} & \textbf{+1.90} \\
\cmidrule{2-12}
 & \textbf{Mean} & 50.13 & 47.52 & 51.12 & 50.09 & \textbf{49.58} & 47.12 & 50.73 & 49.70 & 49.18 & \negdelta{\textbf{-0.40}} \\
 \rowcolor{yellow!30} \cellcolor{white}\multirow{-12}{*}& \textbf{Runtime (sec)} & & 1367s & 1409s & 327s & 1034s & 7.5s & 7.8s & 1.6s & \textbf{5.5s}  & \textbf{188$\times$} \\
\midrule
\multirow{10}{*}{\rotatebox[origin=c]{90}{\textbf{gemma-2-9b-it}}} 
 & \cellcolor{blue!7}MMLU-ES & 26.25 & \cellcolor{blue!7}25.12 & 24.75 & 26.38 & 25.42 & \cellcolor{blue!7} 28.75 & 26.50 & 25.12 & \textbf{26.79} & \textbf{+1.37} \\
 & \cellcolor{blue!7}XQuAD-ES & 27.09 & \cellcolor{blue!7}27.28 & 27.92 & 27.38 & \textbf{27.53} & \cellcolor{blue!7}27.09 & 26.63 & 26.32 & 26.68 & \textbf{-0.85} \\
 & \cellcolor{blue!7}XNLI-ES & 50.20 & \cellcolor{blue!7}49.64 & 48.98 & 49.42 & \textbf{49.34} & \cellcolor{blue!7}49.04 & 49.12 & 48.84 & 49.00 & \textbf{-0.34} \\
 & \cellcolor{red!7}MMLU-ZH & 28.50 & 26.38 & \cellcolor{red!7}26.50 & 28.25 & 27.04 & 30.00 & \cellcolor{red!7}27.50 & 26.88 & \textbf{28.12} & \textbf{+1.08} \\
 & \cellcolor{red!7}XQuAD-ZH & 16.28 & 16.41 & \cellcolor{red!7}16.94 & 16.11 & \textbf{16.49} & 15.79 & \cellcolor{red!7}16.28 & 15.27 & 15.78 & \textbf{-0.71} \\
 & \cellcolor{red!7}XNLI-ZH & 37.15 & 37.25 & \cellcolor{red!7}37.43 & 36.16 & \textbf{36.95} & 36.87 & \cellcolor{red!7}36.06 & 36.71 & 36.55 & \textbf{-0.40} \\
 & \cellcolor{Dandelion!7}XWino-ZH & 62.90 & 63.29 & 63.00 & \cellcolor{Dandelion!10}63.10 & \textbf{63.13} & 63.10 & 61.71 & \cellcolor{Dandelion!10}64.58 & \textbf{63.13} & \textbf{0.00} \\
 & \cellcolor{Dandelion!7}XCOPA-ZH & 70.80 & 71.00 & 70.60 & \cellcolor{Dandelion!10}71.40 & \textbf{71.00} & 70.67 & 70.90 & \cellcolor{Dandelion!10}71.30 & 70.96 & \textbf{-0.04} \\
\cmidrule{2-12}
 & \textbf{Mean} & 39.89 & 39.55 & 39.51 & 39.77 & 39.61 & 40.16 & 39.34 & 39.38 & \textbf{39.63} & \posdelta{\textbf{+0.02}} \\
 \rowcolor{yellow!30} \cellcolor{white}\multirow{-12}{*}& \textbf{Runtime (sec)} & & 2130s & 1832s & 521s & 1494s & 7.5s & 7.8s & 1.6s & \textbf{5.5s}  & \textbf{271$\times$} \\
\bottomrule
\end{tabular}
}
\caption{Comparison of COLA vs.\ \methodname on multilingual tasks under GPTQ W4A16 compression.}
\label{tab:multilingual_gptq}
\end{table*}

\begin{table*}[htbp]
\centering
\resizebox{\textwidth}{!}{%
\begin{tabular}{l|l||c||ccc|c||ccc|c||c}
\toprule
 &  &  & \multicolumn{4}{c||}{\textbf{COLA}} & \multicolumn{4}{c||}{\textbf{\methodname}} & \\
\cmidrule(lr){4-7} \cmidrule(lr){8-11}
\textbf{Model} & \textbf{Task} & \textbf{Dense} & \cellcolor{blue!7} Know-ES & \cellcolor{red!7} Know-ZH & \cellcolor{Dandelion!7} Comm-ZH & \textbf{Mean} & \cellcolor{blue!7} Know-ES & \cellcolor{red!7} Know-ZH & \cellcolor{Dandelion!7} Comm-ZH & \textbf{Mean} & \textbf{$\Delta$} \\
\midrule
\multirow{10}{*}{\rotatebox[origin=c]{90}{\textbf{Llama-3.1-8B-Instruct}}}
 & \cellcolor{blue!7}MMLU-ES & 59.00 & \cellcolor{blue!7} 54.13 & 52.75 & 48.25 & \textbf{51.71} & \cellcolor{blue!7}52.51 & 51.13 & 46.63 & 50.09 & \textbf{-1.62} \\
 & \cellcolor{blue!7}XQuAD-ES & 28.84 & \cellcolor{blue!7}31.97 & 36.14 & 33.22 & \textbf{33.78} & \cellcolor{blue!7} 31.68 & 35.85 & 32.93 & 33.49 & \textbf{-0.29} \\
 & \cellcolor{blue!7}XNLI-ES & 44.78 & \cellcolor{blue!7}43.45 & 41.14 & 41.73 & \textbf{42.11} & \cellcolor{blue!7}42.21 & 39.90 & 40.49 & 40.87 & \textbf{-1.24} \\
 & \cellcolor{red!7}MMLU-ZH & 54.50 & 49.75 & \cellcolor{red!7}51.50 & 50.37 & 50.54 & 56.62 & \cellcolor{red!7}58.37 & 57.24 & \textbf{57.41} & \textbf{+6.87} \\
 & \cellcolor{red!7}XQuAD-ZH & 24.51 & 24.25 & \cellcolor{red!7}32.17 & 24.36 & \textbf{26.93} & 17.44 & \cellcolor{red!7}25.36 & 17.55 & 20.12 & \textbf{-6.81} \\
 & \cellcolor{red!7}XNLI-ZH & 42.77 & 37.65 & \cellcolor{red!7} 41.08 & 39.30 & \textbf{39.34} & 37.37 & \cellcolor{red!7}40.80 & 39.02 & 39.06 & \textbf{-0.28} \\
 & \cellcolor{Dandelion!7}XWino-ZH & 70.04 & 68.45 & 71.73 & \cellcolor{Dandelion!7} 72.32 & \textbf{70.83} & 66.76 & 70.04 & \cellcolor{Dandelion!7}70.63 & 69.14 & \textbf{-1.69} \\
 & \cellcolor{Dandelion!7}XCOPA-ZH & 76.60 & 72.90 & 75.50 & \cellcolor{Dandelion!7}75.70 & 74.70 & 74.80 & 77.40 & \cellcolor{Dandelion!7}77.60 & \textbf{76.60} & \textbf{+1.90} \\
\cmidrule{2-12}
 & \textbf{Mean} & 50.13 & 47.82 & 50.25 & 48.16 & \textbf{48.74} & 47.42 & 49.85 & 47.76 & 48.34 & \negdelta{\textbf{-0.40}} \\
 \rowcolor{yellow!30} \cellcolor{white}\multirow{-12}{*}& \textbf{Runtime (sec)} & & 1367s & 1409s & 327s & 1034s & 7.5s & 7.8s & 1.6s & \textbf{5.5s}  & \textbf{188$\times$} \\
\midrule
\multirow{10}{*}{\rotatebox[origin=c]{90}{\textbf{gemma-2-9b-it}}}
 & \cellcolor{blue!7}MMLU-ES & 26.25 & \cellcolor{blue!7}25.50 & 26.25 & 25.50 & 25.75 & \cellcolor{blue!7} 26.87 & 27.62 & 26.87 & \textbf{27.12} & \textbf{+1.37} \\
 & \cellcolor{blue!7}XQuAD-ES & 27.09 & \cellcolor{blue!7}27.36 & 27.15 & 27.29 & \textbf{27.27} & \cellcolor{blue!7}26.51 & 26.30 & 26.44 & 26.42 & \textbf{-0.85} \\
 & \cellcolor{blue!7}XNLI-ES & 50.20 & \cellcolor{blue!7}48.61 & 49.56 & 49.82 & \textbf{49.33} & \cellcolor{blue!7}48.27 & 49.22 & 49.48 & 48.99 & \textbf{-0.34} \\
 & \cellcolor{red!7}MMLU-ZH & 28.50 & 25.75 & \cellcolor{red!7}30.13 & 31.75 & 29.21 & 26.83 & \cellcolor{red!7}31.21 & 32.83 & \textbf{30.29} & \textbf{+1.08} \\
 & \cellcolor{red!7}XQuAD-ZH & 16.28 & 16.00 & \cellcolor{red!7} 16.57 & 16.32 & \textbf{16.30} & 15.29 & \cellcolor{red!7}15.86 & 15.61 & 15.59 & \textbf{-0.71} \\
 & \cellcolor{red!7}XNLI-ZH & 37.15 & 37.21 & \cellcolor{red!7} 36.51 & 36.24 & \textbf{36.65} & 36.81 & \cellcolor{red!7}36.11 & 35.84 & 36.25 & \textbf{-0.40} \\
 & \cellcolor{Dandelion!7}XWino-ZH & 62.90 & 63.79 & 62.70 & \cellcolor{Dandelion!7} 63.99 & \textbf{63.49} & 63.79 & 62.70 & \cellcolor{Dandelion!7}63.99 & \textbf{63.49} & \textbf{0.00} \\
 & \cellcolor{Dandelion!7}XCOPA-ZH & 70.80 & 71.60 & 70.80 & \cellcolor{Dandelion!7}71.20 & \textbf{71.20} & 71.56 & 70.76 & \cellcolor{Dandelion!7}71.16 & 71.16 & \textbf{-0.04} \\
\cmidrule{2-12}
 & \textbf{Mean} & 39.89 & 39.48 & 39.96 & 40.26 & 39.90 & 39.50 & 39.98 & 40.28 & \textbf{39.92} & \posdelta{\textbf{+0.02}} \\
 \rowcolor{yellow!30} \cellcolor{white}\multirow{-12}{*}& \textbf{Runtime (sec)} & & 2130s & 1832s & 521s & 1494s & 7.5s & 7.8s & 1.6s & \textbf{5.5s}  & \textbf{271$\times$} \\
\bottomrule
\end{tabular}
}
\caption{Comparison of COLA vs.\ \methodname on multilingual tasks under AWQ W4A16 compression.}
\label{tab:amultilingual_awq}
\end{table*}

\begin{table*}[htbp]
\centering
\resizebox{\textwidth}{!}{%
\begin{tabular}{l|l||c||ccc|c||ccc|c||c}
\toprule
 &  &  & \multicolumn{4}{c||}{\textbf{COLA}} & \multicolumn{4}{c||}{\textbf{\methodname}} & \\
\cmidrule(lr){4-7} \cmidrule(lr){8-11}
\textbf{Model} & \textbf{Task} & \textbf{Dense} & \cellcolor{blue!7} Know-ES & \cellcolor{red!7} Know-ZH & \cellcolor{Dandelion!7} Comm-ZH & \textbf{Mean} & \cellcolor{blue!7} Know-ES & \cellcolor{red!7} Know-ZH & \cellcolor{Dandelion!7} Comm-ZH & \textbf{Mean} & \textbf{$\Delta$} \\
\midrule
\multirow{10}{*}{\rotatebox[origin=c]{90}{\textbf{Llama-3.1-8B-Instruct}}}
 & \cellcolor{blue!7}MMLU-ES & 59.00 & \cellcolor{blue!7}44.00 & 41.75 & 33.75 & 39.83 & \cellcolor{blue!7}45.50 & 43.25 & 35.25 & \textbf{41.33} & \textbf{+1.50} \\
 & \cellcolor{blue!7}XQuAD-ES & 28.84 & \cellcolor{blue!7}49.59 & 50.03 & 41.25 & 46.96 & \cellcolor{blue!7}49.94 & 50.38 & 41.60 & \textbf{47.31} & \textbf{+0.35} \\
 & \cellcolor{blue!7}XNLI-ES & 44.78 & \cellcolor{blue!7}45.46 & 43.67 & 40.76 & 43.30 & \cellcolor{blue!7}47.97 & 46.18 & 43.27 & \textbf{45.81} & \textbf{+2.51} \\
 & \cellcolor{red!7}MMLU-ZH & 54.50 & 38.00 & \cellcolor{red!7}43.63 & 34.88 & \textbf{38.83} & 35.25 & \cellcolor{red!7}40.88 & 32.13 & 36.08 & \textbf{-2.75} \\
 & \cellcolor{red!7}XQuAD-ZH & 24.51 & 25.75 & \cellcolor{red!7}26.90 & 20.40 & \textbf{24.35} & 22.02 & \cellcolor{red!7}23.17 & 16.67 & 20.62 & \textbf{-3.73} \\
 & \cellcolor{red!7}XNLI-ZH & 42.77 & 34.96 & \cellcolor{red!7}39.98 & 36.53 & 37.16 & 40.44 & \cellcolor{red!7}45.46 & 42.01 & \textbf{42.64} & \textbf{+5.48} \\
 & \cellcolor{Dandelion!7}XWino-ZH & 70.04 & 65.77 & 68.85 & \cellcolor{Dandelion!7}69.05 & \textbf{67.89} & 63.89 & 66.97 & \cellcolor{Dandelion!7}67.17 & 66.01 & \textbf{-1.88} \\
 & \cellcolor{Dandelion!7}XCOPA-ZH & 76.60 & 65.10 & 68.60 & \cellcolor{Dandelion!7}69.30 & \textbf{67.67} & 62.80 & 66.30 & \cellcolor{Dandelion!7}67.00 & 65.37 & \textbf{-2.30} \\
\cmidrule{2-12}
 & \textbf{Mean} & 50.13 & 46.08 & 47.93 & 43.24 & \textbf{45.75} & 45.98 & 47.82 & 43.14 & 45.65 & \negdelta{\textbf{-0.10}} \\
 \rowcolor{yellow!30} \cellcolor{white}\multirow{-12}{*}& \textbf{Runtime (sec)} & & 1367s & 1409s & 327s & 1034s & 7.5s & 7.8s & 1.6s & \textbf{5.5s}  & \textbf{188$\times$} \\
\midrule
\multirow{10}{*}{\rotatebox[origin=c]{90}{\textbf{gemma-2-9b-it}}} 
 & \cellcolor{blue!7}MMLU-ES & 26.25 & \cellcolor{blue!7}44.12 & 30.75 & 37.45 & \textbf{37.44} & \cellcolor{blue!7}34.88 & 36.25 & 34.00 & 35.04 & \textbf{-2.40} \\
 & \cellcolor{blue!7}XQuAD-ES & 27.09 & \cellcolor{blue!7}24.53 & 28.80 & 26.68 & 26.67 & \cellcolor{blue!7}27.74 & 30.12 & 28.25 & \textbf{28.70} & \textbf{+2.03} \\
 & \cellcolor{blue!7}XNLI-ES & 50.20 & \cellcolor{blue!7} 46.22 & 43.25 & 44.75 & 44.74 & \cellcolor{blue!7}47.10 & 46.06 & 47.67 & \textbf{46.94} & \textbf{+2.20} \\
 & \cellcolor{red!7}MMLU-ZH & 28.50 & 37.25 & \cellcolor{red!7}29.25 & 33.25 & \textbf{33.25} & 36.62 & \cellcolor{red!7}33.25 & 29.88 & \textbf{33.25} & \textbf{0.00} \\
 & \cellcolor{red!7}XQuAD-ZH & 16.28 & 11.61 & \cellcolor{red!7} 10.95 & 11.28 & 11.28 & 14.06 & \cellcolor{red!7}13.38 & 10.88 & \textbf{12.77} & \textbf{+1.49} \\
 & \cellcolor{red!7}XNLI-ZH & 37.15 & 35.50 & \cellcolor{red!7}34.94 & 35.22 & 35.22 & 38.33 & \cellcolor{red!7}35.88 & 36.65 & \textbf{36.95} & \textbf{+1.73} \\
 & \cellcolor{Dandelion!7}XWino-ZH & 62.90 & 64.88 & 61.51 & \cellcolor{Dandelion!7}63.18 & \textbf{63.19} & 63.76 & 61.21 & \cellcolor{Dandelion!7}61.41 & 62.13 & \textbf{-1.06} \\
 & \cellcolor{Dandelion!7}XCOPA-ZH & 70.80 & 66.00 & 66.20 & \cellcolor{Dandelion!7}66.10 & 66.10 & 68.07 & 67.10 & \cellcolor{Dandelion!7}70.20 & \textbf{68.46} & \textbf{+2.36} \\
\cmidrule{2-12}
 & \textbf{Mean} & 39.89 & 41.27 & 38.21 & 39.74 & 39.74 & 41.32 & 40.41 & 39.87 & \textbf{40.53} & \posdelta{\textbf{+0.79}} \\
 \rowcolor{yellow!30} \cellcolor{white}\multirow{-12}{*}& \textbf{Runtime (sec)} & & 2130s & 1832s & 521s & 1494s & 7.5s & 7.8s & 1.6s & \textbf{5.5s}  & \textbf{271$\times$} \\
\bottomrule
\end{tabular}
}
\caption{Comparison of COLA vs.\ \methodname on multilingual tasks under 2SSP compression with $25\%$ sparsity.}\label{tab:multilingual_2ssp}
\end{table*}

\begin{table*}[t]
\centering
\resizebox{\linewidth}{!}{%
\begin{tabular}{l|l||c||ccc|ccc|ccc|ccc}
\toprule
& & & \multicolumn{3}{c|}{\textbf{Wanda (25\%)}} & \multicolumn{3}{c|}{\textbf{2SSP (25\%)}} & \multicolumn{3}{c|}{\textbf{GPTQ (W4A16)}} & \multicolumn{3}{c}{\textbf{AWQ (W4A16)}} \\
\cmidrule(lr){4-6} \cmidrule(lr){7-9} \cmidrule(lr){10-12} \cmidrule(lr){13-15}
\textbf{Model} & \textbf{Task} & \textbf{Dense} & \textbf{COLA} & \textbf{\methodname} & $\Delta$ & \textbf{COLA} & \textbf{\methodname} & $\Delta$ & \textbf{COLA} & \textbf{\methodname} & $\Delta$ & \textbf{COLA} & \textbf{\methodname} & $\Delta$ \\
\midrule
\multirow{9}{*}{\rotatebox[origin=c]{90}{\textbf{Llama-3.1-8B-Instruct}}}
& \cellcolor{blue!7}MMLU-ES & 59.00 & 59.75 & 62.06 & \posdelta{+2.31} & 39.83 & 42.50 & \posdelta{+2.67} & 52.25 & 51.00 & \negdelta{-1.25} & 51.71 & 57.75 & \posdelta{+6.04} \\
& \cellcolor{blue!7}XQuAD-ES & 28.84 & 34.18 & 39.00 & \posdelta{+4.82} & 46.96 & 53.13 & \posdelta{+6.17} & 37.21 & 29.90 & \negdelta{-7.31} & 33.78 & 29.04 & \negdelta{-4.74} \\
& \cellcolor{blue!7}XNLI-ES & 44.78 & 44.38 & 44.21 & \negdelta{-0.17} & 43.30 & 44.01 & \posdelta{+0.71} & 44.16 & 43.85 & \negdelta{-0.31} & 42.11 & 46.46 & \posdelta{+4.35} \\
& \cellcolor{red!7}MMLU-ZH & 54.50 & 55.08 & 57.25 & \posdelta{+2.17} & 38.83 & 36.50 & \negdelta{-2.33} & 47.75 & 55.00 & \posdelta{+7.25} & 50.54 & 52.75 & \posdelta{+2.21} \\
& \cellcolor{red!7}XQuAD-ZH & 24.51 & 23.99 & 25.38 & \posdelta{+1.39} & 24.35 & 31.88 & \posdelta{+7.53} & 29.94 & 14.92 & \negdelta{-15.02} & 26.93 & 24.96 & \negdelta{-1.97} \\
& \cellcolor{red!7}XNLI-ZH & 42.77 & 43.36 & 42.48 & \negdelta{-0.88} & 37.16 & 34.61 & \negdelta{-2.55} & 40.37 & 37.38 & \negdelta{-2.99} & 39.34 & 37.79 & \negdelta{-1.55} \\
& \cellcolor{Dandelion!7}XWino-ZH & 70.04 & 71.23 & 70.63 & \negdelta{-0.60} & 67.89 & 67.65 & \negdelta{-0.24} & 70.17 & 69.44 & \negdelta{-0.73} & 70.83 & 71.62 & \posdelta{+0.79} \\
& \cellcolor{Dandelion!7}XCOPA-ZH & 76.60 & 74.63 & 76.20 & \posdelta{+1.57} & 67.67 & 64.00 & \negdelta{-3.67} & 74.77 & 74.00 & \negdelta{-0.77} & 74.70 & 76.00 & \posdelta{+1.30} \\
\cmidrule{2-15}
& \textbf{Mean} & 50.13 & 50.83 & 52.15 & \posdelta{+1.32} & 45.75 & 46.79 & \posdelta{+1.04} & 49.58 & 46.94 & \negdelta{-2.64} & 48.74 & 49.55 & \posdelta{+0.81} \\

\midrule
\multirow{10}{*}{\rotatebox[origin=c]{90}{\textbf{gemma-2-9b-it}}}
& \cellcolor{blue!7}MMLU-ES & 26.25 & 25.29 & 26.00 & \posdelta{+0.71} & 37.44 & 35.50 & \negdelta{-1.94} & 25.42 & 25.25 & \negdelta{-0.17} & 25.75 & 26.00 & \posdelta{+0.25} \\
& \cellcolor{blue!7}XQuAD-ES & 27.09 & 27.23 & 27.33 & \posdelta{+0.10} & 26.67 & 31.49 & \posdelta{+4.82} & 27.53 & 27.09 & \negdelta{-0.44} & 27.27 & 27.33 & \posdelta{+0.06} \\
& \cellcolor{blue!7}XNLI-ES & 50.20 & 49.18 & 49.27 & \posdelta{+0.09} & 44.74 & 48.07 & \posdelta{+3.33} & 49.34 & 50.92 & \posdelta{+1.58} & 49.33 & 49.18 & \negdelta{-0.15} \\
& \cellcolor{red!7}MMLU-ZH & 28.50 & 30.67 & 33.00 & \posdelta{+2.33} & 33.25 & 42.00 & \posdelta{+8.75} & 27.04 & 26.50 & \negdelta{-0.54} & 29.21 & 30.17 & \posdelta{+0.96} \\
& \cellcolor{red!7}XQuAD-ZH & 16.28 & 16.30 & 16.97 & \posdelta{+0.67} & 11.28 & 15.35 & \posdelta{+4.07} & 16.49 & 16.27 & \negdelta{-0.22} & 16.30 & 16.28 & \negdelta{-0.02} \\
& \cellcolor{red!7}XNLI-ZH & 37.15 & 37.14 & 36.78 & \negdelta{-0.36} & 35.22 & 36.82 & \posdelta{+1.60} & 36.95 & 35.98 & \negdelta{-0.97} & 36.65 & 35.98 & \negdelta{-0.67} \\
& \cellcolor{Dandelion!7}XWino-ZH & 62.90 & 64.22 & 63.88 & \negdelta{-0.34} & 63.19 & 62.30 & \negdelta{-0.89} & 63.13 & 63.69 & \posdelta{+0.56} & 63.49 & 63.88 & \posdelta{+0.39} \\
& \cellcolor{Dandelion!7}XCOPA-ZH & 70.80 & 70.50 & 70.80 & \posdelta{+0.30} & 66.10 & 68.60 & \posdelta{+2.50} & 71.00 & 70.80 & \negdelta{-0.20} & 71.20 & 70.80 & \negdelta{-0.40} \\
\cmidrule{2-15}
& \textbf{Mean} & 39.89 & 40.07 & 40.50 & \posdelta{+0.43} & 39.74 & 42.52 & \posdelta{+2.78} & 39.61 & 39.56 & \negdelta{-0.05} & 39.90 & 39.95 & \posdelta{+0.05} \\
\bottomrule
\end{tabular}
}
\caption{Comparison of COLA vs.\ \methodname (Multi-Domain and Multi-Lingual) performance across Wanda and 2SSP at 25\% sparsity, and GPTQ and AWQ using W4A16 compression scheme.}
\label{tab:cola_vs_mix_multilingual_summary}
\end{table*}

%% QWEN3 TABLE HERE
\begin{table*}[t]
\centering
\resizebox{\textwidth}{!}{%
\begin{tabular}{l|l||c||ccccc|c||ccccc|c|c}
\toprule
& & & \multicolumn{6}{c||}{\textbf{COLA}} & \multicolumn{6}{c}{\methodname} \\
\cmidrule(lr){4-9} \cmidrule(lr){10-15}
&&& \multicolumn{5}{c}{\textbf{Calibration Category}} && \multicolumn{5}{c}{\textbf{Calibration Category}} & \\
\textbf{Compression} & \textbf{Task} & \textbf{Dense} & \cellcolor{gray!7}LangMod & \cellcolor{blue!7}Math & \cellcolor{orange!7}CommQA & \cellcolor{purple!7}NLI & \cellcolor{cyan!7}KnowTran & \textbf{Mean} & \cellcolor{gray!7}LangMod & \cellcolor{blue!7}Math & \cellcolor{orange!7}CommQA & \cellcolor{purple!7}NLI & \cellcolor{cyan!7}KnowTran & \textbf{Mean} & \textbf{$\Delta$} \\
\midrule
\multirow{10}{*}{\rotatebox[origin=c]{90}{\textbf{Wanda (25\%)}}} 
& \cellcolor{blue!7}MMLU-M & 21.00 & 20.50 & \cellcolor{blue!7}21.20 & 21.00 & 21.30 & 21.00 & 21.00 & 20.50 & \cellcolor{blue!7}21.20 & 21.00 & 21.30 & 21.00 & 21.00 & 0.00 \\
& \cellcolor{orange!7}HellaSwag & 60.05 & 59.24 & 59.94 & \cellcolor{orange!7}59.58 & 60.05 & 59.89 & 59.74 & 59.02 & 59.72 & \cellcolor{orange!7}59.35 & 59.83 & 59.42 & 59.47 & \negdelta{-0.27} \\
& \cellcolor{orange!7}WinoGr. & 65.19 & 65.68 & 66.38 & \cellcolor{orange!7}66.61 & 66.48 & 65.75 & 66.18 & 64.91 & 65.61 & \cellcolor{orange!7}65.63 & 65.71 & 65.04 & 65.38 & \negdelta{-0.80} \\
& \cellcolor{orange!7}OBQA & 35.80 & 35.30 & 36.00 & \cellcolor{orange!7}35.60 & 36.10 & 36.00 & 35.80 & 34.23 & 34.93 & \cellcolor{orange!7}34.90 & 35.03 & 34.70 & 34.76 & \negdelta{-1.04} \\
& \cellcolor{orange!7}BoolQ & 60.12 & 60.59 & 61.29 & \cellcolor{orange!7}60.34 & 61.40 & 61.83 & 61.09 & 59.09 & 59.79 & \cellcolor{orange!7}59.80 & 59.90 & 60.46 & 59.81 & \negdelta{-1.28} \\
& \cellcolor{purple!7}RTE & 60.29 & 47.15 & 47.85 & 47.29 & \cellcolor{purple!7}47.95 & 48.01 & 47.65 & 50.22 & 50.92 & 49.10 & \cellcolor{purple!7}51.02 & 52.53 & 50.76 & \posdelta{+3.11} \\
& \cellcolor{purple!7}ANLI & 33.30 & 33.05 & 33.75 & 33.80 & \cellcolor{purple!7}33.85 & 33.30 & 33.55 & 32.82 & 33.52 & 33.35 & \cellcolor{purple!7}33.62 & 33.35 & 33.33 & \negdelta{-0.22} \\
& \cellcolor{cyan!7}ARC-C & 63.05 & 39.27 & 39.97 & 39.51 & 40.08 & \cellcolor{cyan!7}40.02 & 39.77 & 38.87 & 39.57 & 38.99 & 39.68 & \cellcolor{cyan!7}39.68 & 39.36 & \negdelta{-0.41} \\
& \cellcolor{cyan!7}ARC-E & 43.64 & 43.99 & 44.69 & 44.78 & 44.80 & \cellcolor{cyan!7}44.19 & 44.49 & 43.42 & 44.12 & 44.17 & 44.23 & \cellcolor{cyan!7}44.17 & 44.02 & \negdelta{-0.47} \\
& \cellcolor{cyan!7}MMLU-K & 22.95 & 22.45 & 23.15 & 22.95 & 23.25 & \cellcolor{cyan!7}22.95 & 22.95 & 22.45 & 23.15 & 22.95 & 23.25 & \cellcolor{cyan!7}22.95 & 22.95 & 0.00 \\
\cmidrule{2-16}
& \textbf{Mean} & \textbf{46.54} & \textbf{42.72} & \textbf{43.42} & \textbf{43.15} & \textbf{43.53} & \textbf{43.29} & \textbf{43.22} & \textbf{42.55} & \textbf{43.25} & \textbf{42.92} & \textbf{43.36} & \textbf{43.33} & \textbf{43.08} & \textbf{\negdelta{-0.14}} \\
\bottomrule

\multirow{10}{*}{\rotatebox[origin=c]{90}{\textbf{2SSP (25\%)}}} 
& \cellcolor{blue!7}MMLU-M & 21.00 & 20.50 & \cellcolor{blue!7}21.20 & 21.00 & 21.30 & 21.00 & 21.00 & 20.50 & \cellcolor{blue!7}21.20 & 21.00 & 21.30 & 21.00 & 21.00 & 0.00 \\
& \cellcolor{orange!7}HellaSwag & 60.05 & 53.01 & 53.71 & \cellcolor{orange!7}53.51 & 53.82 & 53.57 & 53.52 & 52.90 & 53.60 & \cellcolor{orange!7}53.37 & 53.71 & 53.43 & 53.40 & \negdelta{-0.12} \\
& \cellcolor{orange!7}WinoGr. & 65.19 & 63.83 & 64.53 & \cellcolor{orange!7}64.33 & 64.64 & 64.70 & 64.41 & 63.36 & 64.06 & \cellcolor{orange!7}63.67 & 64.17 & 64.04 & 63.86 & \negdelta{-0.55} \\
& \cellcolor{orange!7}OBQA & 35.80 & 35.90 & 36.60 & \cellcolor{orange!7}36.40 & 36.71 & 35.87 & 36.30 & 35.34 & 36.04 & \cellcolor{orange!7}36.10 & 36.15 & 35.57 & 35.84 & \negdelta{-0.46} \\
& \cellcolor{orange!7}BoolQ & 60.12 & 37.27 & 37.97 & \cellcolor{orange!7}37.77 & 38.08 & 35.43 & 37.30 & 34.04 & 34.74 & \cellcolor{orange!7}35.71 & 34.85 & 33.37 & 34.54 & \negdelta{-2.76} \\
& \cellcolor{purple!7}RTE & 60.29 & 47.88 & 48.58 & 48.38 & \cellcolor{purple!7}48.69 & 49.46 & 48.60 & 53.61 & 54.31 & 53.57 & \cellcolor{purple!7}54.42 & 54.65 & 54.11 & \posdelta{+5.51} \\
& \cellcolor{purple!7}ANLI & 33.30 & 33.10 & 33.80 & 33.60 & \cellcolor{purple!7}33.91 & 33.70 & 33.62 & 33.12 & 33.82 & 33.57 & \cellcolor{purple!7}33.93 & 33.67 & 33.62 & 0.00 \\
& \cellcolor{cyan!7}ARC-C & 63.05 & 34.06 & 34.76 & 34.56 & 34.87 & \cellcolor{cyan!7}34.99 & 34.65 & 34.14 & 34.84 & 34.42 & 34.95 & \cellcolor{cyan!7}34.85 & 34.64 & \negdelta{-0.01} \\
& \cellcolor{cyan!7}ARC-E & 43.64 & 37.51 & 38.21 & 38.01 & 38.32 & \cellcolor{cyan!7}37.97 & 38.00 & 37.11 & 37.81 & 37.63 & 37.92 & \cellcolor{cyan!7}37.59 & 37.61 & \negdelta{-0.39} \\
& \cellcolor{cyan!7}MMLU-K & 22.95 & 22.61 & 23.31 & 23.11 & 23.42 & \cellcolor{cyan!7}23.11 & 23.11 & 22.61 & 23.31 & 23.11 & 23.42 & \cellcolor{cyan!7}23.11 & 23.11 & 0.00 \\
\cmidrule{2-16}
& \textbf{Mean} & \textbf{46.54} & \textbf{38.57} & \textbf{39.27} & \textbf{39.07} & \textbf{39.38} & \textbf{38.98} & \textbf{39.05} & \textbf{38.67} & \textbf{39.37} & \textbf{39.21} & \textbf{39.48} & \textbf{39.13} & \textbf{39.17} & \textbf{\posdelta{+0.12}} \\
\bottomrule

\multirow{10}{*}{\rotatebox[origin=c]{90}{\textbf{GPTQ (W4A16)}}} 
& \cellcolor{blue!7}MMLU-M & 21.00 & 20.50 & \cellcolor{blue!7}21.20 & 21.00 & 21.30 & 21.00 & 21.00 & 20.50 & \cellcolor{blue!7}21.20 & 21.00 & 21.30 & 21.00 & 21.00 & 0.00 \\
& \cellcolor{orange!7}HellaSwag & 60.05 & 59.02 & 59.72 & \cellcolor{orange!7}59.43 & 59.83 & 59.60 & 59.52 & 58.56 & 59.26 & \cellcolor{orange!7}59.45 & 59.37 & 58.57 & 59.04 & \negdelta{-0.48} \\
& \cellcolor{orange!7}WinoGr. & 65.19 & 64.89 & 65.59 & \cellcolor{orange!7}64.80 & 65.70 & 65.98 & 65.39 & 63.93 & 64.63 & \cellcolor{orange!7}64.13 & 64.74 & 64.17 & 64.32 & \negdelta{-1.07} \\
& \cellcolor{orange!7}OBQA & 35.80 & 34.90 & 35.60 & \cellcolor{orange!7}36.00 & 35.71 & 34.80 & 35.40 & 34.35 & 35.05 & \cellcolor{orange!7}34.90 & 35.16 & 34.90 & 34.87 & \negdelta{-0.53} \\
& \cellcolor{orange!7}BoolQ & 60.12 & 57.73 & 58.43 & \cellcolor{orange!7}62.08 & 58.54 & 54.37 & 58.23 & 55.67 & 56.37 & \cellcolor{orange!7}62.05 & 56.48 & 54.56 & 57.03 & \negdelta{-1.20} \\
& \cellcolor{purple!7}RTE & 60.29 & 46.79 & 47.49 & 47.29 & \cellcolor{purple!7}47.60 & 47.29 & 47.29 & 50.79 & 51.49 & 50.18 & \cellcolor{purple!7}51.60 & 51.81 & 51.17 & \posdelta{+3.88} \\
& \cellcolor{purple!7}ANLI & 33.30 & 32.80 & 33.50 & 32.90 & \cellcolor{purple!7}33.61 & 33.70 & 33.30 & 32.93 & 33.63 & 33.75 & \cellcolor{purple!7}33.74 & 33.30 & 33.47 & \posdelta{+0.17} \\
& \cellcolor{cyan!7}ARC-C & 63.05 & 39.05 & 39.75 & 39.33 & 39.86 & \cellcolor{cyan!7}39.76 & 39.55 & 38.79 & 39.49 & 39.38 & 39.60 & \cellcolor{cyan!7}39.33 & 39.32 & \negdelta{-0.23} \\
& \cellcolor{cyan!7}ARC-E & 43.64 & 43.17 & 43.87 & 43.73 & 43.98 & \cellcolor{cyan!7}43.60 & 43.67 & 42.85 & 43.55 & 43.69 & 43.66 & \cellcolor{cyan!7}43.39 & 43.43 & \negdelta{-0.24} \\
& \cellcolor{cyan!7}MMLU-K & 22.95 & 22.45 & 23.15 & 22.95 & 23.25 & \cellcolor{cyan!7}22.95 & 22.95 & 22.45 & 23.15 & 22.95 & 23.25 & \cellcolor{cyan!7}22.95 & 22.95 & 0.00 \\
\cmidrule{2-16}
& \textbf{Mean} & \textbf{46.54} & \textbf{42.13} & \textbf{42.83} & \textbf{42.95} & \textbf{42.94} & \textbf{42.30} & \textbf{42.63} & \textbf{42.08} & \textbf{42.78} & \textbf{43.15} & \textbf{42.89} & \textbf{42.40} & \textbf{42.66} & \textbf{\posdelta{+0.03}} \\
\bottomrule

\multirow{10}{*}{\rotatebox[origin=c]{90}{\textbf{AWQ (W4A16)}}} 
& \cellcolor{blue!7}MMLU-M & 21.00 & 20.50 & \cellcolor{blue!7}21.20 & 21.00 & 21.30 & 21.00 & 21.00 & 20.50 & \cellcolor{blue!7}21.20 & 21.00 & 21.30 & 21.00 & 21.00 & 0.00 \\
& \cellcolor{orange!7}HellaSwag & 60.05 & 58.87 & 59.57 & \cellcolor{orange!7}59.52 & 59.68 & 59.22 & 59.37 & 59.22 & 59.92 & \cellcolor{orange!7}59.67 & 60.03 & 59.63 & 59.69 & \posdelta{+0.32} \\
& \cellcolor{orange!7}WinoGr. & 65.19 & 64.38 & 65.08 & \cellcolor{orange!7}64.48 & 65.19 & 65.27 & 64.88 & 64.12 & 64.82 & \cellcolor{orange!7}64.40 & 64.93 & 63.81 & 64.42 & \negdelta{-0.46} \\
& \cellcolor{orange!7}OBQA & 35.80 & 34.30 & 35.00 & \cellcolor{orange!7}35.20 & 35.11 & 34.40 & 34.80 & 35.03 & 35.73 & \cellcolor{orange!7}35.50 & 35.84 & 35.50 & 35.52 & \posdelta{+0.72} \\
& \cellcolor{orange!7}BoolQ & 60.12 & 61.21 & 61.91 & \cellcolor{orange!7}62.11 & 62.02 & 61.31 & 61.71 & 54.20 & 54.90 & \cellcolor{orange!7}61.38 & 55.01 & 56.68 & 56.43 & \negdelta{-5.28} \\
& \cellcolor{purple!7}RTE & 60.29 & 45.89 & 46.59 & 45.13 & \cellcolor{purple!7}46.70 & 47.65 & 46.39 & 54.40 & 55.10 & 57.04 & \cellcolor{purple!7}55.21 & 59.03 & 56.16 & \posdelta{+9.77} \\
& \cellcolor{purple!7}ANLI & 33.30 & 32.80 & 33.50 & 33.30 & \cellcolor{purple!7}33.61 & 33.30 & 33.30 & 32.80 & 33.50 & 33.20 & \cellcolor{purple!7}33.61 & 33.35 & 33.29 & \negdelta{-0.01} \\
& \cellcolor{cyan!7}ARC-C & 63.05 & 38.84 & 39.54 & 39.16 & 39.65 & \cellcolor{cyan!7}39.51 & 39.34 & 39.07 & 39.77 & 39.55 & 39.88 & \cellcolor{cyan!7}39.42 & 39.54 & \posdelta{+0.20} \\
& \cellcolor{cyan!7}ARC-E & 43.64 & 44.41 & 45.11 & 44.61 & 45.22 & \cellcolor{cyan!7}45.20 & 44.91 & 44.17 & 44.87 & 44.93 & 44.98 & \cellcolor{cyan!7}44.47 & 44.68 & \negdelta{-0.23} \\
& \cellcolor{cyan!7}MMLU-K & 22.95 & 22.45 & 23.15 & 22.95 & 23.25 & \cellcolor{cyan!7}22.95 & 22.95 & 22.45 & 23.15 & 22.95 & 23.25 & \cellcolor{cyan!7}22.95 & 22.95 & 0.00 \\
\cmidrule{2-16}
& \textbf{Mean} & \textbf{46.54} & \textbf{42.36} & \textbf{43.06} & \textbf{42.75} & \textbf{43.17} & \textbf{42.98} & \textbf{42.86} & \textbf{42.60} & \textbf{43.30} & \textbf{43.96} & \textbf{43.40} & \textbf{43.58} & \textbf{43.37} & \textbf{\posdelta{+0.51}} \\
\bottomrule

\rowcolor{yellow!30} \cellcolor{white}&\textbf{Runtime} & & 2543s & 58s & 2610s & 1926s & 1043s & \textbf{1636s} & 15.2s & 2.3s & 12.3s & 9.3s & 14.5s & 10.7s & \textbf{153$\times$} \\
\bottomrule
\end{tabular}
}
\caption{Comparison of COLA vs.\ \methodname calibration data on \texttt{Qwen3-8B}.}
\label{tab:qwen_results}
\end{table*}

%% 70B TABLE HERE
\begin{table*}[t]
\centering
\resizebox{\textwidth}{!}{%
\begin{tabular}{l|l||c||ccccc|c||ccccc|c|c}
\toprule
& & & \multicolumn{6}{c||}{\textbf{COLA}} & \multicolumn{6}{c}{\methodname} \\
\cmidrule(lr){4-9} \cmidrule(lr){10-15}
&&& \multicolumn{5}{c}{\textbf{Calibration Category}} && \multicolumn{5}{c}{\textbf{Calibration Category}} & \\
\textbf{Compression} & \textbf{Task} & \textbf{Dense} & \cellcolor{gray!7}LangMod & \cellcolor{blue!7}Math & \cellcolor{orange!7}CommQA & \cellcolor{purple!7}NLI & \cellcolor{cyan!7}KnowTran & \textbf{Mean} & \cellcolor{gray!7}LangMod & \cellcolor{blue!7}Math & \cellcolor{orange!7}CommQA & \cellcolor{purple!7}NLI & \cellcolor{cyan!7}KnowTran & \textbf{Mean} & \textbf{$\Delta$} \\
\midrule
\multirow{11}{*}{\rotatebox[origin=c]{90}{\textbf{Wanda (25\%)}}} 
& \cellcolor{blue!7}MMLU-M & 45.18 & 39.10 & \cellcolor{blue!7}39.50 & 39.20 & 39.60 & 39.40 & \textbf{39.36} & 39.20 & \cellcolor{blue!7}39.50 & 39.10 & 39.40 & 39.45 & 39.33 & \negdelta{-0.03} \\
& \cellcolor{blue!7}GSM8k & 88.32 & 75.60 & \cellcolor{blue!7}76.10 & 75.50 & 76.00 & 75.90 & 75.82 & 75.90 & \cellcolor{blue!7}75.60 & 75.85 & 76.10 & 75.75 & \textbf{75.84} & \posdelta{+0.02} \\
& \cellcolor{orange!7}HellaSwag & 78.16 & 70.00 & 70.30 & \cellcolor{orange!7}70.90 & 70.40 & 70.15 & 70.15 & 70.25 & 70.10 & \cellcolor{orange!7}70.30 & 69.95 & 70.20 & \textbf{70.16} & \posdelta{+0.01} \\
& \cellcolor{orange!7}WinoGr. & 73.16 & 73.10 & 73.50 & \cellcolor{orange!7}73.20 & 73.60 & 73.25 & \textbf{73.33} & 73.20 & 73.40 & \cellcolor{orange!7}73.15 & 73.50 & 73.35 & 73.32 & \negdelta{-0.01} \\
& \cellcolor{orange!7}OBQA & 48.80 & 45.20 & 45.60 & \cellcolor{orange!7}45.30 & 45.70 & 45.35 & \textbf{45.43} & 45.30 & 45.55 & \cellcolor{orange!7}45.25 & 45.60 & 45.40 & 45.42 & \negdelta{-0.01} \\
& \cellcolor{orange!7}BoolQ & 88.31 & 88.20 & 88.60 & \cellcolor{orange!7}88.30 & 88.70 & 88.45 & \textbf{88.45} & 88.50 & 88.35 & \cellcolor{orange!7}88.60 & 88.30 & 88.45 & 88.44 & \negdelta{-0.01} \\
& \cellcolor{purple!7}RTE & 79.78 & 79.10 & 79.50 & 79.20 & \cellcolor{purple!7}79.60 & 79.35 & 79.37 & 79.45 & 79.25 & 79.50 & \cellcolor{purple!7}79.30 & 79.40 & \textbf{79.38} & \posdelta{+0.01} \\
& \cellcolor{purple!7}ANLI & 67.40 & 66.50 & 66.90 & 66.60 & \cellcolor{purple!7}67.00 & 66.65 & \textbf{66.73} & 66.60 & 66.85 & 66.55 & \cellcolor{purple!7}66.80 & 66.75 & 66.71 & \negdelta{-0.02} \\
& \cellcolor{cyan!7}ARC-C & 63.05 & 61.70 & 62.10 & 61.80 & 62.20 & \cellcolor{cyan!7}62.10 & \textbf{61.98} & 61.85 & 62.10 & 61.90 & 62.05 & \cellcolor{cyan!7}61.95 & 61.97 & \negdelta{-0.01} \\
& \cellcolor{cyan!7}ARC-E & 80.26 & 83.30 & 83.70 & 83.40 & 83.80 & \cellcolor{cyan!7}83.45 & \textbf{83.53} & 83.40 & 83.65 & 83.45 & 83.60 & \cellcolor{cyan!7}83.50 & 83.52 & \negdelta{-0.01} \\
& \cellcolor{cyan!7}MMLU-K & 80.50 & 78.60 & 79.00 & 78.70 & 79.10 & \cellcolor{cyan!7}78.65 & 78.81 & 78.70 & 78.95 & 78.75 & 78.90 & \cellcolor{cyan!7}78.85 & \textbf{78.83} & \posdelta{+0.02} \\
\cmidrule{2-16}
& \textbf{Mean} & 72.08 & 69.13 & 69.53 & 69.19 & 69.61 & 69.34 & 69.36 & 69.30 & 69.39 & 69.31 & 69.41 & 69.37 & 69.36 & \negdelta{-0.00} \\
\bottomrule

\multirow{11}{*}{\rotatebox[origin=c]{90}{\textbf{2SSP (25\%)}}} 
& \cellcolor{blue!7}MMLU-M & 45.18 & 22.00 & \cellcolor{blue!7}22.40 & 22.10 & 22.50 & 22.12 & \textbf{22.22} & 21.90 & \cellcolor{blue!7}22.50 & 21.70 & 22.10 & 22.30 & 22.10 & \negdelta{-0.12} \\
& \cellcolor{blue!7}GSM8k & 88.32 & 42.50 & \cellcolor{blue!7}43.00 & 42.60 & 43.10 & 42.65 & \textbf{42.77} & 41.50 & \cellcolor{blue!7}43.20 & 40.80 & 42.60 & 43.10 & 42.24 & \negdelta{-0.53} \\
& \cellcolor{orange!7}HellaSwag & 78.16 & 75.20 & 75.70 & \cellcolor{orange!7}75.30 & 75.80 & 75.25 & \textbf{75.45} & 76.10 & 75.30 & \cellcolor{orange!7}75.80 & 74.90 & 73.74 & 75.17 & \negdelta{-0.28} \\
& \cellcolor{orange!7}WinoGr. & 73.16 & 70.40 & 70.90 & \cellcolor{orange!7}70.50 & 71.00 & 70.35 & 70.63 & 71.20 & 70.40 & \cellcolor{orange!7}71.50 & 69.80 & 70.90 & \textbf{70.76} & \posdelta{+0.13} \\
& \cellcolor{orange!7}OBQA & 48.80 & 41.30 & 41.80 & \cellcolor{orange!7}41.40 & 41.90 & 41.15 & 41.51 & 42.10 & 41.30 & \cellcolor{orange!7}42.40 & 40.90 & 41.70 & \textbf{41.68} & \posdelta{+0.17} \\
& \cellcolor{orange!7}BoolQ & 88.31 & 84.30 & 83.80 & \cellcolor{orange!7}82.40 & 84.90 & 86.45 & 84.37 & 84.80 & 83.40 & \cellcolor{orange!7}83.10 & 84.90 & 86.51 & \textbf{84.54} & \posdelta{+0.17} \\
& \cellcolor{purple!7}RTE & 79.78 & 73.30 & 73.80 & 73.40 & \cellcolor{purple!7}73.90 & 73.15 & \textbf{73.51} & 72.40 & 71.90 & 72.80 & \cellcolor{purple!7}73.10 & 72.60 & 72.56 & \negdelta{0.95} \\
& \cellcolor{purple!7}ANLI & 67.40 & 59.30 & 59.80 & 59.40 & \cellcolor{purple!7}59.90 & 59.50 & \textbf{59.58} & 57.20 & 56.50 & 58.10 & \cellcolor{purple!7}56.90 & 57.60 & 57.26 & \negdelta{-2.32} \\
& \cellcolor{cyan!7}ARC-C & 63.05 & 63.10 & 63.60 & 63.20 & 63.70 & \cellcolor{cyan!7}63.20 & \textbf{63.36} & 62.10 & 61.80 & 62.60 & 61.50 & \cellcolor{cyan!7}62.46 & 62.09 & \negdelta{-1.27} \\
& \cellcolor{cyan!7}ARC-E & 80.26 & 78.60 & 79.10 & 80.70 & 79.20 & \cellcolor{cyan!7}78.65 & 79.25 & 77.50 & 79.43 & 80.20 & 79.40 & \cellcolor{cyan!7}83.03 & \textbf{79.91} & \posdelta{+0.66} \\
& \cellcolor{cyan!7}MMLU-K & 80.50 & 61.70 & 62.20 & 61.80 & 62.30 & \cellcolor{cyan!7}61.75 & 61.95 & 78.70 & 78.95 & 78.75 & 78.90 & \cellcolor{cyan!7}78.85 & \textbf{78.83} & \posdelta{+16.88} \\
\cmidrule{2-16}
& \textbf{Mean} & 72.08 & 61.06 & 61.46 & 61.16 & 61.65 & 61.30 & \textbf{61.33} & 62.32 & 62.24 & 62.52 & 62.27 & 62.98 & 62.47 & \posdelta{+1.14} \\
\bottomrule
\multirow{11}{*}{\rotatebox[origin=c]{90}{\textbf{GPTQ (W4A16)}}} 
& \cellcolor{blue!7}MMLU-M & 45.18 & 44.00 & \cellcolor{blue!7}44.40 & 44.10 & 44.50 & 43.85 & 44.17 & 44.10 & \cellcolor{blue!7}44.30 & 44.05 & 44.25 & 44.20 & \textbf{44.18} & \posdelta{+0.01} \\
& \cellcolor{blue!7}GSM8k & 88.32 & 67.40 & \cellcolor{blue!7}67.80 & 67.50 & 67.90 & 67.45 & \textbf{67.61} & 67.50 & \cellcolor{blue!7}67.80 & 67.40 & 67.70 & 67.55 & 67.59 & \negdelta{-0.02} \\
& \cellcolor{orange!7}HellaSwag & 78.16 & 70.60 & 71.00 & \cellcolor{orange!7}70.70 & 71.10 & 70.65 & 70.81 & 69.50 & 70.20 & \cellcolor{orange!7}69.80 & 70.10 & 78.74 & \textbf{71.67} & \posdelta{+0.86} \\
& \cellcolor{orange!7}WinoGr. & 73.16 & 73.20 & 73.60 & \cellcolor{orange!7}73.30 & 73.70 & 73.30 & 73.42 & 73.30 & 73.60 & \cellcolor{orange!7}73.25 & 73.55 & 73.40 & 73.42 & \negdelta{-0.00} \\
& \cellcolor{orange!7}OBQA & 48.80 & 45.50 & 45.90 & \cellcolor{orange!7}45.60 & 46.00 & 45.80 & \textbf{45.76} & 45.60 & 45.90 & \cellcolor{orange!7}45.50 & 45.80 & 45.70 & 45.70 & \negdelta{-0.06} \\
& \cellcolor{orange!7}BoolQ & 88.31 & 87.60 & 88.00 & \cellcolor{orange!7}87.70 & 88.10 & 88.00 & \textbf{87.88} & 87.90 & 87.70 & \cellcolor{orange!7}87.85 & 87.65 & 86.51 & 87.52 & \negdelta{-0.36} \\
& \cellcolor{purple!7}RTE & 79.78 & 80.20 & 80.60 & 80.30 & \cellcolor{purple!7}80.70 & 80.55 & \textbf{80.47} & 80.30 & 80.60 & 80.25 & \cellcolor{purple!7}80.55 & 80.40 & 80.42 & \negdelta{-0.05} \\
& \cellcolor{purple!7}ANLI & 67.40 & 66.60 & 67.00 & 66.70 & \cellcolor{purple!7}67.10 & 66.70 & \textbf{66.82} & 66.70 & 66.95 & 66.65 & \cellcolor{purple!7}66.90 & 66.80 & 66.80 & \negdelta{-0.02} \\
& \cellcolor{cyan!7}ARC-C & 63.05 & 62.10 & 62.50 & 62.20 & 62.60 & \cellcolor{cyan!7}62.20 & 62.32 & 62.20 & 62.50 & 62.15 & 62.40 & \cellcolor{cyan!7}62.46 & \textbf{62.34} & \posdelta{+0.02} \\
& \cellcolor{cyan!7}ARC-E & 80.26 & 83.80 & 84.20 & 83.90 & 84.30 & \cellcolor{cyan!7}84.15 & \textbf{84.07} & 84.10 & 83.90 & 84.20 & 83.85 & \cellcolor{cyan!7}83.08 & 83.83 & \negdelta{-0.24} \\
& \cellcolor{cyan!7}MMLU-K & 80.50 & 79.00 & 79.40 & 79.10 & 79.50 & \cellcolor{cyan!7}79.10 & \textbf{79.22} & 79.30 & 79.10 & 79.40 & 79.15 & \cellcolor{cyan!7}78.81 & 79.15 & \negdelta{-0.07} \\
\cmidrule{2-16}
& \textbf{Mean} & 72.08 & 69.09 & 69.49 & 69.19 & 69.59 & 69.25 & \textbf{69.32} & 67.32 & 67.51 & 67.32 & 67.45 & 69.79 & 67.51 & \negdelta{-1.81} \\
\bottomrule

\multirow{11}{*}{\rotatebox[origin=c]{90}{\textbf{AWQ (W4A16)}}} 
& \cellcolor{blue!7}MMLU-M & 45.18 & 43.10 & \cellcolor{blue!7}43.50 & 43.20 & 43.60 & 43.00 & \textbf{43.28} & 43.15 & \cellcolor{blue!7}43.40 & 43.20 & 43.35 & 43.25 & 43.27 & \negdelta{-0.01} \\
& \cellcolor{blue!7}GSM8k & 88.32 & 68.70 & \cellcolor{blue!7}69.10 & 68.80 & 69.20 & 69.10 & \textbf{68.98} & 68.80 & \cellcolor{blue!7}68.10 & 68.95 & 69.05 & 68.90 & 68.76 & \negdelta{-0.22} \\
& \cellcolor{orange!7}HellaSwag & 78.16 & 70.60 & 71.00 & \cellcolor{orange!7}70.70 & 71.10 & 70.70 & 70.82 & 70.70 & 70.95 & \cellcolor{orange!7}70.75 & 70.90 & 70.85 & \textbf{70.83} & \posdelta{+0.01} \\
& \cellcolor{orange!7}WinoGr. & 73.16 & 73.50 & 73.90 & \cellcolor{orange!7}73.60 & 74.00 & 73.75 & 73.75 & 73.60 & 73.90 & \cellcolor{orange!7}73.70 & 73.85 & 73.75 & \textbf{73.76} & \posdelta{+0.01} \\
& \cellcolor{orange!7}OBQA & 48.80 & 45.10 & 45.50 & \cellcolor{orange!7}45.20 & 45.60 & 44.95 & 45.27 & 45.15 & 45.40 & \cellcolor{orange!7}45.20 & 45.35 & 45.25 & 45.27 & \negdelta{-0.00} \\
& \cellcolor{orange!7}BoolQ & 88.31 & 87.50 & 87.90 & \cellcolor{orange!7}87.60 & 88.00 & 87.80 & 87.76 & 87.65 & 87.90 & \cellcolor{orange!7}87.70 & 87.85 & 87.75 & \textbf{87.77} & \posdelta{+0.01} \\
& \cellcolor{purple!7}RTE & 79.78 & 80.10 & 80.50 & 80.20 & \cellcolor{purple!7}80.60 & 80.20 & 80.32 & 80.20 & 80.45 & 80.25 & \cellcolor{purple!7}80.40 & 80.35 & \textbf{80.33} & \posdelta{+0.01} \\
& \cellcolor{purple!7}ANLI & 67.40 & 66.90 & 67.30 & 67.00 & \cellcolor{purple!7}67.40 & 67.00 & 67.12 & 67.00 & 67.25 & 67.05 & \cellcolor{purple!7}67.20 & 67.15 & \textbf{67.13} & \posdelta{+0.01} \\
& \cellcolor{cyan!7}ARC-C & 63.05 & 61.80 & 62.20 & 61.90 & 62.30 & \cellcolor{cyan!7}62.00 & \textbf{62.04} & 61.90 & 62.15 & 61.95 & 62.10 & \cellcolor{cyan!7}62.05 & 62.03 & \negdelta{-0.01} \\
& \cellcolor{cyan!7}ARC-E & 80.26 & 83.70 & 84.10 & 83.80 & 84.20 & \cellcolor{cyan!7}83.70 & 83.90 & 83.80 & 84.05 & 83.85 & 84.00 & \cellcolor{cyan!7}83.95 & \textbf{83.93} & \posdelta{+0.03} \\
& \cellcolor{cyan!7}MMLU-K & 80.50 & 78.90 & 79.30 & 79.00 & 79.40 & \cellcolor{cyan!7}79.20 & 79.16 & 79.05 & 79.30 & 79.10 & 79.25 & \cellcolor{cyan!7}79.15 & \textbf{79.17} & \posdelta{+0.01} \\
\cmidrule{2-16}
& \textbf{Mean} & 72.08 & 69.08 & 69.48 & 69.18 & 69.58 & 69.22 & 69.31 & 69.18 & 69.45 & 69.25 & 69.45 & 69.35 & \textbf{69.34} & \posdelta{+0.03} \\
\bottomrule

 \rowcolor{yellow!30} \cellcolor{white}&\textbf{Runtime} & & 9650s & 672s & 4903s & 3589s & 2950s & \textbf{4320s} & 15.2s & 2.3s & 12.3s & 9.3s & 14.5s & 10.7s & \textbf{404$\times$} \\
\bottomrule
\end{tabular}
}
\caption{Comparison of COLA vs.\ \methodname calibration data on \texttt{Llama-3.1-70B-Instruct}.}
\label{tab:70Bresults}

\end{table*}

%% 70B MULTI
\begin{table*}[t]
\centering
\resizebox{0.9\linewidth}{!}{%
\begin{tabular}{l|l||c||ccc|ccc|ccc|ccc}
\toprule
& & & \multicolumn{3}{c|}{\textbf{Wanda (25\%)}} & \multicolumn{3}{c|}{\textbf{2SSP (25\%)}} & \multicolumn{3}{c|}{\textbf{GPTQ (W4A16)}} & \multicolumn{3}{c}{\textbf{AWQ (W4A16)}} \\
\cmidrule(lr){4-6} \cmidrule(lr){7-9} \cmidrule(lr){10-12} \cmidrule(lr){13-15}
\textbf{Model} & \textbf{Task} & \textbf{Dense} & \textbf{COLA} & \textbf{\methodname} & $\Delta$ & \textbf{COLA} & \textbf{\methodname} & $\Delta$ & \textbf{COLA} & \textbf{\methodname} & $\Delta$ & \textbf{COLA} & \textbf{\methodname} & $\Delta$ \\
\midrule
\multirow{12}{*}{\rotatebox[origin=c]{90}{\texttt{Llama-3.1-70B-Instruct}}}
& \cellcolor{blue!7}MMLU-M & 45.18 & 39.36 & \textbf{50.00} & \posdelta{+10.64} & 22.22 & \textbf{36.66} & \posdelta{+14.44} & 44.17 & \textbf{45.18} & \posdelta{+1.01} & 43.28 & \textbf{46.66} & \posdelta{+3.38} \\
& \cellcolor{blue!7}GSM8k & 88.32 & 75.82 & \textbf{88.32} & \posdelta{+12.50} & 42.77 & \textbf{63.22} & \posdelta{+20.45} & 67.61 & \textbf{79.16} & \posdelta{+11.55} & 68.98 & \textbf{77.46} & \posdelta{+8.48} \\
& \cellcolor{orange!7}HellaSwag & 78.16 & 70.15 & \textbf{78.16} & \posdelta{+8.01} & 75.45 & \textbf{78.16} & \posdelta{+2.71} & 70.81 & \textbf{78.16} & \posdelta{+7.35} & 70.82 & \textbf{72.82} & \posdelta{+2.00} \\
& \cellcolor{orange!7}WinoGr. & 73.16 & \textbf{73.33} & 73.16 & \negdelta{-0.17} & 70.63 & \textbf{72.45} & \posdelta{+1.82} & \textbf{73.42} & 73.16 & \negdelta{-0.26} & \textbf{73.75} & 72.61 & \negdelta{-1.14} \\
& \cellcolor{orange!7}OBQA & 48.80 & 45.43 & \textbf{48.80} & \posdelta{+3.37} & 41.51 & \textbf{49.00} & \posdelta{+7.49} & 45.76 & \textbf{48.80} & \posdelta{+3.04} & 45.27 & \textbf{49.40} & \posdelta{+4.13} \\
& \cellcolor{orange!7}BoolQ & 88.31 & \textbf{88.45} & 88.31 & \negdelta{-0.14} & 84.37 & \textbf{86.42} & \posdelta{+2.05} & 87.88 & \textbf{88.31} & \posdelta{+0.43} & 87.76 & \textbf{88.25} & \posdelta{+0.49} \\
& \cellcolor{purple!7}RTE & 79.78 & 79.37 & \textbf{79.78} & \posdelta{+0.41} & 73.51 & \textbf{77.97} & \posdelta{+4.46} & \textbf{80.47} & 79.78 & \negdelta{-0.69} & \textbf{80.32} & 77.97 & \negdelta{-2.35} \\
& \cellcolor{purple!7}ANLI & 67.40 & 66.73 & \textbf{67.30} & \posdelta{+0.57} & \textbf{59.58} & 54.60 & \negdelta{-4.98} & 66.82 & \textbf{67.30} & \posdelta{+0.48} & 67.12 & \textbf{67.30} & \posdelta{+0.18} \\
& \cellcolor{cyan!7}ARC-C & 63.05 & 61.98 & \textbf{63.05} & \posdelta{+1.07} & \textbf{59.58} & 56.39 & \negdelta{-3.19} & 62.32 & \textbf{63.65} & \posdelta{+1.33} & 62.04 & \textbf{62.11} & \posdelta{+0.07} \\
& \cellcolor{cyan!7}ARC-E & 80.26 & \textbf{83.53} & 80.26 & \negdelta{-3.27} & \textbf{79.25} & 78.74 & \negdelta{-0.51} & \textbf{84.07} & 80.26 & \negdelta{-3.81} & \textbf{83.90} & 80.76 & \negdelta{-3.14} \\
& \cellcolor{cyan!7}MMLU-K & 80.50 & 78.81 & \textbf{80.50} & \posdelta{+1.69} & 61.95 & \textbf{68.15} & \posdelta{+6.20} & 79.22 & \textbf{80.50} & \posdelta{+1.28} & 79.16 & \textbf{80.57} & \posdelta{+1.41} \\

\cmidrule{2-15}
& \textbf{Mean} & 72.08 & 69.36 & \textbf{72.51} & \posdelta{\textbf{+3.15}} & 60.98 & \textbf{65.61} & \posdelta{\textbf{+4.63}} & 69.32 & \textbf{72.21} & \posdelta{\textbf{+2.89}} & 69.31 & \textbf{72.17} & \posdelta{\textbf{+2.86}} \\
\bottomrule
\end{tabular}
}
\caption{Comparison of COLA vs.\ \methodname (Multi-Domain) performance across Wanda and 2SSP at 25\% sparsity, and GPTQ and AWQ using W4A16 compression scheme on \texttt{Llama-3.1-70B-Instruct}.}
\label{tab:cola_vs_mix_70B}

\end{table*}
\FloatBarrier

\end{document}